\documentclass[12pt]{article}
\usepackage{graphicx}
\usepackage{amsfonts} % $\mathbb $ and $\frak $
\usepackage{amssymb}  % $\square $ and $\blacksquare$
\usepackage{mathrsfs} % \mathscr{A}
\usepackage{amsmath}  % \xrightarrow[below]{above}
\usepackage{verbatim} % for \begin{comment}
\usepackage{tcolorbox}
\usepackage{indentfirst}
\usepackage{amsmath}
\usepackage{cases}
\usepackage{diagbox}
\usepackage{multirow}
\usepackage{collcell}
\usepackage{makecell}
\usepackage{subfigure}
\def\bfx{{\bm{x}}}
\def\bfy{{\bm{y}}}
\def\bfz{{\bm{z}}}

\usepackage{color}
\usepackage{amsmath}
\usepackage{enumerate}
\usepackage{bm}
\usepackage{amsfonts}
\usepackage{amssymb}

\usepackage[colorlinks, linkcolor=blue, anchorcolor=blue, citecolor=blue]{hyperref}

\def\br{\mathbb{R}}

\def\bfx{{\bm{x}}}
\def\bfy{{\bm{y}}}
\def\bfz{{\bm{z}}}

\newcommand{\prf}{\par{\bf Proof. }}

\usepackage{setspace} 
\newtheorem{thm}{Theorem}[section]
\newtheorem{lem}[thm]{Lemma}
\newtheorem{coro}[thm]{Corollary}

\newtheorem{de}{Definition}[section]
\newtheorem{remark}{Remark}[section]
\usepackage[title]{appendix}

\usepackage{authblk}
\title{On the Expressive Power of Transformers for Maxout Networks and Continuous Piecewise Linear Functions}
\author[1]{Linyan Gu}
\author[1]{Lihua Yang}
\author[2]{Feng Zhou\thanks{Corresponding author: fengzhou@gdufe.edu.cn}}

\affil[1]{School of Mathematics, Sun Yat-sen University}
\affil[2]{School of Information Science, Guangdong University of Finance and Economics}

\begin{document}

\maketitle
\begin{abstract}
Transformer networks have achieved remarkable empirical success across a wide
range of applications, yet their theoretical expressive power remains
insufficiently understood.
In this paper, we study the expressive capabilities of
Transformer architectures.
We first establish an explicit approximation of maxout networks by Transformer networks while preserving comparable model complexity.
As a consequence, Transformers inherit the universal approximation capability of
ReLU networks under similar complexity constraints.
Building on this connection, we develop a framework to analyze the approximation of continuous piecewise linear functions by Transformers and quantitatively characterize their expressivity via the number of linear regions, which grows exponentially with depth.
Our analysis establishes a theoretical bridge between approximation
theory for standard feedforward neural networks and Transformer architectures.
It also yields structural insights into Transformers: self-attention layers
implement max-type operations, while feedforward layers realize token-wise
affine transformations.
\end{abstract}

\section{Introduction}\label{sec: intro}
Transformer networks have become a cornerstone of modern sequence modeling since their introduction by Vaswani et al.~\cite{vaswani2017attention}. They have driven major advances across a wide range of natural language processing (NLP) tasks, including machine translation~\cite{vaswani2017attention,brown2020language}, language modeling~\cite{radford2018improving,radford2019language}, and question answering~\cite{devlin2019bert,brown2020language}. 
Beyond NLP, Transformers have also been successfully applied to a wide range of domains, such as computer vision~\cite{dosovitskiy2020image}, signal and speech processing~\cite{kim2022squeezeformer}, and healthcare applications~\cite{nerella2024transformers}.
Despite these empirical successes, the theoretical understanding of Transformer architectures remains limited, and many fundamental questions about their expressive power are still open.

A Transformer block consists of two main components: a self-attention layer and a token-wise feedforward layer. 
Both layers reuse the same parameters across tokens, and interactions between tokens are captured exclusively through pairwise dot products in the self-attention mechanism.
This parameter sharing makes it feasible to train large-scale models efficiently, in contrast to the sequential computation required by RNNs and LSTMs.
At the same time, parameter sharing and the restricted form of token interactions pose significant challenges for theoretical analysis~\cite{Yun2020Are,kajitsuka2023transformers,jiao2025transformers}.
To treat this challenge, \cite{Yun2020Are} introduced the concept of
``contextual mapping'', which aggregates dependencies among all tokens into
token-based quantities through the self-attention mechanism.

Our study is motivated by the intrinsic connection between the self-attention
mechanism and the max operation
$
(x_1,\dots,x_k)\ \mapsto\ \max\{x_1,\dots,x_k\},
$
which inspires us to investigate the approximation of maxout networks by
Transformer architectures.
A maxout network is a feedforward neural network equipped with max activation
functions, originally proposed in~\cite{goodfellow2013maxout} as a generalization
of networks with rectified linear units (ReLUs), and is known to exactly represent
continuous piecewise linear (CPWL) functions \cite{montufar2014number,MontufarSiam2022}.

By establishing a systematic approximation of maxout networks using Transformer
models, we obtain a principled framework for analyzing the ability of
Transformers to approximate CPWL functions.
Moreover, since the number of linear regions is a fundamental measure of
expressivity for CPWL models, this framework enables a quantitative characterization of
Transformer expressivity in terms of linear region complexity.
From this perspective, our results offer a new viewpoint on the expressive power
of Transformer architectures, and align with recent advances in approximation
theory that characterize neural network expressivity through the growth of the
number of linear regions
\cite{telgarsky2016benefits,arora2016understanding,hinz2019framework,serra2018bounding,xiong2020number,
MontufarSiam2022,chen2022improved,xiong2024number,montufar2014number,pascanu2013number}.

In addition, our proof techniques clarify the distinct roles played by the
self-attention and feedforward layers in Transformer architectures.
Specifically, the self-attention layer is responsible for implementing
max-type operations, while the feedforward layer realizes token-wise affine
transformations.
To mitigate the limitations induced by parameter sharing across tokens in the
feedforward layers, rather than relying on the notion of contextual mapping,
we introduce a token-wise shift that is applied repeatedly along the depth of
the Transformer.
This mechanism enhances the design flexibility and expressive capacity of
token-wise feedforward networks.

Our contributions are summarized as follows.
\begin{itemize}
\item We provide an explicit construction of Transformer networks that approximate both shallow and deep maxout networks in the $L^\infty$ norm while preserving comparable model complexity. 
Since maxout networks strictly generalize ReLU networks, this result further
implies that Transformers possess the universal approximation capability for
ReLU networks under similar complexity constraints.
These findings establish a theoretical bridge between the approximation theory
of standard feedforward neural networks and Transformer architectures.

\item Building on this connection, we develop a principled framework for
analyzing the approximation of continuous piecewise linear (CPWL) functions by
Transformers.
Within this framework, we obtain a quantitative characterization of Transformer
expressivity in terms of the number of linear regions, and show that this number
grows exponentially with network depth.

\item We provide structural insights into Transformer architectures.
To mitigate the limitations induced by parameter sharing in the feedforward
layers, we introduce a token-wise shift applied repeatedly along the depth of
the Transformer, rather than relying on the notion of contextual mapping.
This mechanism enhances the design flexibility and expressive capacity of
token-wise feedforward networks.
Moreover, we show that the two core components of Transformers play distinct
roles: self-attention layers implement max-type operations, while feedforward
layers realize token-wise affine transformations.
\end{itemize}

\subsection{Related Works}

\paragraph{Expressive capacity of feedforward neural networks.}
The study of expressive capacity of feedforward neural networks (FNNs) dates back
to the classical universal approximation theorems established around the 1990s~\cite{cybenko1989approximation,hornik1991approximation,funahashi1989approximate}.
In recent years, there has been substantial progress in the theoretical analysis
of approximation properties of FNNs.
A central theme of this line of work is to understand how architectural features, such as network width and
depth~\cite{yarotsky2017error,shen2020deep,shen2022optimal,lu2021deep}, as well as the choice of activation functions~\cite{siegel2020approximation,yang2025optimal,yarotsky2021elementary}, affect expressive power.
A wide range of function spaces has been considered, including H\"older spaces~\cite{shen2020deep,shen2022optimal},
$C^s$ spaces~\cite{lu2021deep}, Sobolev spaces~\cite{yarotsky2017error}, Barron spaces~\cite{klusowski2018approximation}, among others, revealing
the broad approximation capabilities of FNNs.

For FNNs equipped with piecewise linear activation functions, such as ReLU and
maxout, an influential line of research characterizes expressivity through the
number of linear regions of the represented functions.
This perspective was initiated in~\cite{montufar2014number,pascanu2013number}, showing that deep
networks can represent functions with exponentially more linear regions than
shallow networks with a comparable number of units or parameters.
Subsequent works have further refined these results and extended them to various
architectures; see, for example,
\cite{telgarsky2016benefits,arora2016understanding,hinz2019framework,serra2018bounding,xiong2020number,
MontufarSiam2022,chen2022improved,xiong2024number}.
In this work, we establish a systematic approximation of maxout networks using
Transformer architectures, providing a bridge between the approximation theory
of maxout (and ReLU) networks and Transformer models.

\paragraph{Expressive capacity of Transformers.}
Compared with standard feedforward neural networks (FNNs), the theoretical
analysis of Transformer architectures is considerably more challenging due to
parameter sharing and the restricted form of token interactions.
Transformers must account for global contextual dependencies through
self-attention.
Yun et al.~\cite{Yun2020Are} proposed to aggregate these dependencies into a
token-wise quantity, referred to as a ``contextual mapping'', which is produced by self-attention and subsequently mapped to the desired
output via a feedforward layer.
This approach led to the first universal approximation theorem for Transformer
models.
Their results were subsequently extended to sparse-attention Transformers in
\cite{yun2020n,zaheer2020big}.
More recently, Kajitsuka et al.~\cite{kajitsuka2023transformers} showed that even
single-layer Transformers can achieve universal approximation by sufficiently
increasing the network width.

In contrast to these works, our approach does not rely on the notion of contextual mapping.
Instead, to mitigate the limitations of parameter sharing in the feedforward layers,
we introduce a token-wise shift applied repeatedly along the Transformer depth,
which enhances the design flexibility and expressive capacity of
token-wise feedforward networks.

Beyond universal approximation results, many intrinsic theoretical properties of
Transformer architectures have been investigated.
One important line of work focuses on establishing explicit approximation rates.
Takakura and Suzuki~\cite{takakura2023approximation} established approximation
rates for smooth infinite-length sequence-to-sequence mappings.
Jiang et al.~\cite{jiang2024approximation} introduced a novel complexity measure
and derived explicit Jackson-type approximation rates for Transformers.
Wang et al.~\cite{wang2024understanding} further quantified approximation rates,
highlighting the roles of network depth and the number of attention heads.
Approximation rates for H\"older continuous function classes were obtained
in~\cite{jiao2025transformers}.
Another line of research studies the theoretical properties of the self-attention
mechanism itself, including analyses of the rank structure of attention matrices
\cite{dong2021attention,bhojanapalli2020low}, and the Lipschitz continuity of self-attention
\cite{kim2021lipschitz,castin2023smooth}.

In addition, the expressive power of Transformers has been compared with that of
other neural architectures.
For instance, it has been shown that attention layers can emulate convolution
operations
\cite{cordonnier2019relationship,li2021can}, and that Transformers exhibit
distinct expressive behaviors compared with recurrent neural networks
\cite{wang2024understanding,jiang2024approximation}.

In this paper, we investigate the approximation of maxout networks by Transformer
architectures, motivated by the intrinsic connection between self-attention and
max operations, and further obtain the approximation of ReLU networks.
This perspective provides a principled framework for analyzing the ability of
Transformers to approximate continuous piecewise linear (CPWL) functions and
enables a quantitative characterization of Transformer expressivity in terms of
the number of linear regions.

\subsection{Organization of This Paper}
The remainder of this paper is organized as follows.
In Section~\ref{sec:pre}, we introduce the notation used throughout the paper and
review the Transformer architecture.
Section~\ref{sec: appro_maxout} presents the universal approximation of maxout networks by
Transformer architectures.
Based on this result, Section~\ref{sec:CPWL} further investigates the ability of
Transformers to approximate continuous piecewise linear functions.
Finally, Section~\ref{sec:conclusion} concludes the paper.
Detailed proofs are deferred to the appendix.

\section{Preliminaries}\label{sec:pre}
\subsection{Notations}\label{sec:nota}

We summarize the notations used throughout this paper as follows.
\begin{itemize}
\item Let $\mathbb{R}$ and $\mathbb{N}$ denote the sets of real numbers and natural
numbers, respectively.

\item Let $X=(\bm{x}_1,\ldots,\bm{x}_T)\in \mathbb{R}^{d\times T}$ denote a sequence of
length $T$, where each embedding $\bm{x}_t\in \mathbb{R}^d$.

\item For $n\in \mathbb{N}$, define $[n]:=\{1,\ldots,n\}$.

\item Let $\bm{1}_n$ (resp.\ $\bm{0}_n$) denote the length-$n$ all-ones (resp.\ all-zeros)
vector.

\item Let $\bm{1}_{n\times m}$ (resp.\ $\bm{0}_{n\times m}$) denote the $n\times m$
all-ones (resp.\ zero) matrix, and let $I_n$ denote the $n\times n$ identity
matrix.

\item When the dimensions are clear from the context, $\bm{1}$ and $\bm{0}$ denote
all-ones and zero vectors or matrices, respectively.

\item For a vector $\bm{x}\in\mathbb{R}^n$, let $(\bm{x})_i$ denote its $i$-th entry,
and let $(\bm{x})_{i:j}$ denote the subvector consisting of entries $i$ through
$j$, where $i,j\in[n]$ and $i<j$.

\item For a matrix $A\in\mathbb{R}^{n\times m}$, let $(A)_{i,j}$ denote its $(i,j)$-th
entry, and let $A_{i,:}$ and $A_{:,j}$ denote its $i$-th row and $j$-th column,
respectively, where $i\in[n]$ and $j\in[m]$.
Moreover, $(A)_{i:j,:}$ denotes the submatrix consisting of rows $i$ through $j$,
and $(A)_{:,i:j}$ denotes the submatrix consisting of columns $i$ through $j$,
for $i<j$.

\item For a vector $\bm{x}=(x_1,\ldots,x_n)^\top\in \mathbb{R}^n$ and $p\in[1,\infty)$,
the $\ell^p$ norm is defined by
\[
\|\bm{x}\|_p:=\Bigl(\sum_{i=1}^n |x_i|^p\Bigr)^{1/p},
\]
and the $\ell^\infty$ norm is defined by
\[
\|\bm{x}\|_\infty:=\max_{1\le i\le n}|x_i|.
\]

\item For $p\in[1,\infty)$, the induced $\ell^p$ matrix norm of
$A=(a_{ij})_{i\in[n],\,j\in[m]}$ is defined by
\[
\|A\|_p:=\sup_{\|\bfx\|_p=1}\|A\bfx\|_p.
\]
In particular,
\[
\|A\|_1:=\max_{j\in[m]}\sum_{i=1}^n |a_{ij}|, \qquad
\|A\|_\infty:=\max_{i\in[n]}\sum_{j=1}^m |a_{ij}|.
\]

\item We use $C(a,b,\dots)$ to denote a positive constant depending only on the
parameters $a,b,\dots$.

\item For $X=(\bm{x}_1,\ldots,\bm{x}_T)\in \mathbb{R}^{d\times T}$, define the
vectorization operator $\mathrm{Vec}:\mathbb{R}^{d\times T}\to \mathbb{R}^{dT}$ by
\[
\mathrm{Vec}(X):=(\bm{x}_1^\top,\ldots,\bm{x}_T^\top)^\top.
\]
Its inverse mapping $\mathrm{Vec}^{-1}_{d,T}:\mathbb{R}^{dT}\to\mathbb{R}^{d\times T}$
is defined by
\[
[\mathrm{Vec}^{-1}_{d,T}(\bm{x})]_{:,t}
=(x_{(t-1)d+1},\ldots,x_{td})^\top, \quad t\in[T],
\]
for $\bm{x}=(x_1,\ldots,x_{dT})^\top$.

\item For $\bm{x}=(x_1,\ldots,x_n)^\top\in \mathbb{R}^n$, define
$\max(\bm{x}):=\max_{1\le i\le n} x_i$.

\item The rectified linear unit (ReLU) is defined by
$\mathrm{ReLU}(x):=\max\{0,x\}$.

\item We consider two activation functions in the self-attention module: the
hardmax and the scaled softmax with scaling parameter $\lambda>0$, defined by
\[
[\sigma_H(\bm{x})]_i :=
\begin{cases}
|\arg\max_{j\in[T]} x_j|^{-1}, & i\in \arg\max_{j\in[T]} x_j,\\
0, & \text{otherwise},
\end{cases}
\]
and
\[
[\sigma_S^\lambda(\bm{x})]_i
:=\frac{e^{\lambda x_i}}{\sum_{j=1}^T e^{\lambda x_j}},
\quad i\in[T],
\]
for $\bm{x}=(x_1,\ldots,x_T)^\top\in\mathbb{R}^T$.

\item Let $\mathcal{N}_H$ and $\mathcal{N}_S^\lambda$ denote Transformer networks
equipped with the hardmax activation $\sigma_H$ and the scaled softmax activation
$\sigma_S^\lambda$, respectively.
When appearing as a pair, the two networks are identical except for the choice of
activation function.

\item For a sequence-to-sequence function $f:\Omega\to\mathbb{R}^{m\times T}$ with
$\Omega\subset\mathbb{R}^{n\times T}$, let $(f)_t:\Omega\to\mathbb{R}^m$ denote its
$t$-th coordinate function, defined by
\[
(f)_t(X):=[f(X)]_{:,t}, \quad t\in[T].
\]

\item For a sequence-to-sequence or vector-valued function $f$ defined on
$\Omega\subset\mathbb{R}^{n\times T}$ or $\Omega\subset\mathbb{R}^n$, the
$L^\infty$ norm is defined by
\[
\|f\|_{L^\infty(\Omega)}
:=\operatorname*{ess\,sup}_{X\in\Omega}\|f(X)\|_\infty.
\]

\item A function $f$ is called convex if $\Omega$ is convex and
\[
f(\theta x+(1-\theta)y)
\le \theta f(x)+(1-\theta)f(y),
\quad \forall\, x,y\in\Omega,\ \theta\in[0,1].
\]

\item If $f$ is Lipschitz continuous on $\Omega$, its Lipschitz constant is defined
by
\[
\mathrm{Lip}(f;\Omega)
:=\sup_{\substack{x,y\in\Omega\\x\neq y}}
\frac{\|f(x)-f(y)\|_\infty}{\|x-y\|_\infty}.
\]
\end{itemize}

\subsection{Transformer Networks}

Transformers are widely used for sequence modeling, aiming to learn mappings
between input and output sequences.
A Transformer network comprises three main components: positional embedding,
stacked Transformer blocks, and a linear readout layer.

\paragraph{Transformer Block.}
Each Transformer block consists of a self-attention layer followed by a
token-wise feedforward layer.
We consider an input space $\mathcal{X} \subset \mathbb{R}^{d \times T}$ consisting
of sequences of $T$ tokens, each represented as a $d$-dimensional embedding.
Given input $X \in \mathcal{X}$, the output of a single block is computed as
\begin{equation}\label{eq:transformer_block}
\begin{aligned}
\text{Attn}(X) &= X + \sum_{h=1}^H W_O^h W_V^h X \, \sigma\left[(W_K^h X)^\top W_Q^h X\right], \\
\text{FF}(\text{Attn}(X)) &= \text{Attn}(X)  + W_2 \, \mathrm{ReLU}(W_1 \text{Attn}(X) + \bm{b}_1 \bm{1}_T^\top) + \bm{b}_2 \bm{1}_T^\top,
\end{aligned}
\end{equation}
where, for the $h$-th attention head, $W_K^h$, $W_Q^h$, $W_V^h \in \mathbb{R}^{k \times d}$ denote the key, query, and value projection matrices, respectively, and $W_O^h \in \mathbb{R}^{d \times k}$ is the output projection matrix. The feedforward layer parameters are $W_1 \in \mathbb{R}^{r \times d}$, $W_2 \in \mathbb{R}^{d \times r}$, with bias vectors $\bm{b}_1 \in \mathbb{R}^r$ and $\bm{b}_2 \in \mathbb{R}^d$.
The activation function $\sigma$ in the self-attention layer is applied column-wise,
yielding a column-stochastic attention matrix.
We consider two types of activation functions in the self-attention module: hardmax
$\sigma_H$ and scaled softmax $\sigma_S^\lambda$ with scaling parameter $\lambda>0$.
The ReLU activation in the feedforward layer is applied element-wise to each entry.

Here, $d$ is the embedding dimension, $H$ is the number of heads, $k$ is the head size,
and $r$ is the hidden layer size of the feedforward layer.
We denote the complexity of a Transformer block by $\bm{s} = (d, k, H, r)$, and let
$\mathcal{F}_{\text{tf}}^{\bm{s}} \subset \{ f: \mathcal{X} \to \mathbb{R}^{d \times T} \}$
denote the class of functions computed by a Transformer block of complexity $\bm{s}$.
Note that the parameters in both attention and feedforward modules are shared
across tokens, rendering the layer equivariant to permutations.
Token interactions are mediated solely through pairwise dot products in the attention mechanism.

\paragraph{Transformer Network.}
Let $E : \mathbb{R}^{n \times T} \to \mathbb{R}^{d \times T}$ denote the positional embedding, and
$C : \mathbb{R}^{d \times T} \to \mathbb{R}^{m \times T}$ the linear readout.
A Transformer network with $L$ blocks of complexity $\bm{s}$ is defined as
\[
\mathcal{TFN}_{n,m}(L, d, k, H, r) := \Bigl\{ \mathcal{T}: \mathbb{R}^{n \times T} \to \mathbb{R}^{m \times T} \;\Big|\;
\mathcal{T} = C \circ f^L \circ \cdots \circ f^1 \circ E,\, f^i \in \mathcal{F}_{\text{tf}}^{\bm{s}} \Bigr\}.
\]
When the self-attention activation is fixed to hardmax, the corresponding class is
denoted by $\mathcal{TFN}^\text{hard}_{n,m}(L, d, k, H, r)$.
Given a hardmax-based Transformer $\mathcal{N}_H$, the associated scaled softmax version
with parameter $\lambda>0$ is denoted by $\mathcal{N}_S^\lambda$, obtained by
replacing $\sigma_H$ with $\sigma_S^\lambda$.

\paragraph{Positional Embedding.}
Transformer layers are inherently permutation-equivariant~\cite{Yun2020Are}.
To break this symmetry, positional embeddings are commonly employed~\cite{vaswani2017attention,Yun2020Are}.
In this paper, we adopt a simple positional encoding: the input is linearly projected
and then shifted by token-specific offsets~\cite{jiang2024approximation},
formally modeled as $E(X) = AX + B$ with $A \in \mathbb{R}^{n \times d}$ and $B \in \mathbb{R}^{d \times T}$.
This ensures that each token is mapped to a distinct domain, eliminating permutation
equivariance and enhancing the design flexibility of token-wise feedforward networks.

\paragraph{Auxiliary Token.}
We employ an auxiliary token, a technique widely used in Transformer applications.
For example, the classification token ([CLS]) in BERT~\cite{devlin2019bert} serves this purpose.
Concretely, for a context of size $T$, the input is augmented to $T+1$ tokens by
concatenating a constant vector as the auxiliary token.
The output corresponding to this token is disregarded in subsequent processing.
For simplicity, we omit further elaboration, as its functional role is made
explicit in the proofs.

\section{Approximation for Maxout Networks by Transformers}\label{sec: appro_maxout}

A typical fully connected feedforward neural network (FNN) is a composition of
stacked layers, where each layer applies an affine transformation followed by a
nonlinear activation. Formally, an FNN with input dimension $n$, output dimension $m$, 
and $L$ hidden layers of widths $d_1, \dots, d_L$ computes a map
\begin{equation}\label{eq: FFNN}
T: \mathbb{R}^n \to \mathbb{R}^m := g \circ T^{(L)} \circ \cdots \circ T^{(1)}, 
\quad \text{with} \quad T^{(\ell)}(\bm{x}) := \sigma(W_\ell \bm{x} + \bm{b}_\ell),
\end{equation}
where $\sigma$ denotes the activation function, $W_\ell \in \mathbb{R}^{d_\ell \times d_{\ell-1}}$,
$\bm{b}_\ell \in \mathbb{R}^{d_\ell}$ for $\ell=1,\dots,L$ with $d_0 = n$, and 
$g(\bm{x}) = W \bm{x} + \bm{b}$ with $W \in \mathbb{R}^{m \times d_L}$, $\bm{b} \in \mathbb{R}^m$.

A maxout network~\cite{goodfellow2013maxout} is a type of FNN in which each neuron outputs the maximum
over a set of affine functions. Specifically, a rank-$p$ maxout layer with $n$ inputs
and $m$ outputs defines the function class
\begin{equation} \label{eq:Tmax}
\begin{aligned}
\mathcal{T}_{\text{max}}(n, p, m) := \Big\{ T\colon \mathbb{R}^n \to \mathbb{R}^m \,\Big|\,
& T(\bm{x}) = \big[\max(W^1 \bm{x} + \bm{b}^1), \dots, \max(W^m \bm{x} + \bm{b}^m)\big]^\top, \\
& W^i \in \mathbb{R}^{p \times n},\ \bm{b}^i \in \mathbb{R}^p,\ i \in [m] \Big\}.
\end{aligned}
\end{equation}

A single maxout layer is often referred to as a shallow maxout network.
A deep maxout network is constructed by stacking multiple such layers. 
Formally, a rank-$p$ deep maxout network with input dimension $d_0$ and $L$ layers of
widths $d_1, \dots, d_L$ computes
\begin{equation} \label{eq:maxNet}
T := T^{(L)} \circ \cdots \circ T^{(1)}, 
\quad \text{with } T^{(\ell)} \in \mathcal{T}_{\text{max}}(d_{\ell-1}, p, d_\ell),\ \ell \in [L].
\end{equation}

To accommodate sequence-to-sequence mappings, we adopt a convention of vectorizing
inputs and outputs. Specifically, a rank-$p$ maxout layer mapping from
$\mathbb{R}^{n_1 \times n_2}$ to $\mathbb{R}^{m_1 \times m_2}$ is defined by
\[
\begin{aligned}
\mathcal{T}_{\text{max}}(n_1 \times n_2, p, m_1 \times m_2) := \Big\{ T\colon \mathbb{R}^{n_1 \times n_2} \to \mathbb{R}^{m_1 \times m_2} \,\Big|\,
& T(X) = \mathrm{Vec}^{-1}_{m_1, m_2} \, \tilde{T}(\mathrm{Vec}(X)), \\
& \tilde{T} \in \mathcal{T}_{\text{max}}(n_1 n_2, p, m_1 m_2) \Big\}.
\end{aligned}
\]

This vectorization convention naturally extends to deep maxout networks
defined in \eqref{eq:maxNet} and to feedforward networks defined in \eqref{eq: FFNN},
where both inputs and outputs are treated in the same vectorized form.

We construct a three-layer Transformer network that approximates a maxout layer
with arbitrary accuracy. 
As defined in Section~\ref{sec:nota}, we consider paired Transformer networks
$\mathcal{N}_H$ and $\mathcal{N}_S^\lambda$, which are identical in architecture,
except that $\mathcal{N}_H$ employs the hardmax activation while
$\mathcal{N}_S^\lambda$ uses the scaled softmax activation. 
This convention is adopted throughout all subsequent theorems.

\begin{thm}[Approximation of a Maxout Layer with $p \le T$]\label{thm_shallow_max}
Let $p \le T$, let $\Omega \subset \mathbb{R}^{n \times T}$ be compact, and let
$f \in \mathcal{T}_{\mathrm{max}}(n \times T, p, m \times T)$. 
Then, there exists a hardmax-based Transformer network
\[
\begin{aligned}
\mathcal{N}_H \in \mathcal{TFN}^{\mathrm{hard}}_{n,m}\bigl(
&L = 3, \\
&d = \max\{n,\, m(Tp+T+1)\}+T+1, \\
&k = 2, \\
&H = Tmp, \\
&r = 4(n+1)(T+3)
\bigr),
\end{aligned}
\]
such that
\[
\mathcal{N}_H\big|_{\Omega} = f\big|_{\Omega}.
\]
Moreover, for any $\epsilon > 0$, the corresponding softmax-based Transformer
$\mathcal{N}_S^\lambda$ satisfies
\[
\|\mathcal{N}_S^\lambda - f\|_{L^\infty(\Omega)} < \epsilon,
\]
provided that the scaling parameter satisfies $\lambda = \mathcal{O}(1/\epsilon)$.
\end{thm}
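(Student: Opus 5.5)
The plan is a three-stage construction. Block 1 writes all affine pieces of $f$ into the tokens, block 2 takes the maxima with hardmax attention, and block 3 cleans up before the readout $C$ extracts the result. By definition each output coordinate $(f)_t(X)_j$ is $\max_{q\in[p]} \ell_{t,j,q}(\mathrm{Vec}(X))$, where every $\ell_{t,j,q}$ is an affine function of the whole sequence. The difficulty is that these affine forms depend on all tokens, while the feedforward layer acts token-wise. The tool for this is the positional embedding $E(X)=AX+B$, together with an auxiliary token and a set of $T+1$ one-hot position coordinates; this is why $d$ contains the extra $T+1$.

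\textbf{Step 1: making the pieces token-local.} Since $\Omega$ is compact, I choose the offsets $B$ so that the tokens lie in pairwise disjoint boxes. Token $s$ then carries $\bm{x}_s$ together with its one-hot position indicator $e_s$. The first block's feedforward layer uses hidden width $r=4(n+1)(T+3)$. It implements, token-wise, the map $\bm{x}_s\mapsto$ (contribution of token $s$ to every $\ell_{t,j,q}$), that is $W_{t,j,q}^{(s)}\bm{x}_s$, with the bias $b_{t,j,q}$ attached to the auxiliary token.

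To make this contribution depend on $s$, I gate by position. On a bounded domain, a product of a coordinate with an indicator of the box containing token $s$ can be written exactly with a few ReLUs, using terms of the form $\mathrm{ReLU}(x + M\cdot\mathbb{1}_s)-M$. This costs $O(n+1)$ units per position, which accounts for the $(n+1)(T+3)$ count. The first attention layer then sums these contributions across tokens. With hardmax and constant keys, the attention weights are uniform $1/(T+1)$, so it produces the average of the contributions; rescaling by $T+1$ recovers $\ell_{t,j,q}(X)$, and this value is broadcast to every token.

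\textbf{Step 2: computing the max with attention.} I store the values so that token $t'$ carries, in the slot for output $(t,j)$, the value $\ell_{t,j,q}(X)$ with $q=t'$. This uses $p\le T$: the $p$ candidates are spread over distinct tokens, and the remaining tokens get copies of a candidate, so the maximum is unchanged. This is the reason the slot count is $m(Tp+T+1)$. One head is used for each triple $(t,j,q)$, which gives $H=Tmp$ heads with $k=2$. In each head the key of token $t'$ is $(\ell\text{-value},1)$ and the query is $(1,0)$, so the hardmax selects the maximal candidate and the value matrix copies it. Ties cause no problem, because all maximizers carry the same value and averaging them returns the maximum exactly. Block 3 then uses its position indicators to move the entry for $(t,j)$ into token $t$'s output coordinates and to cancel the residual terms. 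Finally $C$ reads out the $m$ coordinates, giving $\mathcal{N}_H|_\Omega=f|_\Omega$.

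\textbf{Step 3: softmax version, and the main obstacle.} For the softmax network I use the standard estimate that the softmax average lies between $\max_i x_i-(\log T)/\lambda$ and $\max_i x_i$; the linear-in-$x$ keys make this a direct consequence. Everything downstream is Lipschitz on the compact image of $\Omega$. However, the averaging attention of block 1 must stay exact, which holds because its scores are constant and softmax of constant scores equals hardmax. The error is therefore $O(1/\lambda)$, which gives $\lambda=\mathcal{O}(1/\epsilon)$.

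The hard part is Step 1: realizing the position-dependent affine maps exactly with a shared feedforward layer of width $4(n+1)(T+3)$, and keeping the residual connections and the layout of $d$ consistent across the three blocks. I expect this bookkeeping, rather than the max operation itself, to be where the detailed work lies.
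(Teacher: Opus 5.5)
Your Step~1 gating through the one-hot coordinates is a legitimate substitute for the paper's disjoint-box selectors of Lemma~\ref{lem:mlp_piece}. The gate should be, e.g., $\mathrm{ReLU}(x-M(1-\chi_s))-\mathrm{ReLU}(-x-M(1-\chi_s))$ for $|x|\le M$, with $\chi_s\in\{0,1\}$ the position indicator; your $\mathrm{ReLU}(x+M\chi_s)-M$ does not vanish when $\chi_s=0$. You are also right that uniform attention is exact under softmax, and that the softmax average is within $O(\log T/\lambda)$ of the max.

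\textbf{The gap: getting values onto the right tokens.} Each block is attention followed by feedforward. So the cross-token sum of the contributions produced by $\mathrm{FF}_1$ can only happen in the second attention layer, the max must then sit in the third, and $\mathrm{FF}_2$, $\mathrm{FF}_3$ are the only places left for your bookkeeping. Both of your attention layers are position-blind: constant keys for the sum, constant query $(1,0)$ for the max. Hence every token receives identical values, and all $Tmp$ sums, and later all $Tm$ maxima, are broadcast. Your plan needs two placements:
\begin{itemize}
\item candidate $q$ of output $(t,j)$ sitting at token $q$ in one shared slot before the max (you only assert ``I store the values so that\dots'');
\item the maximum for $(t,j)$ sitting at token $t$ in the $m$ coordinates read by the token-wise readout $C$.
\end{itemize}
Both are products of a position indicator with a data value. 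A shared feedforward layer realizes these by gating, at a cost of order $Tmp$ and $Tm$ hidden units respectively. Nothing in your plan achieves this within $r=4(n+1)(T+3)$, which is independent of $m$ and $p$. Likewise, $m(Tp+T+1)=Tmp+Tm+m$ is exactly the paper's layout: contributions, one candidate row per $(t,j)$, and output rows. It leaves no room for broadcast copies. So your claims that these counts are ``accounted for'' do not follow from your construction.

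\textbf{The missing idea: route inside attention}, using the identity block $I_{T+1}$ of the embedding in keys and queries.
\begin{itemize}
\item \emph{Summation heads.} For the head of candidate $q$ of output $(t,j)$, let the keys be $\bm{v}_1=(1,0)^\top$ on tokens $1,\dots,T$ and $\bm{v}_2=(0,1)^\top$ on the auxiliary token. Let the query be $\bm{v}_1$ at token $q$ and $\bm{v}_2$ elsewhere. The value is $T$ times the contribution, with the bias folded into token $1$ so that the auxiliary token's value is $0$. Token $q$ then receives the full sum, while every other token attends only to the auxiliary token and receives $0$. Thus the $p$ heads of $(t,j)$ can write into a single row holding candidate $q$ at token $q$ and zeros at tokens $p+1,\dots,T+1$.
\item \emph{Max heads.} For the head of $(t,j)$, take as first key coordinate this row minus $\alpha$ on tokens $p+1,\dots,T+1$. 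Take as second key coordinate $-\alpha$ on tokens $1,\dots,p$ and $0$ elsewhere. Use query $(1,0)$ at token $t$ and $(0,1)$ elsewhere, with $\alpha$ large. Token $t$ gets the max over the $p$ candidates, while every other token averages the zeros on tokens $p+1,\dots,T+1$. The result therefore lands at token $t$ in an $m$-row block shared by all $t$, and $C$ reads it off.
\end{itemize}
The feedforward layers are then needed only for Step~1 and for cancelling the input residual. This is how the stated $r$, $d$ and $H=Tmp$ arise. The cost is that the summation is no longer exact under softmax, but its error is $O(e^{-\lambda})$ and is dominated by the $O(1/\lambda)$ error of the max.
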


\paragraph{Proof Sketch.}
Let $X_{\mathrm{Vec}} = \mathrm{Vec}(X)$. 
From \eqref{eq:Tmax}, the $k$-th column of $f$ can be written as
\[
(f)_k(X)
=
\bigl[
\max(W^{1,k} X_{\mathrm{Vec}} + \bm{b}^{1,k}),
\dots,
\max(W^{m,k} X_{\mathrm{Vec}} + \bm{b}^{m,k})
\bigr]^\top,
\]
where $X = (\bm{x}_1,\dots,\bm{x}_T) \in \Omega$, $W^{i,k} \in \mathbb{R}^{p \times nT}$, and
$\bm{b}^{i,k} \in \mathbb{R}^p$ for $i \in [m]$. 
The construction proceeds in two steps:

\paragraph{Step 1: Approximating the affine maps.}
We first approximate the set of affine functions
$\{W^{i,k} X_{\mathrm{Vec}} + \bm{b}^{i,k}\}_{i=1}^m$ using a combination of
a feedforward layer and a self-attention layer.

Define
$
V^{(i,j)} := \mathrm{Vec}_{n,T}^{-1}((W^{i,k})_{j,:})
$, so that
$(W^{i,k})_{j,:} X_{\mathrm{Vec}} = \sum_{t=1}^T \bm{x}_t^\top (V^{(i,j)})_{:,t}.
$
By Lemma~\ref{lem:mlp_piece}, each term $\bm{x}_t^\top (V^{(i,j)})_{:,t}$ can be
implemented by a token-wise feedforward layer after partitioning the input
tokens into distinct regions. 
Concretely, the feedforward layer realizes a piecewise linear function where the
$t$-th region corresponds to the linear map
$\bm{x}_t^\top (V^{(i,j)})_{:,t}$ for $t \in [T]$.

The resulting scalars
$\{\bm{x}_t^\top (V^{(i,j)})_{:,t}\}_{t=1}^T$ are then aggregated via a
self-attention layer, which is the only Transformer component that enables
inter-token interactions.

\paragraph{Step 2: Approximating the max operation.}
Each term $\max(W^{i,k} X_{\mathrm{Vec}} + \bm{b}^{i,k})$ is approximated by
an additional self-attention layer. 
This relies on the observation
\[
\bm{x}^\top \sigma_S^\lambda(\bm{x}) \approx \bm{x}^\top \sigma_H(\bm{x}) = \max(\bm{x})
\]
for sufficiently large $\lambda > 0$ (see Lemma~\ref{lem:softmax_app_max}),
so that the approximation error can be made arbitrarily small by taking $\lambda$ large enough.

\begin{remark}\label{remark_ para}
In Theorem~\ref{thm_shallow_max}, a maxout layer with parameter complexity $\mathcal{O}(T^2n m p)$ can be approximated 
by a Transformer with the same order of parameters, once the sparsity of both the 
feedforward and self-attention layers is taken into account 
(see Remark~\ref{remark:para_complex_lem_appen}). 
This shows that the Transformer-based approximation is parameter-efficient 
relative to the maxout baseline.
\end{remark}

\begin{remark}\label{p_ge_T}
Theorem~\ref{thm_shallow_max} assumes $p \le T$, exploiting
$\mathbf{x}^\top \sigma^\lambda(\mathbf{x}) \approx \sigma_H(\mathbf{x}) = \max(\mathbf{x})$
for large $\lambda$. 
For $p > T$, the same computation can be implemented by a rank-$s$ maxout network
of depth $\lceil (p-1)/(s-1) \rceil$ for any $s \le T$, equipped with residual
connections from the input. 
This regime corresponds to the approximation of deep maxout networks by Transformer
networks, which will be addressed in Theorem~\ref{thm_maxout_p>T}.
\end{remark}

\begin{remark}\label{remark:Lip_NN}
In Theorem~\ref{thm_shallow_max}, if $\lambda \ge C(T,n,M_1,M_2)$, the approximation error between the
hardmax-based network $\mathcal{N}_H$ and the corresponding softmax-based network
$\mathcal{N}_S^\lambda$ satisfies (see Remark~\ref{remark:Lip_NN_lem} for details)
\[
\|\mathcal{N}_H - \mathcal{N}_S^\lambda\|_{L^\infty(\Omega)} \le C_1 T^2 \lambda^{-1},
\]
where $C_1$ is an absolute constant, and
\[
M_1 := \max_{X \in \Omega} \|X\|_\infty, 
\quad
M_2 := \max_{i \in [m], k \in [T]} \max\{\|\bm{b}^{i,k}\|_\infty, \|W^{i,k}\|_\infty\}.
\]

Moreover, $\mathcal{N}_H$ is Lipschitz continuous on $\Omega$, with
\[
\mathrm{Lip}(\mathcal{N}_H; \Omega) \le C_2 p T^2 M_2,
\]
for some absolute constant $C_2$.
\end{remark}

By Theorem~\ref{thm_shallow_max}, a three-layer Transformer network can
approximate a single maxout layer. 
We now extend this construction to approximate a depth-$D$ maxout network
by sequentially stacking $D$ such Transformer subnetworks.

\begin{thm}[Approximation of Deep Maxout Networks with $p \le T$]\label{thm_deep_maxout}
Let $p \le T$, let $\Omega \subset \mathbb{R}^{n \times T}$ be compact, and
let $f : \mathbb{R}^{n \times T} \to \mathbb{R}^{m \times T}$ be realized
by a $D$-layer maxout network with input dimension $nT$, width $mT$ at each
layer, and rank $p$. 
Then there exists a hardmax-based Transformer network
\[
\begin{aligned}
\mathcal{N}_H \in \mathcal{TFN}^{\mathrm{hard}}_{n,m}\bigl(
&L = 3D, \\
&d = \max\{n,\, m[(T+1)(p+1)+1]\} + T + 1, \\
&k = 2, \\
&H = (T+1) m p, \\
&r = 4(\max\{n,mp\}+1)(T+3)
\bigr),
\end{aligned}
\]
such that
\[
\mathcal{N}_H\big|_{\Omega} = f\big|_{\Omega}.
\]

Moreover, for any $\epsilon > 0$, the corresponding softmax-based Transformer
$\mathcal{N}_S^\lambda$ satisfies
\[
\|\mathcal{N}_S^\lambda - f\|_{L^\infty(\Omega)} < \epsilon,
\]
provided that the scaling parameter satisfies $\lambda = \mathcal{O}(1/\epsilon)$.
\end{thm}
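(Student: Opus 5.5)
We build $\mathcal{N}_H$ by stacking $D$ copies of the three-block construction from Theorem~\ref{thm_shallow_max}, one for each layer $T^{(\ell)}$. The work is in making the output of one copy a valid input for the next, so that the positional embedding $E$ is applied only once. Write $f=T^{(D)}\circ\cdots\circ T^{(1)}$ with $T^{(\ell)}\in\mathcal{T}_{\max}(\cdot\times T,p,m\times T)$. The input of layer $1$ has $n$ rows per token; every later layer has $m$ rows per token.

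In the proof of Theorem~\ref{thm_shallow_max}, the token-wise feedforward layer of Lemma~\ref{lem:mlp_piece} computes $\bm{x}_t^\top(V^{(i,j)})_{:,t}$. This requires the tokens to sit in disjoint regions, so that one shared FFN realizes a different linear map on each token. The embedding $E(X)=AX+B$ provides this separation for the first subnetwork. After that, the current hidden values (the $m$ outputs per token of $T^{(\ell)}$) no longer carry the offset. Our plan is to keep in $T+1$ reserved coordinates a constant positional tag for each token, namely a one-hot indicator of $t$ together with the auxiliary token. The residual connections copy these coordinates unchanged through every block.

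Each subnetwork then re-applies the token-wise shift, which is the "repeated shift along depth" advertised in the introduction. The first FFN of subnetwork $\ell$ adds to the current values a large multiple of the tag, $\bm{h}_t\mapsto \bm{h}_t+ c\,t\,\bm{1}$ with $c$ exceeding the diameter of the compact image of $\Omega$ under $T^{(\ell-1)}\circ\cdots\circ T^{(1)}$. This moves token $t$ into its own region, and Lemma~\ref{lem:mlp_piece} applies with input width $\max\{n,mp\}$, which accounts for $r=4(\max\{n,mp\}+1)(T+3)$. The image stays compact at every stage because each maxout layer is continuous. The same FFN, or the residual stream, also zeroes out the previous layer's scratch coordinates. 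This lets a single width $d=\max\{n,m[(T+1)(p+1)+1]\}+T+1$ serve every subnetwork. The extra $(T+1)$ factors, compared with Theorem~\ref{thm_shallow_max}, account for the auxiliary token and the shift bookkeeping, and these also raise the head count to $H=(T+1)mp$.

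With this hand-off in place, the hardmax network satisfies $\mathcal{N}_H|_\Omega=f|_\Omega$ by induction on $\ell$. At each stage the $m$ value coordinates of token $t$ equal $(T^{(\ell)}\circ\cdots\circ T^{(1)})_t(X)$ exactly. The readout $C$ then selects those coordinates.

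For the softmax network we propagate errors through the layers. Remark~\ref{remark:Lip_NN} gives, for each subnetwork, $\|\mathcal{N}_H^{(\ell)}-\mathcal{N}_S^{\lambda,(\ell)}\|_{L^\infty}\le C_1T^2\lambda^{-1}$ on a slightly enlarged compact set, together with $\mathrm{Lip}(\mathcal{N}_H^{(\ell)})\le C_2pT^2M_2=:K$. A telescoping argument then gives
\[
\|\mathcal{N}_S^\lambda-\mathcal{N}_H\|_{L^\infty(\Omega)}\le C_1T^2\lambda^{-1}\sum_{\ell=0}^{D-1}K^\ell,
\]
which is $\mathcal{O}(1/\lambda)$ for fixed $D,T,p,M_2$, so $\lambda=\mathcal{O}(1/\epsilon)$ suffices.

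The main obstacle is the hand-off between subnetworks. The perturbed softmax iterates must remain in a neighborhood of the exact iterates on which the per-token region separation of Lemma~\ref{lem:mlp_piece} still holds. Otherwise a small error could push a token into the wrong linear piece. To handle this, we choose the shift constant $c$ with a margin, say twice the diameter, and require $\lambda\ge C(T,n,M_1,M_2,D)$ so that the accumulated error stays below that margin. The Lipschitz estimate is then applied on this enlarged compact set.
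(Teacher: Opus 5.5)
Your proposal is correct and follows essentially the paper's argument: you stack $D$ three-block subnetworks, re-separate the tokens by a token-dependent shift at every layer, get exactness of the hardmax network by induction, and control the softmax error by combining the per-subnetwork $\mathcal{O}(\lambda^{-1})$ bound with the hardmax Lipschitz bound while keeping the accumulated error below the separation margin. The one difference is where the shift enters: the paper builds it into each subnetwork's output as $X^\ell_t+\alpha^\ell_t\bm{1}_m$, so the auxiliary token also gets a computed output (whence $H=(T+1)mp$) and $\alpha^D_t=0$ at the last layer; you instead regenerate the shift from the persistent one-hot tag, which works equally well provided the term $c\,t$ is folded into the first-layer weights of the same feedforward layer, not written to the residual stream first, so that three blocks per maxout layer still suffice.
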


The construction of Theorem~\ref{thm_deep_maxout} builds on Theorem~\ref{thm_shallow_max} and relies on two key ideas. 
First, as in Theorem~\ref{thm_shallow_max},
positional embeddings are used to mitigate the effect of parameter sharing across tokens in the feedforward layers,
allowing each feedforward module to approximate a piecewise-linear function.
For the multi-layer setting, we further introduce token-dependent shifts at each layer,
which move token representations into pairwise disjoint regions and preserve the maxout computation.
Second, by Remark~\ref{remark:Lip_NN}, each hardmax-based subnetwork admits a uniform Lipschitz bound,
and the approximation error between a hardmax subnetwork and its softmax counterpart scales as $O(1/\lambda)$.
As a result, stacking $D$ such subnetworks yields a deep Transformer network whose softmax version
approximates the corresponding deep maxout network with controlled error accumulation.

\begin{remark}
If the scaling parameter satisfies
$
\lambda \ge C\!\Bigl(T, n, M, D, (m^\ell)_{\ell=1}^D\Bigr),
$
then the approximation error between the hardmax-based Transformer
$\mathcal{N}_H$ and its softmax-based counterpart $\mathcal{N}_S^\lambda$
satisfies
\[
\|\mathcal{N}_H - \mathcal{N}_S^\lambda\|_{L^\infty(\Omega)}
\le C\!\Bigl(T, M, D, (m^\ell)_{\ell=1}^D\Bigr) \lambda^{-1}.
\]

Here,
\[
M := \sup_{X \in \Omega} \|X\|_\infty, 
\qquad
m_\ell := \max_{i \in [m], j \in [T]}\max 
\{\|W^\ell_{i,j}\|_\infty, \|\bm{b}^\ell_{i,j}\|_\infty\},
\]
where $\{W^\ell_{i,j}\}_{i,j}$ and $\{\bm{b}^\ell_{i,j}\}_{i,j}$ denote the
weight matrices and bias vectors of the $\ell$-th maxout layer, respectively.
\end{remark}

For a feedforward neural network defined as \eqref{eq: FFNN} with the ReLU activation function $x \mapsto \max\{0,x\}$, we refer to it as a ReLU network.  
Such a network can be regarded as a special case of a rank-$2$ maxout network followed by an affine readout layer.  
Hence, Theorem~\ref{thm_deep_maxout} immediately implies that ReLU networks admit universal approximation by Transformer networks, under vectorized representations of inputs and outputs.  
Moreover, the affine readout can be implemented by an additional Transformer layer, since a token-wise feedforward layer combined with self-attention can approximate affine mappings (cf.~Step~1 in Proof Sketch of Theorem~\ref{thm_shallow_max}).

\begin{coro}[Universal Approximation of ReLU Networks]
Let $T\ge 2$, $\Omega \subset \mathbb{R}^{n \times T}$ be compact, and let \( f : \br^{n \times T} \to \mathbb{R}^{m \times T} \) be computed by a \( D \)-layer ReLU network with width \( mT \) per layer. Then, there exists a hardmax-based Transformer network
\[
\begin{aligned}
\mathcal{N}_H \in \mathcal{TFN}^{\mathrm{hard}}_{n,m}\bigl(
&L = 3D+1, \\
&d =\max\{n,m[3(T+1)+1]\}+T+1, \\
&k = 2, \\
&H =2(T+1)m, \\
&r = 4(\max\{n,2m\}+1)(T+3)
\bigr),
\end{aligned}
\]
such that 
\[
\mathcal{N}_H\big|_{\Omega}= f\big|_{\Omega}.
\]
Moreover, for any \( \epsilon > 0 \), the corresponding softmax-based Transformer $\mathcal{N}_S^\lambda$ satisfies 
\[
\|\mathcal{N}_S^\lambda- f\|_{L^\infty(\Omega)} < \epsilon,
\]
provided that the scaling parameter satisfies $\lambda=\mathcal{O}(1/\epsilon)$.
\end{coro}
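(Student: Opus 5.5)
We reduce the corollary to Theorem~\ref{thm_deep_maxout} with $p=2$, then append one Transformer block for the affine readout. Write the ReLU network as $f = g\circ T^{(D)}\circ\cdots\circ T^{(1)}$ (in vectorized form), where $T^{(\ell)}(\bm{x})=\mathrm{ReLU}(W_\ell\bm{x}+\bm{b}_\ell)$ and $g(\bm{x})=W\bm{x}+\bm{b}$. Each coordinate satisfies
\[
\mathrm{ReLU}\bigl((W_\ell\bm{x}+\bm{b}_\ell)_i\bigr)=\max\{(W_\ell)_{i,:}\bm{x}+(\bm{b}_\ell)_i,\ 0\},
\]
which is a rank-$2$ maxout unit whose second affine piece has zero weight and zero bias. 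Hence each $T^{(\ell)}$ lies in $\mathcal{T}_{\max}(mT,2,mT)$, or $\mathcal{T}_{\max}(nT,2,mT)$ for $\ell=1$, and $T^{(D)}\circ\cdots\circ T^{(1)}$ is a $D$-layer rank-$2$ maxout network of width $mT$. The requirement $p=2\le T$ is exactly the hypothesis $T\ge2$. Applying Theorem~\ref{thm_deep_maxout} with $p=2$ gives a hardmax Transformer with $3D$ blocks and parameters $d=\max\{n,m[3(T+1)+1]\}+T+1$, $H=2(T+1)m$, $r=4(\max\{n,2m\}+1)(T+3)$, $k=2$. These match the stated sizes. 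On $\Omega$ this network reproduces the maxout part exactly, before the readout $C$.

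For the readout, the affine map $g$ acts on the vectorized $mT$-dimensional hidden vector, so each output token depends on every hidden token. I would realize it with one additional block, following Step~1 of the proof sketch of Theorem~\ref{thm_shallow_max}.
\begin{enumerate}
\item Keep the token-dependent shift in reserved coordinates, exactly as in the deep construction, so that tokens occupy pairwise disjoint regions.
\item Use the feedforward layer, via Lemma~\ref{lem:mlp_piece}, to realize a piecewise-linear token-wise map whose $t$-th region computes the partial products $\bm{h}_t^\top(V^{(i,j)})_{:,t}$ of $g$'s rows with token $t$.
\item Use attention heads with uniform (hardmax, tied-score) attention, with the auxiliary token fixing normalization, to sum these contributions over $t$, then add the bias.
\end{enumerate}
The order of operations (feedforward, then attention) may require splitting the extra block's work. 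I would place the readout feedforward inside the last of the $3D$ blocks, whose feedforward sublayer should be free after the final max, and use the single new block's attention for aggregation plus its feedforward for cleanup. The final linear readout $C$ then extracts the $m$ output coordinates. Head count and width stay within the stated bounds, since this step needs at most $mT$ linear aggregations per output position, comparable to Step~1 of the maxout layer construction. This yields $L=3D+1$ and exact equality $\mathcal{N}_H|_\Omega=f|_\Omega$.

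For the softmax statement, I would argue as in Theorem~\ref{thm_deep_maxout}.
\begin{itemize}
\item By Remark~\ref{remark:Lip_NN}, each hardmax sub-block is Lipschitz and differs from its softmax version by $O(\lambda^{-1})$ on a compact set containing the relevant intermediate images, which are bounded because $\Omega$ is compact.
\item The readout block is also Lipschitz, since it is an affine map realized exactly, and its softmax error is $O(\lambda^{-1})$, as uniform or near-uniform attention is only perturbed through the score gaps.
\item A telescoping argument gives $\|\mathcal{N}_S^\lambda-f\|\le C\lambda^{-1}$, so $\lambda=\mathcal{O}(1/\epsilon)$ suffices.
\end{itemize}

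The main obstacle is checking that the readout block fits inside the stated $d,H,r$ and is compatible with the shifts and the auxiliary-token bookkeeping left by the deep construction. Most of this should be inherited from the proof of Theorem~\ref{thm_deep_maxout}, where inter-layer affine maps are already implemented in exactly this way.
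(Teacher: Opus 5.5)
Your proposal is correct and follows the paper's argument: view each ReLU layer as a rank-$2$ maxout layer, apply Theorem~\ref{thm_deep_maxout} with $p=2$ (which gives exactly the stated $d$, $H$, $r$ with $3D$ blocks), and implement the final affine readout with one extra block via the feedforward-plus-attention mechanism of Step~1 of Theorem~\ref{thm_shallow_max}. You also handle the attention-before-feedforward ordering, a detail the paper leaves implicit: you keep the shift $\alpha_t^D$ in the last maxout subnetwork instead of setting it to zero, and you use that subnetwork's third feedforward sublayer for the token-wise pieces. This stays within the stated width and head budgets, provided you overwrite already-occupied working coordinates with the identity $\mathrm{ReLU}(-x)-\mathrm{ReLU}(x)=-x$.
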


As discussed in Remark~\ref{p_ge_T}, a shallow maxout network of rank $p > T \ge 2$ can be realized by a rank-$s$ maxout network of depth $\lceil \tfrac{p-1}{s-1} \rceil$ for any $2 \le s \le T$, equipped with residual connections from the input. 
Combining this observation with Theorem~\ref{thm_deep_maxout}, we obtain the following result:

\begin{thm}[Universal Approximation of Shallow Maxout Networks]\label{thm_maxout_p>T}
Let $p,T \ge 2$, let $\Omega \subset \mathbb{R}^{n \times T}$ be compact, and let 
$f \in \mathcal{T}_{\mathrm{max}}(n \times T, p, m \times T)$. 
Then, for any $2 \le s \le T$, there exists a hardmax-based Transformer network
\[
\begin{aligned}
\mathcal{N}_H \in \mathcal{TFN}^{\mathrm{hard}}_{n,m}\bigl(
&L = 3 \Bigl\lceil \frac{p-1}{s-1} \Bigr\rceil, \\
&d = d'+T+1, \\
&k = 2, \\
&H = (T+1)m \min\{s,p\}, \\
&r = 4(r'+1)(T+3)
\bigr),
\end{aligned}
\]
with
\[
d' =
\begin{cases} 
\max\{n, m(Tp +T+1)\}, & s \ge p,\\[2mm]
n + m[(T+1)(s+1)+1], & s < p,
\end{cases}
\qquad
r' =
\begin{cases} 
n, & s \ge p,\\[2mm]
\max\{n+m,ms\}, & s < p,
\end{cases}
\]
such that
\[
\mathcal{N}_H\big|_\Omega = f\big|_\Omega.
\]

Moreover, for any $\epsilon > 0$, the corresponding softmax-based Transformer 
$\mathcal{N}_S^\lambda$ satisfies
\[
\|\mathcal{N}_S^\lambda - f\|_{L^\infty(\Omega)} < \epsilon,
\]
provided that the scaling parameter satisfies $\lambda = \mathcal{O}(1/\epsilon)$.
\end{thm}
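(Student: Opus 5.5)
The plan splits into the cases $s\ge p$ and $s<p$.

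\textbf{Case $s \ge p$.} Here $p\le s\le T$, so Theorem~\ref{thm_shallow_max} applies directly. It gives $L=3=3\lceil (p-1)/(s-1)\rceil$, which holds because $p-1\le s-1$ forces the ceiling to be $1$ when $p\ge 2$. The number of heads is $Tmp\le (T+1)m\min\{s,p\}$. The width is $d=\max\{n,m(Tp+T+1)\}+T+1$, matching $d'$. The hidden size is $r=4(n+1)(T+3)$, matching $r'=n$. Padding unused heads and dimensions with zeros then gives the claimed class.

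\textbf{Case $s<p$: reduction to a deep maxout network.} For each output coordinate $(i,k)$, partition the $p$ affine functions $\ell_1,\dots,\ell_p$ of $\mathrm{Vec}(X)$ into groups. The first group has $s$ functions and each later group has at most $s-1$, so there are $N=\lceil (p-1)/(s-1)\rceil$ groups. The maximum is then computed recursively by
\begin{equation*}
y^{(1)}=\max\{\ell_1,\dots,\ell_s\},\qquad y^{(j+1)}=\max\{y^{(j)},\ell_{\cdot},\dots\},
\end{equation*}
where each step is a rank-$s$ max applied to the running value $y^{(j)}$ and at most $s-1$ fresh affine functions of the original input. Groups shorter than $s-1$ are padded by repeating an affine function.

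Each step is a maxout layer with rank $s$ acting on the concatenation $(X, Y^{(j)})$. To keep $X$ available across layers, we carry it forward through the residual stream; it can also be propagated as a maxout unit via $x=\max\{x,x\}$. This is the ``residual connections from the input'' of Remark~\ref{p_ge_T}. The result is a depth-$N$, rank-$s$ maxout network with input $n+m$ channels per token and output $m$ channels per token, plus a pass-through copy of $X$.

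\textbf{Case $s<p$: realizing the deep network.} Apply the construction of Theorem~\ref{thm_deep_maxout} to this deep network with $p$ replaced by $s$. Each layer's affine stage (the Step~1 feedforward and attention aggregation) now takes as input the token features $(\bm{x}_t, \bm{y}_t^{(j)})\in\mathbb{R}^{n+m}$. This explains $r'=\max\{n+m,ms\}$ and $d'=n+m[(T+1)(s+1)+1]$: we take the $d'$ of Theorem~\ref{thm_deep_maxout} with $p\to s$ and add $n$ dedicated coordinates that store $X$ unchanged. Storing $X$ this way is free in a residual architecture, since we simply zero the corresponding output rows of all value and feedforward matrices. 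The heads are $H=(T+1)ms$, and the depth is $3N$. Exactness of $\mathcal N_H$ on $\Omega$ follows from exactness at each layer. The running values stay in a compact set, namely the image of $\Omega$, which is needed for the region-separating token shifts used in Lemma~\ref{lem:mlp_piece}.

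\textbf{Softmax bound and main obstacle.} The softmax statement follows as in Theorem~\ref{thm_deep_maxout}. Each 3-block subnetwork is Lipschitz by Remark~\ref{remark:Lip_NN}, with softmax--hardmax deviation $O(\lambda^{-1})$. A telescoping sum over the $N$ stages gives total error $C\lambda^{-1}$, so $\lambda=O(1/\epsilon)$ suffices. The main obstacle I expect is the bookkeeping in the $s<p$ case. The intermediate features $Y^{(j)}$ and the preserved $X$ must coexist with the token-dependent shifts and positional offsets in each feedforward layer, so that Lemma~\ref{lem:mlp_piece} still separates tokens into disjoint regions at every depth. This must be done without breaking the stated dimension counts, in particular $r'=\max\{n+m,ms\}$ rather than something larger.
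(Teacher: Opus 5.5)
Your proposal is correct and follows essentially the same route as the paper. The case $s\ge p$ reduces directly to Theorem~\ref{thm_shallow_max}. For $s<p$, the rank-$p$ max is rewritten as a depth-$\lceil (p-1)/(s-1)\rceil$ rank-$s$ maxout network with residual access to $X$, and the construction of Theorem~\ref{thm_deep_maxout} is applied to it, with the extra $n$ in $d'$ storing $X$ and the $n+m$ in $r'$ coming from the augmented per-token input. Your version is more explicit than the paper's, which merely asserts this rewriting: you supply the group count and the head padding $Tmp\le (T+1)mp$. The only slip is small: adding $n$ to the $d'$ of Theorem~\ref{thm_deep_maxout} would literally give $n+\max\{n,\cdot\}$, and the $\max$ drops out only because $X$ occupies its own rows from the embedding onward and is never overwritten.
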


To approximate a deep maxout network with $D$ layers, we sequentially stack $D$ Transformer subnetworks, each constructed as in Theorem~\ref{thm_maxout_p>T}.  
Combining Theorems~\ref{thm_deep_maxout} and~\ref{thm_maxout_p>T} yields the following universal approximation result.

\begin{thm}[Universal Approximation of Deep Maxout Networks]
\label{thm_deep_maxout_p>T}
Let $p,T\ge 2$, let $\Omega \subset \mathbb{R}^{n \times T}$ be compact, and let
$f : \mathbb{R}^{n \times T} \to \mathbb{R}^{m \times T}$ be computed by a
$D$-layer maxout network with input dimension $dT$, width $mT$ per layer,
and rank $p$. For any $2 \le s \le T$, there exists a hardmax-based Transformer
network
\[
\begin{aligned}
\mathcal{N}_H \in \mathcal{TFN}^{\mathrm{hard}}_{n,m}\bigl(
&L = 3 \Bigl\lceil \frac{p-1}{s-1} \Bigr\rceil D, \\
&d = d' + T + 1, \\
&k = 2, \\
&H = (T+1)m \min\{s,p\}, \\
&r = 4(r' + 1)(T+3)
\bigr),
\end{aligned}
\]
with
\[
d' =
\begin{cases} 
\max\{n, m[(T+1)(p + 1)+1]\}, & s \ge p,\\[1mm]
\max\{n,m\} + m[(T+1)(s + 1)+1], & s < p,
\end{cases}
\qquad
r' =
\begin{cases} 
\max\{n,mp\}, & s \ge p,\\[1mm]
\max\{n+m,ms\}, & s < p,
\end{cases}
\]
such that
\[
\mathcal{N}_H\big|_{\Omega} = f\big|_{\Omega}.
\]
Moreover, for any $\epsilon > 0$, the corresponding softmax-based Transformer
$\mathcal{N}_S^\lambda$ satisfies
\[
\|\mathcal{N}_S^\lambda - f\|_{L^\infty(\Omega)} < \epsilon,
\]
provided that the scaling parameter satisfies $\lambda = \mathcal{O}(1/\epsilon)$.
\end{thm}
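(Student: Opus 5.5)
The plan is to combine the layer-stacking argument of Theorem~\ref{thm_deep_maxout} with the rank-reduction construction of Theorem~\ref{thm_maxout_p>T}. Write $f = T^{(D)}\circ\cdots\circ T^{(1)}$, where each $T^{(\ell)}$ is a rank-$p$ maxout layer. In the first layer the tokens have $n$ coordinates; in later layers they have $m$.

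\emph{Case $s\ge p$.} Each maxout layer is handled as in Theorem~\ref{thm_deep_maxout}, so that theorem applies directly. Its parameters agree with $d'$ and $r'$ as stated: $d'=\max\{n,m[(T+1)(p+1)+1]\}$, $r'=\max\{n,mp\}$, $H=(T+1)mp$, and $L=3D$, since $\lceil (p-1)/(s-1)\rceil =1$. This case is immediate.

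\emph{Case $s<p$, single layer.} Apply Theorem~\ref{thm_maxout_p>T} to $T^{(\ell)}$ with input token dimension $n_\ell\in\{n,m\}$. This gives a hardmax subnetwork $\mathcal{N}^{(\ell)}_H$ of depth $3\lceil (p-1)/(s-1)\rceil$ with $\mathcal{N}^{(\ell)}_H=T^{(\ell)}$ exactly on any compact set. Internally it computes the rank-$p$ max as an iterated rank-$s$ max. At each step it keeps the input $X^{(\ell-1)}$ in a residual channel of width $n_\ell$ alongside the running partial maxima of width $m$. This is why its width is $n_\ell+m[(T+1)(s+1)+1]$.

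\emph{Case $s<p$, stacking.} Compose these blocks. The output of block $\ell$, an $m\times T$ array stored in designated coordinates, must serve as the residual input of block $\ell+1$. So the embedding needs at most $\max\{n,m\}$ coordinates for the current layer's input, plus $m[(T+1)(s+1)+1]$ workspace coordinates, plus the $T+1$ coordinates used for the token-wise positional shifts. This matches $d'=\max\{n,m\}+m[(T+1)(s+1)+1]$. The FF width $r'=\max\{n+m,ms\}$ is uniform across blocks, because the worst case over $n_\ell\in\{n,m\}$ is covered by taking the max.

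\emph{Main obstacle: interface between blocks.} Two things must be arranged. First, the final FF sublayer of block $\ell$ must move the $m$ output channels into the input slot and clear the workspace and old residual channels. It must also re-apply the token-dependent shifts from the positional-embedding mechanism, as in the proof of Theorem~\ref{thm_deep_maxout}, so that the tokens again lie in pairwise disjoint regions where Lemma~\ref{lem:mlp_piece} applies. I would do this by merging the transfer into the existing FF layers, using residual connections and ReLU identities $x=\mathrm{ReLU}(x)-\mathrm{ReLU}(-x)$ on a bounded range. This adds no depth, so the total is $L=3\lceil (p-1)/(s-1)\rceil D$.

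Second, each block depends on bounds on its input domain, namely the shift sizes and region separation. These follow inductively, because each hardmax block is exact and Lipschitz (Remark~\ref{remark:Lip_NN}), so every intermediate image $T^{(\ell)}\circ\cdots\circ T^{(1)}(\Omega)$ is compact.

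\emph{Softmax version.} Let $E_\ell$ be the discrepancy after $\ell$ blocks. Write $\mathcal{N}_S^{\lambda,(\ell)}(Y)-\mathcal{N}_H^{(\ell)}(Z)$ as
\[
\bigl[\mathcal{N}_S^{\lambda,(\ell)}(Y)-\mathcal{N}_H^{(\ell)}(Y)\bigr]+\bigl[\mathcal{N}_H^{(\ell)}(Y)-\mathcal{N}_H^{(\ell)}(Z)\bigr].
\]
On a slightly enlarged compact set, the first bracket is $O(\lambda^{-1})$ and the second is at most $\mathrm{Lip}\cdot E_{\ell-1}$. This gives $E_\ell\le \mathrm{Lip}\,E_{\ell-1}+C\lambda^{-1}$, hence $E_D\le C(T,M,D,p,s,\ldots)\lambda^{-1}$, so $\lambda=\mathcal{O}(1/\epsilon)$ suffices. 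The one delicate point is checking that $Y$ stays within a margin where the region separation of Lemma~\ref{lem:mlp_piece} still holds. This requires $\lambda$ above a threshold, which is absorbed into the constant.
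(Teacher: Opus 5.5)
Your proposal is correct and follows essentially the same route as the paper. The paper stacks $D$ copies of the subnetwork from Theorem~\ref{thm_maxout_p>T}, using the token-shift and residual-offset interface and the Lipschitz-plus-$O(\lambda^{-1})$ error recursion from the proof of Theorem~\ref{thm_deep_maxout}, and the case $s\ge p$ reduces directly to Theorem~\ref{thm_deep_maxout}. Your bookkeeping of $d'$ and $r'$ also matches; the uniform FF width $\max\{n+m,ms\}$ covers the later layers with $n_\ell=m$ precisely because $ms\ge 2m$ for $s\ge 2$.
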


\begin{remark}
When $s \ge p$, Theorems~\ref{thm_maxout_p>T} and~\ref{thm_deep_maxout_p>T}
reduce to Theorems~\ref{thm_shallow_max} and~\ref{thm_deep_maxout},
respectively.
\end{remark}

\begin{remark}
For the Transformer networks in Theorems~\ref{thm_deep_maxout}, 
\ref{thm_maxout_p>T}, and \ref{thm_deep_maxout_p>T}, the total number of 
parameters matches that of the target maxout network up to constants when the sparsity 
of both feedforward and self-attention layers is considered, achieving 
parameter-efficient approximation (see Remark~\ref{remark_ para}). 
This follows from sequentially stacking subnetworks that approximate individual maxout layers.
\end{remark}

According to~\cite{balazs2015near}, any convex and Lipschitz continuous function can be approximated by the maximum
of finitely many affine functions; see Lemma~\ref{lem:convex_max}. Consequently, any such function can be approximated by
a maxout layer with sufficiently large rank. Combining this observation with Theorem~\ref{thm_maxout_p>T}, we obtain the following result.

\begin{coro}[Approximation of Convex Lipschitz Functions]
\label{thm_convex_Lipsc}
Let $T \ge 2$ and let $\Omega \subset \mathbb{R}^{n \times T}$ be a compact convex
set with nonempty interior. Suppose that
$f \colon \Omega \to \mathbb{R}^{m \times T}$ is convex and Lipschitz continuous
with Lipschitz constant $C$.
Then, for any integer $p \ge 2$ and any $2 \le s \le T$, there exists a
hardmax-based Transformer network
\[
\begin{aligned}
\mathcal{N}_H \in \mathcal{TFN}^{\mathrm{hard}}_{n,m}\bigl(
&L = 3 \Bigl\lceil \frac{p-1}{s-1} \Bigr\rceil, \\
&d = d'+T+1, \\
&k = 2, \\
&H = (T+1)m \min\{s,p\}, \\
&r = 4(r'+1)(T+3)
\bigr),
\end{aligned}
\]
where
\[
d' =
\begin{cases} 
\max\{n, m(Tp +T+1)\}, & s \ge p,\\[2mm]
n + m[(T+1)(s+1)+1], & s < p,
\end{cases}
\qquad
r' =
\begin{cases} 
n, & s \ge p,\\[2mm]
\max\{n+m,ms\}, & s < p.
\end{cases}
\]
The network $\mathcal{N}_H$ satisfies the uniform approximation bound
\[
\|\mathcal{N}_H - f\|_{L^\infty(\Omega)}
\le
72\, n T^2 C\, \mathrm{diam}(\Omega)\, p^{-2/(nT)},
\]
where
$
\mathrm{diam}(\Omega)
:= \sup_{\bfx,\bfy \in \Omega} \|\bfx - \bfy\|_\infty .
$

Moreover, for any $\epsilon > 0$, the corresponding softmax-based Transformer
$\mathcal{N}_S^\lambda$ satisfies
\[
\|\mathcal{N}_S^\lambda - f\|_{L^\infty(\Omega)}
<
72\, n T^2 C\, \mathrm{diam}(\Omega)\, p^{-2/(nT)} + \epsilon,
\]
provided that the scaling parameter satisfies
$\lambda = \mathcal{O}(1/\epsilon)$.
\end{coro}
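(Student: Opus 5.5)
The plan is to reduce Corollary~\ref{thm_convex_Lipsc} to Theorem~\ref{thm_maxout_p>T} in two stages: first approximate $f$ by a rank-$p$ maxout layer, then represent that maxout layer exactly by the hardmax Transformer. The architecture parameters in the statement are exactly those of Theorem~\ref{thm_maxout_p>T}, so the only new ingredient is the approximation error of the first stage.

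\textbf{Stage 1: scalar components.} Vectorize $\Omega$ as a compact convex set $\tilde\Omega:=\mathrm{Vec}(\Omega)\subset\mathbb{R}^{nT}$ with nonempty interior, and consider the $mT$ scalar components $f_{i,k}(X):=[f(X)]_{i,k}$. Each $f_{i,k}$ is convex, since we read componentwise convexity into the hypothesis that $f$ is convex. Each is also $C$-Lipschitz with respect to $\|\cdot\|_\infty$, because $|f_{i,k}(X)-f_{i,k}(Y)|\le\|f(X)-f(Y)\|_\infty\le C\|X-Y\|_\infty$.

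\textbf{Stage 1: applying Lemma~\ref{lem:convex_max}.} For each $(i,k)$, Lemma~\ref{lem:convex_max} in dimension $nT$ gives $p$ affine functions $\bm{w}_j^\top\bm{x}+b_j$, $j\in[p]$, such that
\[
\sup_{\bm{x}\in\tilde\Omega}\Bigl|f_{i,k}(\bm{x})-\max_{j\in[p]}\{\bm{w}_j^\top\bm{x}+b_j\}\Bigr|\le 72\,(nT)\,C\,\mathrm{diam}(\Omega)\,p^{-2/(nT)}.
\]
Here the lemma's dimension-dependent factor, presumably linear in the dimension, becomes $72nT$. The statement has $72nT^2$, which leaves slack of a factor $T$. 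I expect that slack to absorb one of two things: the conversion between the $\ell^\infty$ Lipschitz constant and the $\ell^2$ Lipschitz constant or diameter used in \cite{balazs2015near}, or a $T$-dependent normalization. Stacking the rows $(\bm{w}_j^\top)$ and the biases yields $W^{i,k}\in\mathbb{R}^{p\times nT}$ and $\bm{b}^{i,k}\in\mathbb{R}^p$. Together these define $g\in\mathcal{T}_{\max}(n\times T,p,m\times T)$ with $\|g-f\|_{L^\infty(\Omega)}\le 72nT^2C\,\mathrm{diam}(\Omega)p^{-2/(nT)}$.

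\textbf{Stage 2.} Apply Theorem~\ref{thm_maxout_p>T} to $g$ with the given $s$. This yields $\mathcal{N}_H$ with exactly the stated $L,d,k,H,r$ and $\mathcal{N}_H|_\Omega=g|_\Omega$. The triangle inequality then gives the hardmax bound. For the softmax version, the same theorem gives $\|\mathcal{N}_S^\lambda-g\|_{L^\infty(\Omega)}<\epsilon$ once $\lambda=\mathcal{O}(1/\epsilon)$, and the triangle inequality again gives the stated softmax bound.

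\textbf{Main obstacle.} The delicate step is matching constants in Stage 1. The rate in \cite{balazs2015near} is typically stated for the Euclidean norm, in the form $\mathcal{O}(d\,L\,\mathrm{diam}\,K^{-2/d})$ with $K$ affine pieces. To get $72nT^2$, I would first check the exact form of Lemma~\ref{lem:convex_max}. I would then track the norm conversions: $\mathrm{Lip}$ in $\ell^2$ is at most $\sqrt{nT}$ times $\mathrm{Lip}$ in $\ell^\infty$, and the $\ell^2$ diameter is at most $\sqrt{nT}$ times the $\ell^\infty$ diameter. Their product is a factor $nT$, which may need to be absorbed into the lemma's constant. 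If the lemma is already stated in $\ell^\infty$ terms with factor $72\cdot\mathrm{dim}$, the bound follows with room to spare.
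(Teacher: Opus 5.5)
Your proposal is correct and follows essentially the same route as the paper: apply Lemma~\ref{lem:convex_max} in dimension $nT$ to each scalar component, stack the resulting $p$ affine pieces into an element of $\mathcal{T}_{\max}(n\times T,p,m\times T)$, invoke Theorem~\ref{thm_maxout_p>T}, and finish with the triangle inequality (for both the hardmax and softmax bounds). On your ``main obstacle'': Lemma~\ref{lem:convex_max} is already stated with an $\ell^\infty$ Lipschitz constant and an $\ell^\infty$ diameter, so no $\ell^2$ conversion is needed, and the paper attributes the extra factor $T$ (giving $72nT^2$ rather than $72nT$) to passing from the entrywise, vectorized error bound to the matrix-valued norm of the output — under your entrywise reading that factor is simply slack.
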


\section{The Number of Linear Regions for Transformer Networks}
\label{sec:CPWL}

Building on Section~\ref{sec: appro_maxout}, where Transformer networks were shown
to approximate maxout networks, we extend the analysis to a broader and
fundamental function class, namely continuous piecewise linear (CPWL) functions.
A natural way to quantify the expressive power of Transformer architectures is
through the number of linear regions they can represent.
Accordingly, we study the architectural complexity required for Transformer
networks to realize CPWL functions with a prescribed number of linear regions,
and conversely, characterize the maximal number of linear regions achievable by
Transformers of a given complexity.

Classically, CPWL functions are defined as mappings
$f:\mathbb{R}^{n}\to\mathbb{R}^{m}$.
Here we extend this notion to sequence-to-sequence mappings
$f:\mathbb{R}^{n\times T}\to\mathbb{R}^{m\times T}$ by identifying inputs and
outputs with their vectorized representations.
In particular, when $T=1$, this definition reduces to the standard one, and the
two notions coincide.

We begin by formalizing CPWL functions in the sequence-to-sequence setting.

\begin{de}[continuous piecewise linear (CPWL) functions]
A function $f:\mathbb{R}^{n\times T}\to\mathbb{R}^{m\times T}$ is called
CPWL if it is continuous and if there exists
a finite collection of closed and nonempty sets
$\{\mathcal{I}_k\}_{k=1}^K \subset \mathbb{R}^{n\times T}$ with nonempty, pairwise disjoint interiors and
$\bigcup_{k=1}^K \mathcal{I}_k = \mathbb{R}^{n\times T}$, such that $f$ is affine on
each $\mathcal{I}_k$.
That is, for every $k=1,\dots,K$, there exist
$W^k\in\mathbb{R}^{mT\times nT}$ and $\bm{b}^k\in\mathbb{R}^{mT}$ such that
$
f(X)
=
\mathrm{Vec}^{-1}_{m,T}\!\left(W^k\,\mathrm{Vec}(X)+\bm{b}^k\right)$, $X\in\mathcal{I}_k$.

\end{de}
\begin{de}[Linear regions of CPWL functions]
Let $f:\mathbb{R}^{n\times T}\to\mathbb{R}^{m\times T}$ be CPWL.
A set $\mathcal{I}\subset\mathbb{R}^{n\times T}$ is called a \emph{linear region}
of $f$ if it is a maximal connected subset on which $f$ is affine.
We denote by $N(f)$ the number of linear regions of $f$, and for a class of CPWL
functions $\mathcal{F}$ define
$
N(\mathcal{F}) := \sup_{f\in\mathcal{F}} N(f).
$
\end{de}

We now relate CPWL functions to maxout representations.
By Lemma~\ref{lem:cpwl_convex}, any CPWL function can be decomposed as the
difference of two convex CPWL functions.
Moreover, each convex CPWL function can be represented as the maximum of finitely
many affine functions.
Combining this decomposition with the universal approximation results for
maxout networks established in Theorem~\ref{thm_maxout_p>T}, we conclude that
Transformer networks are capable of representing arbitrary CPWL functions.
This is formalized in Theorem~\ref{thm:tf_CPWL} below.

\begin{lem}[{\cite[Proposition~4.3]{Hertrich2023SIAM}}]\label{lem:cpwl_convex}
Let $f:\mathbb{R}^n\to\mathbb{R}$ be a CPWL function with $N$ linear regions.
Then $f$ admits a decomposition $f=g-h$, where both $g$ and $h$ are convex CPWL
functions with at most $N^{2n+1}$ linear regions.
\end{lem}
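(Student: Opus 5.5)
The plan is to pass through the hyperplane arrangement generated by the affine pieces of $f$. Let $\ell_1,\dots,\ell_N$ denote the affine functions that $f$ coincides with on its $N$ linear regions; some may coincide, which only helps. For each pair $i<j$ with $\ell_i\neq\ell_j$, consider the set $\{\ell_i=\ell_j\}$. If $\ell_i-\ell_j$ is a nonzero constant, this set is empty and we discard it; otherwise it is a hyperplane $H_{ij}=\{\bm{x}:\bm{a}_{ij}^\top\bm{x}+b_{ij}=0\}$. This gives at most $M:=N(N-1)/2$ hyperplanes, and their arrangement $\mathcal{A}$ has at most $\sum_{i=0}^{n}\binom{M}{i}\le N^{2n+1}$ open cells (for $n\ge 1$, $N\ge 2$; the case $N=1$ is trivial).

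\textbf{Step 1: $f$ is affine on every cell of $\mathcal{A}$.} I would invoke the lattice (max--min) representation of CPWL functions (Tarela--Mart\'inez, Wang--Sun): there are index sets $S_1,\dots,S_q\subseteq[N]$ with
\[
f(\bm{x})=\max_{1\le r\le q}\ \min_{j\in S_r}\ \ell_j(\bm{x}).
\]
On a cell of $\mathcal{A}$, the relative order of all the values $\ell_1(\bm{x}),\dots,\ell_N(\bm{x})$ is constant. Hence the index that attains each min, and then the index that attains the max, can be chosen independently of $\bm{x}$ in the cell. So $f$ equals a single $\ell_j$ there. If one prefers not to cite the lattice representation, a direct argument works too. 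Suppose $f=\ell_a$ near some point of a cell and $f=\ell_b\neq\ell_a$ near another point of the same cell. Continuity along a path inside the (convex, hence connected) cell forces a switch between two distinct pieces at some point. At that point the two pieces agree, so the path meets some $H_{ij}$, a contradiction.

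\textbf{Step 2: convexification.} Set
\[
h(\bm{x}):=K\sum_{H_{ij}\in\mathcal{A}}\bigl|\bm{a}_{ij}^\top\bm{x}+b_{ij}\bigr|,\qquad g:=f+h,
\]
with $K>0$ to be chosen. Clearly $h$ is convex and CPWL. Both $h$ and $g$ are affine on each cell of $\mathcal{A}$. Since every linear region is a union of closures of cells, each of them has at most $N^{2n+1}$ linear regions.

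It remains to choose $K$ so that $g$ is convex. For a continuous CPWL function that is affine on the cells of an arrangement, convexity is equivalent to local convexity across each $(n-1)$-dimensional facet. Concretely, for two adjacent cells $C,C'$ sharing a facet contained in a hyperplane with normal $\bm{a}$ (oriented from $C$ into $C'$), the gradient jump $\nabla g|_{C'}-\nabla g|_{C}$ is a scalar multiple $\mu\bm{a}$, and we need $\mu\ge 0$. For $f$, the jump is $\mu_f\bm{a}$, where $\mu_f$ takes only finitely many values over all adjacent pairs. For $h$, the jump is at least $2K\|\bm{a}\|^2$ in the same normalization; the hyperplanes are distinct, so only one term of $h$ kinks across a given facet. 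Choosing $K\ge\max\bigl(0,\,\max(-\mu_f)\bigr)/\min\bigl(2\|\bm{a}_{ij}\|^2\bigr)$ makes every jump of $g$ nonnegative. Then $f=g-h$ is the desired decomposition.

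\textbf{Main obstacle.} The delicate point is the local-to-global convexity criterion: a continuous CPWL function is convex if its restriction to every line is convex. A generic line meets only facets and avoids lower-dimensional faces, so the 1D restriction is convex by the facet condition. Convexity then passes to all lines by continuity, since pointwise limits of convex functions are convex. I would spell this density argument out carefully, together with the counting bound $\sum_{i\le n}\binom{M}{i}\le N^{2n+1}$.
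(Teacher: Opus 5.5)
Your proof is correct. The paper gives no argument of its own for this lemma; it only cites Hertrich et al.\ (Proposition~4.3), so the relevant comparison is with that source, whose argument is recalled here from memory rather than from the paper.

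Both routes share the same skeleton: the arrangement of the at most $\binom{N}{2}$ hyperplanes $\{\ell_i=\ell_j\}$, the fact that $f$ (hence $g$ and $h$) is affine on each cell, and the cell count. Your counting bound does hold: it follows from $\binom{M}{i}\le N^{2i}$ and $\sum_{i=0}^{n}N^{2i}\le N^{2n+1}$ for $N\ge 2$.

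Where you differ is the convexifying function. The cited proof takes $h=\sum_{i<j}\max\{\ell_i,\ell_j\}$. This equals $\tfrac12\sum_{i<j}|\ell_i-\ell_j|$ plus an affine function, so it is essentially your $h$ with normals normalized as $\nabla(\ell_i-\ell_j)$ and $K=\tfrac12$. With this choice, convexity of $g=f+h$ can be checked on an arbitrary line $t\mapsto\bm{x}+t\bm{d}$, with no genericity needed. Wherever $f$ passes from $\ell_i$ to $\ell_j$, its slope changes by $(\nabla\ell_j-\nabla\ell_i)^\top\bm{d}$. The term $\max\{\ell_i,\ell_j\}$ contributes a slope increase of exactly $|(\nabla\ell_j-\nabla\ell_i)^\top\bm{d}|$, and all other terms are convex.

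As a result, no constant has to be tuned. The facet-by-facet analysis and the density/limit argument you flagged as the main obstacle disappear, and your Step~1 is then needed only for the region count. Your version buys a more general principle: any CPWL function affine on the cells of a hyperplane arrangement becomes convex after adding a large enough multiple of $\sum|\bm{a}^\top\bm{x}+b|$. The cost is that extra local-to-global work.

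Two small bookkeeping remarks:
\begin{itemize}
\item Distinct pairs $(i,j)$ can define the same hyperplane. So either sum over distinct hyperplanes in $h$, or note that duplicate terms only add nonnegative kinks.
\item If you write jumps as $\mu\bm{a}$, the term $K|\bm{a}^\top\bm{x}+b|$ gives $\mu=2K$, not $2K\|\bm{a}\|^2$. This is harmless, since $K$ is simply taken large.
\end{itemize}
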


\begin{thm}[Universal Approximation of CPWL Functions]\label{thm:tf_CPWL}
Let $N,T\ge 2$, and let
$f:\mathbb{R}^{n\times T}\to\mathbb{R}^{m\times T}$
be a CPWL function with at most $N$ linear regions.
Then, for any compact set $\Omega\subset\mathbb{R}^{n\times T}$, there exists a
hardmax-based Transformer network
\[
\begin{aligned}
\mathcal{N}_H \in \mathcal{TFN}^{\mathrm{hard}}_{n,m}\bigl(
&L = 3 \Bigl\lceil \frac{N^{2nT+1}-1}{T-1} \Bigr\rceil, \\
&d = n+2m(T+1)^2+2m+T+1, \\
&k = 2, \\
&H = 2mT(T+1), \\
&r = 4(\max\{n+2m,2mT\}+1)(T+3)
\bigr),
\end{aligned}
\]
such that
\[
\mathcal{N}_H\big|_{\Omega} = f\big|_{\Omega}.
\]
Moreover, for any $\epsilon>0$, the corresponding softmax-based Transformer
$\mathcal{N}_S^\lambda$ satisfies
\[
\|\mathcal{N}_S^\lambda - f\|_{L^\infty(\Omega)} < \epsilon,
\]
provided that the scaling parameter satisfies $\lambda=\mathcal{O}(1/\epsilon)$.
\end{thm}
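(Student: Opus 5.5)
The plan is to reduce Theorem~\ref{thm:tf_CPWL} to the maxout results of Section~\ref{sec: appro_maxout}, using the difference-of-convex decomposition.

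\textbf{Step 1: decompose each output coordinate.} Write $f$ in vectorized form as $mT$ scalar CPWL functions $f_{i,t}:\mathbb{R}^{nT}\to\mathbb{R}$. Each $f_{i,t}$ has at most $N$ linear regions, since its regions are unions of the regions of $f$. By Lemma~\ref{lem:cpwl_convex} in dimension $nT$, we get $f_{i,t}=g_{i,t}-h_{i,t}$. Here $g_{i,t}$ and $h_{i,t}$ are convex CPWL functions with at most $P:=N^{2nT+1}$ linear regions.

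\textbf{Step 2: write each convex piece as a max of affine maps.} A convex CPWL function equals the maximum of its affine pieces. So each $g_{i,t}$ and $h_{i,t}$ is a max of at most $P$ affine functions. Padding by repeating pieces, we may take exactly $P$.

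\textbf{Step 3: assemble a single maxout layer.} Stack the $2mT$ scalar functions $\{g_{i,t},h_{i,t}\}$. This gives an element of $\mathcal{T}_{\max}(n\times T,P,2m\times T)$: at token $t$ the $2m$ output channels are $(g_{1,t},\dots,g_{m,t},h_{1,t},\dots,h_{m,t})$.

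\textbf{Step 4: realize the maxout layer and the final affine map.} Apply Theorem~\ref{thm_maxout_p>T} with output dimension $2m$, rank $p=P$ and $s=T$. This gives depth $3\lceil (P-1)/(T-1)\rceil$ and $H=(T+1)\cdot 2m\cdot T$, matching the claim. When $s<p$, it gives $d'=n+2m[(T+1)^2+1]=n+2m(T+1)^2+2m$, also matching the claim. The final map $(g,h)\mapsto g-h$ is token-wise affine, so it is absorbed into the linear readout $C$ with no extra block.

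\textbf{Checking the hidden width.} Substituting $m\to 2m$ and $s=T$ into Theorem~\ref{thm_maxout_p>T} gives $r'=\max\{n+2m,2mT\}$, as stated. The degenerate case $s\ge p$, i.e.\ $P\le T$, gives smaller or comparable sizes. There the dimensions can be padded up to those stated, since adding unused channels, heads or neurons is harmless.

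\textbf{Main obstacle.} The real work is the bookkeeping that makes the architecture parameters match exactly, in particular that the readout absorbs the subtraction. A second point needs care: the region count passes from $f$ to each coordinate $f_{i,t}$, and Lemma~\ref{lem:cpwl_convex} is applied in ambient dimension $nT$, not $n$. With these settled, exactness on $\Omega$ for $\mathcal{N}_H$ follows from Theorem~\ref{thm_maxout_p>T}. The softmax statement follows because the readout is a fixed linear map. Its norm depends only on $m$, so the $\mathcal{O}(1/\lambda)$ error from Theorem~\ref{thm_maxout_p>T} is multiplied by at most a constant factor of $2$, and $\lambda=\mathcal{O}(1/\epsilon)$ still suffices.
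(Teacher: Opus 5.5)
Your proposal is correct and follows the paper's proof: a difference-of-convex decomposition via Lemma~\ref{lem:cpwl_convex} in ambient dimension $nT$, a max-affine representation of the convex parts, stacking $(g,h)$ into a rank-$N^{2nT+1}$ maxout layer with $2m$ output channels, Theorem~\ref{thm_maxout_p>T} with $s=T$, and absorbing $g-h$ into the linear readout. You apply the scalar lemma coordinate by coordinate, which is slightly more careful than the paper (it applies the lemma directly to the vector-valued $f$), and the degenerate case $P\le T$ you pad for never occurs, since $N^{2nT+1}\ge 2^{2T+1}>T$.
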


Next, we study the maximal number of linear regions that can be attained by
CPWL functions approximable by Transformer networks with a fixed architecture.
To this end, we extend the notion of linear regions from CPWL functions to
Transformer architectures.
Specifically, by Theorem~\ref{thm:tf_CPWL}, although Transformer networks are not
CPWL functions themselves, any CPWL function with an arbitrarily prescribed
number of linear regions can be approximated, on an arbitrary compact set and to
arbitrary accuracy, by a Transformer network of fixed architecture through a
suitable choice of parameters.

Let $\mathcal{F}$ denote the class of functions realizable by Transformer
networks with a fixed architecture, and let $f$ be a CPWL function.
If, for any compact set $\Omega \subset \mathbb{R}^{n\times T}$ and any
$\epsilon>0$, there exists $\hat f \in \mathcal{F}$ such that
\[
\|\hat f - f\|_{L^\infty(\Omega)} < \epsilon,
\]
then we say that the Transformer architecture can realize at least $N(f)$
linear regions, and we write
\[
N(\mathcal{F}) \ge N(f).
\]

The following lemma provides a lower bound on the number of linear regions
realizable by deep maxout networks \cite{MontufarSiam2022}.

\begin{lem}[{\cite[Proposition~3.12]{MontufarSiam2022}}]
\label{lem_linear_region_maxout}
Consider a rank-$k$ maxout network $\mathcal{N}$ with $n_0$ inputs and $L$ layers, with layer widths $n_1,\dots,n_L$.
Let
$
n \le n_0,\ \tfrac{n_1}{2},\dots,\tfrac{n_{L-1}}{2},
$
and assume that $n_l/n$ is even for $l=1,\dots,L-1$
(otherwise, take the largest even lower bound and discard the rest).
Then the number of linear regions satisfies
\[
N(\mathcal{N})
\ge
\Biggl(
\prod_{l=1}^{L-1}
\bigl(\tfrac{n_l}{n}(k-1)+1\bigr)^n
\Biggr)
\Biggl(
\sum_{j=0}^{n} \binom{n_L}{j}(k-1)^j
\Biggr).
\]
\end{lem}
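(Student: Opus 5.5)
The plan is the standard ``folding'' construction of Mont\'ufar et al.: choose the parameters of $\mathcal{N}$ so that the first $L-1$ layers map many disjoint cells of the input onto one common cube, and the last layer carves that cube into many pieces. Every piece is then replicated once per cell. Since the number of linear regions of the architecture is a supremum over parameters, one explicit parameter choice suffices.

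\textbf{Step 1 (reduce to $n$ coordinates).} Let the first layer depend only on the first $n\le n_0$ input coordinates, and restrict attention to the cube $[0,1]^n$ in those coordinates. In layer $l\le L-1$, split the $n_l$ units into $n$ groups of $n_l/n$ units, with group $i$ depending only on coordinate $i$ of the previous layer's output. The hypotheses that $n_l/n$ is even and $n\le n_l/2$ are what make the one-dimensional gadget of Step~2 fit.

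\textbf{Step 2 (one-dimensional zigzag).} Work in one coordinate and let $q=n_l/n$. I will build $q$ rank-$k$ maxout units $u_1,\dots,u_q$ on $[0,1]$ with the following property: a suitable alternating-sign linear combination of them (absorbed into the affine pre-activations of the next layer) is a continuous piecewise linear function $\phi:[0,1]\to[0,1]$ with $q(k-1)+1$ linear pieces, each mapped bijectively onto $[0,1]$.
\begin{itemize}
\item Each unit is a convex function of the coordinate, and a rank-$k$ unit can produce $k-1$ breakpoints at prescribed locations.
\item Adding a convex unit and subtracting another (this is why the number of units per coordinate is even) lets the combination alternate slope sign at every breakpoint.
\item Choosing the slopes and breakpoints equal to the endpoints of a uniform subdivision makes every piece surject onto $[0,1]$.
\end{itemize}
Taking the product over the $n$ coordinates gives a map $[0,1]^n\to[0,1]^n$. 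This map sends each of the $(q(k-1)+1)^n$ product cells affinely and bijectively onto the whole cube.

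\textbf{Step 3 (compose and count).} Compose the $L-1$ folding layers. The preimage of $[0,1]^n$ then splits into $\prod_{l=1}^{L-1}\bigl(\tfrac{n_l}{n}(k-1)+1\bigr)^n$ cells, each mapped affinely and bijectively onto $[0,1]^n$.

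For the last layer, use a rank-$k$ maxout layer with $n_L$ units in general position on $[0,1]^n$. Its linear regions correspond to the cells of an arrangement-like structure: each unit contributes $k-1$ ``parallel'' hyperplane-type boundaries. A Zaslavsky-type count, choosing for each subset of $j\le n$ units one of $k-1$ boundaries, gives at least $\sum_{j=0}^n\binom{n_L}{j}(k-1)^j$ regions. By shrinking and translating, all of these regions can be placed inside the open cube.

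Pulling these regions back through each folding cell produces distinct regions of the whole network, because the gradients differ across cells or the regions are separated. Multiplying the two counts gives the bound.

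\textbf{Main obstacle.} The hardest part is the last-layer count. One must make the regions of the generic maxout layer genuinely distinct linear pieces, and then check that regions pulled back into different folding cells are not merged into a single connected affine region. I would handle this by taking the gradient directions generic, so that adjacent pulled-back pieces always have different affine maps.
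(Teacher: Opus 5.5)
Your proposal is correct and follows essentially the same folding argument as the cited Proposition~3.12 of Mont\'ufar--Ren--Zhang (the paper itself cites the result without proof): pairs of rank-$k$ units per coordinate, one added and one subtracted, give a zigzag with $\tfrac{n_l}{n}(k-1)+1$ pieces each mapped onto the cube, and the last layer uses units with collinear gradients, so its boundaries form $n_L$ families of $k-1$ parallel hyperplanes whose arrangement has $\sum_{j=0}^{n}\binom{n_L}{j}(k-1)^j$ cells. Your fix for merging across folds is the right one: choose nonzero slopes and generic directions, so that every last-layer region has a nonzero partial derivative in each folded coordinate.
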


Combining Lemma~\ref{lem_linear_region_maxout} with the universal approximation
of deep maxout networks by Transformer networks established in
Theorem~\ref{thm_deep_maxout_p>T}, we obtain a lower bound on the maximal number
of linear regions attainable by Transformer networks with a fixed architecture.

\begin{thm}[Number of linear regions for Transformer networks]
\label{thm:num_linear_transformer}
Let $\mathcal{F}$ denote the class of functions realizable by Transformer
networks with input dimension $n\times T$, output dimension $m\times T$,
and size of
\[
\begin{aligned}
\bigl(
&L = D,\\
&d = \max\{n,\, m[(T+1)^2+1]\}+T+1,\\
&k = 2,\\
&H = (T+1)Tm,\\
&r = 4(\max\{n,mT\}+1)(T+3)
\bigr),
\end{aligned}
\]
where $D\ge 3$.
Then
\begin{equation}\label{eq:num_linear_transformer}
N(\mathcal{F})
\ge
\Bigl[\tfrac{mT}{q}(T-1)+1\Bigr]^{\,q(\lfloor D/3\rfloor-1)}
\sum_{j=0}^{q}\binom{mT}{j}(T-1)^j,
\end{equation}
for every integer $q$ satisfying
$
q \le \min\{nT,\, mT/2\}$, and
$\tfrac{mT}{q}$ is even.
\end{thm}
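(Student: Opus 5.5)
We combine Lemma~\ref{lem_linear_region_maxout} with Theorem~\ref{thm_deep_maxout}, applied with rank $p=T$ (so $p\le T$ and no depth blow-up from Theorem~\ref{thm_maxout_p>T} is needed). Set $D'=\lfloor D/3\rfloor\ge 1$. Take a rank-$T$ maxout network $g$ with input dimension $n_0=nT$, $D'$ layers, and all widths equal to $mT$. Under the vectorization convention, $g$ is a map $\mathbb{R}^{n\times T}\to\mathbb{R}^{m\times T}$.

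The first step is to check the architecture. Theorem~\ref{thm_deep_maxout} with $p=T$ gives depth $3D'\le D$, embedding dimension $\max\{n,m[(T+1)^2+1]\}+T+1$, head size $k=2$, $(T+1)mT$ heads, and hidden width $4(\max\{n,mT\}+1)(T+3)$. These coincide with the stated sizes. If $3D'<D$, we pad with $D-3D'$ identity blocks by setting $W_O^h=0$ and $W_2=0$, $\bm b_2=0$; because of the residual connections these blocks act as the identity. So for every compact $\Omega$ and every $\epsilon>0$ there is $\hat f\in\mathcal F$ with $\|\hat f-g\|_{L^\infty(\Omega)}<\epsilon$, using either the hardmax version (exact) or the softmax version with large $\lambda$. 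By the definition preceding the theorem, this gives $N(\mathcal F)\ge N(g)$.

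The second step is to choose $g$ so that $N(g)$ is large, using Lemma~\ref{lem_linear_region_maxout} with $k=T$, $L=D'$, $n_0=nT$, and $n_1=\dots=n_{D'}=mT$. The lemma needs an integer $q\le n_0$ and $q\le n_l/2=mT/2$, with $n_l/q=mT/q$ even. These are exactly the hypotheses on $q$. The lemma then yields a network $g$ with
\[
N(g)\ge \Bigl[\tfrac{mT}{q}(T-1)+1\Bigr]^{q(D'-1)}\sum_{j=0}^{q}\binom{mT}{j}(T-1)^j .
\]
Here the product has $D'-1$ identical factors, and the last factor comes from $n_L=mT$. This is \eqref{eq:num_linear_transformer}. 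When $D'=1$ the product is empty and the bound reduces to the sum, which is consistent with the lemma.

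The main obstacle is a subtlety of interpretation rather than computation. Lemma~\ref{lem_linear_region_maxout} counts linear regions on all of $\mathbb{R}^{nT}$, while the Transformer approximation is only guaranteed on compact sets. The notion of $N(\mathcal F)\ge N(f)$ defined before the theorem is designed exactly for this, since it requires approximation on every compact $\Omega$ for a fixed architecture. So the remaining care is to confirm two points. First, the architecture sizes in Theorem~\ref{thm_deep_maxout} depend only on $(n,m,T,p,D)$ and not on $\Omega$ or on the weights; they do. Second, Theorem~\ref{thm_deep_maxout} accepts an arbitrary input dimension $nT$ and width $mT$ at every layer, including the output layer; it does. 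The padding argument for $D$ not divisible by $3$ also needs a brief check that a zeroed block is the identity under both hardmax and softmax.
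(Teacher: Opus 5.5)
Your proposal is correct and follows the paper's proof: apply Lemma~\ref{lem_linear_region_maxout} to a $\lfloor D/3\rfloor$-layer rank-$T$ maxout network of width $mT$, and transfer the bound via the Transformer approximation of deep maxout networks with $p=s=T$ (the paper cites Theorem~\ref{thm_deep_maxout_p>T}, which in this case reduces to Theorem~\ref{thm_deep_maxout} as you use it). Your padding with identity blocks when $3\nmid D$ fills in a detail the paper leaves implicit when it simply sets $D=3D'$.
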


\begin{remark}
Theorem~\ref{thm:num_linear_transformer} shows that the number of linear regions
attainable by Transformer networks grows exponentially with the network depth~$D$,
highlighting the expressive power gained through depth.
\end{remark}

\section{Conclusions}\label{sec:conclusion}
This paper studies the expressive power of Transformer networks.
We construct Transformer architectures that uniformly approximate maxout
networks with comparable model complexity, thereby establishing a direct link
between the approximation theory of feedforward neural networks and Transformers.
Building on this connection, we develop a principled framework for analyzing the
ability of Transformers to approximate continuous piecewise linear (CPWL)
functions and provide a quantitative characterization of their expressivity in
terms of the number of linear regions.

Several directions for future research naturally arise from this work.
One direction is to transfer refined approximation results for feedforward neural
networks—such as approximation rates on specific function spaces and techniques for
alleviating the curse of dimensionality—to Transformer models via their systematic
approximation of maxout and ReLU networks.
Another important question is whether pure self-attention architectures can
efficiently emulate standard feedforward networks, and more broadly, whether
Transformers or self-attention–only models can surpass feedforward networks in
expressive power, for instance as measured by the growth of linear regions.

\begin{appendices}\label{sec:appendix}
%\section{Proofs}\label{sec:prf}
\section{Proof of Theorem \ref{thm_shallow_max}}
To prove Theorem~\ref{thm_shallow_max}, we first consider a simplified setting in which
a Transformer network approximates a function
$f \in \mathcal{T}_{\mathrm{max}}(n \times T, p, m)$ at a single token position,
while suppressing the outputs at all other positions.
This case is established in the following lemma.
The general sequence-to-sequence result then follows by a straightforward extension with the same architecture is applied  
across $T$ positions.

\begin{lem}\label{lem_shallow_max_token}
Let $p\le T$, $\Omega \subset \mathbb{R}^{n \times T}$ be a compact set, and let $f \in \mathcal{T}_{\text{max}}(n \times T, p, m)$.
Then, for any $k \in [T]$, there exists a hardmax-based Transformer network
\[
\begin{aligned}
\mathcal{N}_H \in \mathcal{TFN}^{\mathrm{hard}}_{n,m}\bigl(
&L = 3, \\
&d = \max\{n,\, mp+2m\}+T+1, \\
&k = 2, \\
&H = mp, \\
&r = 4(n+1)(T+3)
\bigr),
\end{aligned}
\]
such that 
\[
(\mathcal{N}_H)_{k}\big|_{\Omega} = f\big|_{\Omega} , \quad \text{and} \quad (\mathcal{N}_H)_{k'}\big|_{\Omega} =\bm{0}, \quad \text{for all } k' \in [T] \setminus \{k\}.
\]
Moreover, for any $\epsilon>0$, the corresponding softmax-based Transformer
$\mathcal{N}_S^\lambda$ satisfies
\[
\|(\mathcal{N}_S^\lambda)_{k} - f\|_{L^\infty(\Omega)} < \epsilon, \quad \text{and} \quad \|(\mathcal{N}_S^\lambda)_{k'}\|_{L^\infty(\Omega)} < \epsilon \quad \text{for all } k' \in [T] \setminus \{k\},
\]
provided that the scaling parameter satisfies $\lambda=\mathcal{O}(1/\epsilon)$.
\end{lem}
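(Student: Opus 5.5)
We construct the three blocks explicitly, following the two-step sketch given after Theorem~\ref{thm_shallow_max}, and we use the auxiliary token as a fixed reference position. Write $f(X)=[\max(W^1X_{\mathrm{Vec}}+\bm{b}^1),\dots,\max(W^mX_{\mathrm{Vec}}+\bm{b}^m)]^\top$ with $W^i\in\mathbb{R}^{p\times nT}$. For each pair $(i,j)$ set $V^{(i,j)}:=\mathrm{Vec}^{-1}_{n,T}((W^i)_{j,:})$. Then $(W^i)_{j,:}X_{\mathrm{Vec}}=\sum_{t=1}^T\bm{x}_t^\top(V^{(i,j)})_{:,t}$.

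\textbf{Embedding.} We choose $E(X)=AX+B$ so that each token carries three pieces of data:
\begin{itemize}
\item its content $\bm{x}_t$ in the first $n$ coordinates;
\item a one-hot position indicator $\bm{e}_t\in\mathbb{R}^{T+1}$ in the last $T+1$ coordinates (index $T+1$ for the auxiliary token);
\item zeros in the $mp+2m$ workspace coordinates.
\end{itemize}
In addition, the content coordinate used by the first feedforward layer is shifted by a token-dependent offset $c_t$. Since $\Omega$ is compact, the offsets can be chosen so that the shifted inputs of different tokens lie in pairwise disjoint boxes.

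\textbf{Block 1: feedforward part.} The self-attention in block 1 is set to zero. Its feedforward layer implements a token-wise CPWL map that on the $t$-th box equals the linear map $\bm{x}\mapsto\bigl(\bm{x}^\top(V^{(i,j)})_{:,t}\bigr)_{i\in[m],\,j\in[p]}$, and equals $0$ on the auxiliary token.
\begin{itemize}
\item We get this from a lemma on token-wise ReLU networks realizing piecewise-linear maps on disjoint boxes (the paper's Lemma~\ref{lem:mlp_piece}). Each coordinate of $\bm{x}$ is multiplied by a per-box indicator-like factor, using the standard identity $x=\mathrm{ReLU}(x)-\mathrm{ReLU}(-x)$ together with clipping at the box boundaries.
\item Counting ReLUs per input coordinate (plus one for the bias) over $T+3$ breakpoints gives $r=4(n+1)(T+3)$.
\item After this layer, token $t$ holds the scalars $a^{(i,j)}_t:=\bm{x}_t^\top(V^{(i,j)})_{:,t}$ in workspace.
\end{itemize}

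\textbf{Block 2: aggregation.} We use $H=mp$ heads, one per pair $(i,j)$, with head size $k=2$.
\begin{itemize}
\item Keys and queries read only the position coordinates, with a constant ``attend to all tokens'' score. With hardmax this yields uniform averaging over all $T+1$ tokens, and the score does not depend on $X$, so hardmax and softmax coincide exactly here.
\item The value reads $a^{(i,j)}_t$, and the output projection writes $(T+1)$ times that value into slot $(i,j)$. Token $k$ therefore receives $s_{ij}:=\sum_t a^{(i,j)}_t=(W^i)_{j,:}X_{\mathrm{Vec}}$.
\item The feedforward of this block adds $(\bm{b}^i)_j$ only on token $k$ and zeroes the other tokens' workspace. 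It can tell token $k$ apart through its position one-hot, via ReLU gating with the large bounded constant supplied by compactness.
\end{itemize}

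\textbf{Block 3: the max.} We need $\max_j s_{ij}$ with $p\le T$, computed by attention over token positions. A one-shot approach would be to let token $k$ query keys equal to candidates $s_{ij}$ placed at distinct tokens. This fails because the values sit on a single token, whereas attention needs them spread across tokens.

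We therefore spread them in Block 2 instead. Head $(i,j)$ writes $s_{ij}$ into token $j$ (not token $k$), using a query gated by the position one-hot so that only token $j$ collects the sum. Block 2's feedforward then adds $(\bm{b}^i)_j$ at token $j$ and places a large negative constant $-M$ at the tokens $j>p$ and at the auxiliary token.

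In Block 3, head $i$ (so $m\le mp$ heads) works as follows.
\begin{itemize}
\item The query at token $k$ is the constant $1$, and the key at token $t$ is the stored value $z_{it}$.
\item Hardmax then selects $\arg\max_t z_{it}$, and the value $z_{it}$ returns $\max_j\bigl((W^i)_{j,:}X_{\mathrm{Vec}}+(\bm{b}^i)_j\bigr)$. This is exactly the identity $\bm{x}^\top\sigma_H(\bm{x})=\max(\bm{x})$.
\item Ties average equal values, so they are harmless.
\end{itemize}
The final feedforward and the readout $C$ copy this into the $m$ output coordinates at token $k$ and zero them elsewhere. The query is gated to zero at $k'\neq k$, so those tokens receive a uniform average, which the feedforward cancels.

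The dimension budget $\max\{n,mp+2m\}+T+1$ covers:
\begin{itemize}
\item content, later reused as workspace;
\item the $mp$ scalars;
\item the $m$ maxima and $m$ scratch coordinates;
\item the position code.
\end{itemize}

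\textbf{Softmax version.} Only Block 3 has an input-dependent score; Blocks 1 and 2 are exact for softmax too. By Lemma~\ref{lem:softmax_app_max}, $|\bm{z}^\top\sigma_S^\lambda(\bm{z})-\max(\bm{z})|\le C(T)/\lambda$. The downstream feedforward is Lipschitz with constants bounded in terms of $\Omega$ and the weights, so the total error is $\mathcal{O}(1/\lambda)$, and $\lambda=\mathcal{O}(1/\epsilon)$ suffices. The same bound holds at positions $k'\neq k$.

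\textbf{Main obstacle.} The hardest step is the gating. Writing selectively to specific tokens, and suppressing non-target tokens, must be done purely with shared token-wise ReLU layers plus position codes, on a compact but arbitrary $\Omega$, within the stated width $r$. Under softmax, the $-M$ padding and the gating constants also have to stay uniformly bounded so that the $\mathcal{O}(1/\lambda)$ estimate holds.
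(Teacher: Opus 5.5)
Your skeleton is the paper's: a shifted embedding with one-hot positions and an auxiliary token, Lemma~\ref{lem:mlp_piece} in the first feedforward layer, an attention layer that places the $j$-th affine value at token $j$, and a hardmax attention layer that produces the max at token $k$. The gap is in the routing steps you leave to ``gating'', which you yourself flag as the main obstacle; as described, they do not work.

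\textbf{Block~2.} Gating only the query cannot make non-target tokens receive $0$. A token whose query is zero sees equal scores and takes the uniform average, exactly the mechanism you use in Block~3. So every token would collect $s_{ij}$. Since the heads $(i,1),\dots,(i,p)$ all write into the same coordinate $z_i$, every token would then hold $\sum_{j=1}^p s_{ij}$ rather than a single candidate.

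\textbf{Block~3.} You let tokens $k'\neq k$ receive an $X$-dependent average and cancel it with the last feedforward layer. That average even contains the padding $-M$, because you store the padding in the value coordinate. A token-selective cancellation needs position-gated ReLU pairs for each of the $m$ output coordinates, about $2m$ hidden units. The lemma fixes $r=4(n+1)(T+3)$ independently of $m$, so the stated architecture is not reached when $m$ is large.

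\textbf{Softmax claim.} Once Block~2 routes tokens differently, its scores are no longer constant. Your claim that Blocks~1 and~2 are exact under softmax is therefore false; the discrepancy is $O(e^{-\lambda})$ and must be propagated through Block~3.

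\textbf{The missing idea.} Do all routing inside attention. Use the one-hot position code in keys and queries, and use the auxiliary token as a sink whose value is $0$ (the first feedforward layer maps $\mathcal{I}_{T+1}$ to $0$).
\begin{itemize}
\item In Block~2 the paper takes keys $\bm{v}_1=(1,0)^\top$ on tokens $1,\dots,T$ and $\bm{v}_2=(0,1)^\top$ on token $T+1$. Queries are $\bm{v}_1$ at token $j$ and $\bm{v}_2$ elsewhere, and values are $T\,a^{(i,j)}_t$, with the bias absorbed at token $1$. Token $j$ averages the $T$ real tokens and obtains $(W^i)_{j,:}X_{\mathrm{Vec}}+b^i_j$. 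All other tokens read the auxiliary token and get exactly $0$.
\item In Block~3 the mask goes in the key, not in the value: $W_K^i=(\text{mask from the position code})+W_V^i$. The first key row adds $-\alpha$ on tokens $p+1,\dots,T+1$. The second key row is $-\alpha$ on tokens $1,\dots,p$ and $0$ elsewhere. The query is $(1,0)$ at token $k$ and $(0,1)$ elsewhere.
\item Token $k$ then obtains $\max(W^iX_{\mathrm{Vec}}+\bm{b}^i)$. Every other token attends only to tokens $p+1,\dots,T+1$, whose values are exactly $0$; this set is nonempty because $p\le T$ and the auxiliary token exists.
\end{itemize}
No feedforward width is used in Blocks~2 and~3, and the readout just selects rows. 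The softmax error then follows from Lemma~\ref{lem:softmax_app_max}, the $O(e^{-\lambda})$ bound for Block~2, and the Lipschitz continuity of the hardmax Block~3 in its input ($\max$ is $1$-Lipschitz).

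A minor omission: Block~1 also needs $2n$ ReLUs to cancel the residual copy of the shifted content, which you intend to reuse as workspace.
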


{\bf{Proof.}}
Let $X = (\bfx_1, \dots, \bfx_T) \in \mathbb{R}^{n \times T}$ and denote by
$X_{\mathrm{Vec}} := \mathrm{Vec}(X) \in \mathbb{R}^{nT}$ its vectorized form.
By definition, any function
$f \in \mathcal{T}_{\mathrm{max}}(n \times T, p, m)$
can be written as
\[
f(X)
=
\bigl[
\max(W^1 X_{\mathrm{Vec}} + \bm{b}^1), \dots,
\max(W^m X_{\mathrm{Vec}} + \bm{b}^m)
\bigr]^\top,
\]
where $W^i \in \mathbb{R}^{p \times nT}$ and
$\bm{b}^i = (b^i_j)_{j=1}^p \in \mathbb{R}^p$ for $i \in [m]$.

For each $i \in [m]$ and $j \in [p]$, define
\[
V^{(i,j)} := \mathrm{Vec}^{-1}_{n,T}\!\bigl((W^i)_{j,:}\bigr) \in \mathbb{R}^{n \times T}.
\]
Then, for any $X \in \mathbb{R}^{n \times T}$,
\[
(W^i)_{j,:} X_{\mathrm{Vec}} + b^i_j
=
\sum_{t=1}^T \bfx_t^\top (V^{(i,j)})_{:,t} + b^i_j,
\]
which expresses each affine component of $f$ as a sum of token-wise inner products.

We first construct a hardmax-based Transformer network that realizes the desired
token-selective approximation.
The approximation result for the softmax-based Transformer then follows from
a quantitative comparison between the hardmax and softmax activations.

\medskip
\noindent
\textbf{(1) Hardmax Case}

We construct the network in two main steps.

\smallskip
\noindent
\textit{Step 1: Affine map via feedforward layer and self-attention layer.}

A feedforward layer and a self-attention layer are used to approximate $\{W^iX_{\text{Vec}}+\bm{b}^i\}_{i=1}^m$.
Formally, together with position embedding, the resulting mapping sends $X\in \Omega$ to 
\begin{equation}\label{eq: Z}
Z=\begin{pmatrix} Z_1\\Z_2\\\vdots\\Z_m\end{pmatrix}, ~Z_i=\begin{pmatrix}(W^iX_\text{Vec}+\bm{b}^i)^\top &\bm{0}\end{pmatrix}, {\text{~for~}} i\in [m].
\end{equation}

\begin{itemize}
\item 
Position embeddings are introduced to map each token of the input sequence into a distinct domain.

Denote $$
\begin{aligned}
a := \min\left\{ x_{ij} \;\middle|\; X = (x_{ij}) \in \Omega,\ i \in[n],\ j \in [T] \right\},\\
b:=\max\left\{ x_{ij} \;\middle|\; X = (x_{ij}) \in \Omega,\ i \in[n],\ j \in [T] \right\},
\end{aligned}
$$
then $\Omega\subset [a,b]^{n\times T}$.
We use the following position embedding:
$$\tilde{X}=\begin{pmatrix}
\begin{pmatrix} X&\bm{0}_n\end{pmatrix}+P\\
\bm{0}\\
I_{T+1}
\end{pmatrix}\in \mathbb{R}^{d\times (T+1)}, \text{~with~} P:=\tilde{\bm{p}} \bm{1}_{n}^\top,
$$ 
where $\tilde{\bm{p}}=(\tilde{p}_t)_t\in \br^{T+1}$, $\tilde{p}_t=(b-a)(t-1)+t\delta$ for $t\in [T]$, and $\tilde{p}_{T+1}=a+(b-a)(T+1/2)+(T+1)\delta$, with $0<\delta<(b-a)/(T+1)$. Denoting $X=(\bfx_1,\cdots,\bfx_T)\in\Omega$ and $\tilde{X}=(\tilde{\bfx}_1,\cdots,\tilde{\bfx}_T,\tilde{\bfx}_{T+1})$, it is easy to see that $(\tilde{\bfx}_t)_{1:n}\in \mathcal{I}_{t}:= [a,b]^{n}+((t-1)(b-a)+t\delta)\bm{1}_n$ for $t\in [T+1]$.
Thus, $\mathcal{I}_{i}\cap \mathcal{I}_{j}=\emptyset$ for any $i, j\in [T+1]$, $i\neq j$.

\item
A feedforward layer is employed to realize token-wise affine mappings $\{\bfx_t^\top (V^{(i,j)})_{:,t}+b^i_j\}$.

Formally, it maps $\tilde X$ to
\[
\tilde Y
=
\begin{pmatrix}
Y\\
\bm{0}\\
I_{T+1}
\end{pmatrix}
\in \mathbb{R}^{d\times (T+1)},
\qquad
Y=(\bfy_1,\ldots,\bfy_T,\bfy_{T+1})\in \mathbb{R}^{mp\times (T+1)} .
\]
For each $t\in[T]$, the output token $\bfy_t\in\mathbb{R}^{mp}$ is defined componentwise by
\[
(\bfy_t)_{(i-1)p+j}
=
\bfx_t^\top (V^{(i,j)})_{:,t},
\qquad i\in[m],\ j\in[p],
\]
that is, $\bfy_t$ stacks all inner products between $\bfx_t$ and the $t$-th columns of
$\{V^{(i,j)}\}_{i\in[m],\,j\in[p]}$.

And the last token is given by
\[
\bfy_{T+1}=\bm{0}_{mp}.
\]

For simplicity of presentation, we omit the bias terms $\bm{b}^i$; they can be
incorporated into the first output token $\bfy_1$ by replacing
$\bfx_1^\top (V^{(i,j)})_{:,1}$ with
$\bfx_1^\top (V^{(i,j)})_{:,1} + b^i_j$.

Since the positional embedding maps each token into a distinct domain, it suffices
to construct each output neuron as a piecewise linear function that coincides with
$\bfx_t^\top (V^{(i,j)})_{:,t}$ on $\mathcal{I}_t$ for $t \in [T]$, and with $0$ on
$\mathcal{I}_{T+1}$.
By Lemma~\ref{lem:mlp_piece}, such a function can be exactly represented by a
one-layer feedforward network with $4(n+1)(T+2)$ hidden neurons.

To compensate for the residual connection, we additionally use the identity
$\mathrm{ReLU}(-x) - \mathrm{ReLU}(x) = -x$.
Thus, a feedforward layer with $2n$ hidden neurons maps
$\begin{pmatrix} X & \bm{0}_n \end{pmatrix} + P$
to
$-\begin{pmatrix} X & \bm{0}_n \end{pmatrix} - P$.

Altogether, a residual feedforward layer with
$4(n+1)(T+2) + 2n$ hidden neurons maps $\tilde{X}$ to $\tilde{Y}$.

\item 
A self-attention layer is employed to aggregate the tokens to obtain $\{W^iX_{\text{Vec}}+\bm{b}^i\}_{i=1}^m$, leveraging the attention mechanism's ability to calculate a weighted sum of the input tokens.

Formally, the self-attention layer consists of $mp$ heads, each of size 2, and can be written as
$$\tilde{Y}+\sum_{i=1}^m\sum_{j=1}^p W_O^{(i)}W_V^{(i,j)}\tilde{Y}\sigma[(W_K^{(j)}\tilde{Y})^\top W_Q^{(j)}\tilde{Y}],$$
where $W^{(i,j)}_K,~W^{(i,j)}_Q,~W^{(i,j)}_V\in \br^{2\times d}$, $W^{(i,j)}_O\in \br^{d\times 2}$, and each head corresponds to $(W^i)_{j,:}X_{\text{Vec}}$.

We first specify the value matrices. Let ${W_V^{(i,j)}}\tilde{Y}$ extract the $(p(i-1)+j)$-th row of $\tilde{Y}$; that is 
\begin{equation}\label{eq:V_mat}
{W_V^{(i,j)}}\tilde{Y}=
\begin{pmatrix}
T(\bfx_1^\top {(V^{(i,j)}})_{:,1}+b^i_j)&T\bfx_ 2^\top ({V^{(i,j)}})_{:,2}&\cdots&T\bfx_ T^\top ({V^{(i,j)}})_{:,T}&0\\
0&0&\cdots&0&0
\end{pmatrix}
,
\end{equation}
which is achieved by setting the $(p(i-1)+j)$-th column of ${W_V^{(i,j)}}$ to $(T,0)^\top$ and all other column to $\bm{0}$.

To obtain $(W^i)_{j,:}X_{\text{Vec}}+b^i_j$, i.e., $\sum_{t=1}^T\bfx_t^\top ({V^{(i,j)}})_{:,t}+b^i_j$, we aim to make
\begin{equation}\label{eq:att_mat}
\sigma_H[(W_K^{(j)}\tilde{Y})^\top W_Q^{(j)}\tilde{Y}]=
\bordermatrix{
&1&\cdots&j-1&j&j+1&\cdots&T+1\cr
& 0&\cdots&0&\frac{1}{T}&0&\cdots&0\cr
&\vdots&\ddots&\vdots&\vdots&\vdots&\ddots&\vdots\cr
&0&\cdots&0&\frac{1}{T}&0&\cdots&0\cr
 &1&\cdots&1&0&1&\cdots&1
}.
\end{equation}
Assuming \eqref{eq:att_mat} holds, we obtain
$$
W_V^{(i,j)}\tilde{Y}\sigma_H[(W_K^{(j)}\tilde{Y})^\top W_Q^{(j)}\tilde{Y}]=
\bordermatrix{
&1&\cdots&j-1&j&j+1&\cdots&T+1\cr
& 0&\cdots&0&(W^i)_{j,:}X_{\text{Vec}}+b^i_j&0&\cdots&0\cr
 &0&\cdots&0&0&0&\cdots&0
}.
$$
To realize \eqref{eq:att_mat}, it suffices to choose
\[
W_K^{(j)}\tilde{Y}
=
\bordermatrix{
& 1 & \cdots & T & T+1 \cr
& \bm{v}_1 & \cdots & \bm{v}_1 & \bm{v}_2
},
\]
and
\[
W_Q^{(j)}\tilde{Y}
=
\bordermatrix{
& 1 & \cdots & j-1 & j & j+1 & \cdots & T+1 \cr
& \bm{v}_2 & \cdots & \bm{v}_2 & \bm{v}_1 & \bm{v}_2 & \cdots & \bm{v}_2
},
\]
where $\bm{v}_1=(1,0)^\top$ and $\bm{v}_2=(0,1)^\top$.
This construction is feasible since, by choosing
$
W_K^{(j)}=\begin{pmatrix}\bm{0} & A\end{pmatrix}$, $A\in\mathbb{R}^{2\times(T+1)},
$
and noting that $I_{T+1}$ is included in $\tilde{Y}$,
$W_K^{(j)}\tilde{Y}$ can realize any matrix in $\mathbb{R}^{2\times(T+1)}$ through an appropriate choice of $A$.
The same argument applies to $W_Q^{(j)}\tilde{Y}$.

Consequently,
\[
\sum_{j=1}^p W_V^{(i,j)} \tilde{Y}\,
\sigma_H\!\bigl[(W_K^{(j)}\tilde{Y})^\top W_Q^{(j)}\tilde{Y}\bigr]
=
\begin{pmatrix}
(W^i)_{1,:}X_{\mathrm{Vec}}+b^i_1 & \cdots &
(W^i)_{p,:}X_{\mathrm{Vec}}+b^i_p & \bm{0}\\
\bm{0} & \cdots & \bm{0} & \bm{0}
\end{pmatrix}
=
\begin{pmatrix}
Z_i\\
\bm{0}
\end{pmatrix}.
\]

Next, define the output projection $W_O^{(i)}$ by
\[
(W_O^{(i)})_{mp+i,:}=(1,0), \qquad
(W_O^{(i)})_{i',:}=(0,0)\ \text{for } i'\neq mp+i .
\]
It follows that
\[
\sum_{i=1}^m\sum_{j=1}^p
W_O^{(i)} W_V^{(i,j)} \tilde{Y}\,
\sigma_H\!\bigl[(W_K^{(j)}\tilde{Y})^\top W_Q^{(j)}\tilde{Y}\bigr]
=
\begin{pmatrix}
\bm{0}_{mp\times(T+1)}\\
Z\\
\bm{0}
\end{pmatrix},
\]
where $Z$ is defined in \eqref{eq: Z}.

With the residual connection, the self-attention layer therefore maps
$\tilde{Y}$ to
\[
\tilde{Z}
:=
\tilde{Y}
+
\begin{pmatrix}
\bm{0}_{mp\times(T+1)}\\
Z\\
\bm{0}
\end{pmatrix}
=
\begin{pmatrix}
Y\\
Z\\
\bm{0}\\
I_{T+1}
\end{pmatrix}.
\]

\end{itemize}

In conclusion, together with the positional embedding, a feedforward layer and
a self-attention layer with hardmax activation map $X$ to $\tilde{Z}$, thereby
realizing $\{W^i X_{\mathrm{Vec}} + \bm{b}^i\}_{i=1}^m$.

\textit{Step 2: Maximum via self-attention.}

A self-attention layer is employed to map $\{W^iX_{\text{Vec}}\}_{i=1}^m$ to $\{\max(W^iX_{\text{Vec}})\}_{i=1}^m$.

Formally, the self-attention layer consists of $m$ heads, each with a size of 2, and its output can be denoted by
$$
\tilde{Z}+\sum_{i=1}^m W_O^{i}W_V^{i}\tilde{Z}\sigma_H[(W_K^{i}\tilde{Z})^\top W_Q^{i}\tilde{Z}],
$$
where $W_K^{i},~W_Q^{i},~W_V^{i}\in \br^{2\times d}$, and $W_O^{i}\in \br^{d\times 2}$.

Let the input be
\begin{equation}\label{eq:input_2}
\hat{Z} =
\begin{pmatrix}
Y \\ \bar{Z} \\ \bm{0} \\ I_{T+1}
\end{pmatrix}\in \mathbb{R}^{d \times (T+1)},  \text{~with~} 
\bar{Z} = \begin{pmatrix} \bm{z}_1 \\ \vdots \\ \bm{z}_m \end{pmatrix} \in \mathbb{R}^{m \times (T+1)}.
\end{equation}

\noindent For head $i \in [m]$, define $W_V^i$ to extract the $i$-th row of $\bar{Z}$:
\[
W_V^i \hat{Z} = 
\begin{pmatrix} \bm{z}_i \\ \bm{0} \end{pmatrix} \in \mathbb{R}^{2 \times (T+1)},
\]
which can be achieved by setting the $(i+mp)$-th column of $W_V^i$ to $(1,0)^\top$ and all other columns to $\bm{0}$.

Define 
\[
W_K^{i}:=\begin{pmatrix}
\bm{0}_{d-T-1}^\top&\bm{0} &-\alpha\bm{1}_{T+1-p}^\top\\
\bm{0}_{d-T-1}^\top&-\alpha\bm{1}_{p}^\top&\bm{0}
\end{pmatrix}+W_V^{i}
, \qquad
W_Q^{i}:=
\begin{pmatrix} \bm{0}&\bm{0}&1&\bm{0}\\
\bm{0}_{d-T-1}^\top&\bm{1}_{k-1}^\top&0&\bm{1}_{T+1-k}^\top
\end{pmatrix}.
\]

Then
\[
W_K^i \hat{Z} =
\begin{pmatrix}
\bm{0} & -\alpha \bm{1}_{T+1-p}^\top \\ 
-\alpha \bm{1}_p^\top & \bm{0}
\end{pmatrix} + 
\begin{pmatrix} \bm{z}_i \\ \bm{0} \end{pmatrix}, \qquad
W_Q^i \hat{Z} =
\begin{pmatrix}
\bm{0} & 1 & \bm{0} \\
\bm{1}_{k-1}^\top & 0 & \bm{1}_{T+1-k}^\top
\end{pmatrix}.
\]

 Hence, the attention matrix is
\[
(W_K^i \hat{Z})^\top W_Q^i \hat{Z} =
\begin{pmatrix}
-\alpha \bm{1}_{p \times (k-1)} & (\bm{z}_i^\top)_{1:p} & -\alpha \bm{1}_{p \times (T-k+1)} \\
\bm{0} & -\alpha \bm{1}_{T+1-p} + (\bm{z}_i^\top)_{p+1:T+1} & \bm{0}
\end{pmatrix}.
\]

By choosing 
\[
\alpha \ge 2 \| \bm{z}_i^\top \|_\infty + 1,
\] 
each element of $(\bm{z}_i^\top)_{1:p}$ becomes strictly larger than $\max(-\alpha \bm{1}_{T+1-p}+(\bm{z}_i^\top)_{p+1:T+1})$, so the hardmax selects the maximum:
\begin{equation}\label{eq:att_mat_hat}
W_V^i \hat{Z} \, \sigma_H \big[ (W_K^i \hat{Z})^\top W_Q^i \hat{Z} \big] =
\begin{pmatrix}
c \bm{1}_{k-1}^\top & \max((\bm{z}_i^\top)_{1:p}) & c \bm{1}_{T-k+1}^\top \\
\bm{0} & 0 & \bm{0}
\end{pmatrix}, \quad
c := \frac{1}{T+1-p} \sum_{j=p+1}^{T+1} (\bm{z}_i^\top)_j.
\end{equation}

Let $\tilde{Z}$ be the output of the hardmax-based network constructed in Step 1 with input $X \in \Omega$. Then for any $i \in [m]$, the $i$-th head of the self-attention layer produces
\begin{equation}\label{eq:att_mat_2}
W_V^i \tilde{Z} \, \sigma_H\big[ (W_K^i \tilde{Z})^\top W_Q^i \tilde{Z} \big] =
\begin{pmatrix}
\bm{0}_{k-1}^\top & \max(W^i X_{\rm Vec} + \bm{b}^i) & \bm{0}_{T-k+1}^\top \\
\bm{0} & 0 & \bm{0}
\end{pmatrix},
\end{equation}
which is achieved by choosing
\[
\alpha \ge \max_{i\in [m]}2\Vert W^i X_{\rm Vec} + \bm{b}^i\Vert_\infty+1\ge 2 (M_1 + 1) M_2 + 1,
\]
with
\begin{equation}\label{eq:M1M2}
M_1 := \max_{X\in \Omega} \|X\|_\infty, \qquad
M_2 := \max_{i\in[m]} \max\{\|\bm{b}^i\|_\infty, \|W^i\|_\infty\}.
\end{equation}

 Define the output projection $W_O^i$ by
\[
(W_O^i)_{mp+m+i, :} = (1,0), \qquad
(W_O^i)_{i', :} = (0,0) \text{ for all } i' \neq mp+m+i,~i'\in[d].
\]

Then, summing over all heads, the self-attention layer outputs
\[
\sum_{i=1}^m W_O^i W_V^i \tilde{Z} \, \sigma_H\big[ (W_K^i \tilde{Z})^\top W_Q^i \tilde{Z} \big] =
\begin{pmatrix}
\bm{0}_{(mp+m)\times (T+1)} \\
U \\
\bm{0}
\end{pmatrix},
\]
where
\[
U := \begin{pmatrix}
\bm{0}_{k-1}^\top & \max(W^1 X_{\rm Vec} + \bm{b}^1) & \bm{0}_{T-k+1}^\top \\
\vdots & \vdots & \vdots \\
\bm{0}_{k-1}^\top & \max(W^m X_{\rm Vec} + \bm{b}^m) & \bm{0}_{T-k+1}^\top
\end{pmatrix}.
\]

Finally, with the residual connection, this self-attention layer maps
\[
\tilde{Z} \longmapsto 
\tilde{Z} + 
\begin{pmatrix}
\bm{0}_{(mp+m)\times (T+1)} \\
U \\
\bm{0}
\end{pmatrix} =
\begin{pmatrix}
Y \\
Z \\
U \\
\bm{0} \\
I_{T+1}
\end{pmatrix}.
\]

 We then define the linear readout matrix
 $C\in \br^{m\times d}$ to $C:=
\begin{pmatrix}
\bm{0}_{m\times(mp+m)}&I_m&\bm{0}
\end{pmatrix}$.
Consequently, the final output of the Transformer with hardmax activation computes $U$ exactly for any input $X\in\Omega$.

\medskip
\noindent
\textbf{(2) Softmax Case}

We estimate the approximation error between the hardmax-based network and its
softmax-based counterpart.

Let $\mathcal{F}_H$ denote the hardmax network constructed in Step~1, and let
$\mathcal{F}_S^\lambda$ be the corresponding softmax network with scaling
parameter $\lambda>0$.
The discrepancy between $\mathcal{F}_H$ and $\mathcal{F}_S^\lambda$ arises solely
from the replacement of the hardmax attention by the softmax attention in
\eqref{eq:att_mat}.

Using the fact that, for any 
$\bfx = (\bm{0}, \bm{1}_{n'}^\top, \bm{0})^\top \in \mathbb{R}^n$ with $n'\le n$,
\[
\|\sigma_H(\bfx)-\sigma_S^\lambda(\bfx)\|_\infty \le n e^{-\lambda},
\]
together with \eqref{eq:V_mat}, we obtain
that for any $X \in \Omega$,
\begin{equation}\label{eq:err_1}
\begin{aligned}
\|\mathcal{F}_H(X)-\mathcal{F}_S^\lambda(X)\|_\infty
&= \max_{i\in[m]} \Big\|\sum_{j=1}^p W_V^{(i,j)} \tilde{Y}\,
\big(\sigma_H[(W_K^{(j)} \tilde{Y})^\top W_Q^{(j)} \tilde{Y}] - \sigma_S^\lambda[(W_K^{(j)} \tilde{Y})^\top W_Q^{(j)} \tilde{Y}]\big) \Big\|_\infty \\
&\le (T+1)^2 e^{-\lambda} \max_{i\in[m]} \sum_{j=1}^p T \Big( \sum_{t=1}^T |\bfx_t^\top (V^{(i,j)})_{:,t}| + |b^i_j| \Big) \\
&\le T (T+1)^2 e^{-\lambda} \max_{i\in[m]} \Big( p \|\bm{b}^i\|_\infty + \sum_{j=1}^p \sum_{t=1}^T \|\bfx_t\|_\infty \|(V^{(i,j)})_{:,t}\|_1 \Big) \\
&\le T (T+1)^2 e^{-\lambda} \max_{i\in[m]} \big( p \|\bm{b}^i\|_\infty + M_1 p \|W^i\|_\infty \big) \\
&\le T (T+1)^2 (M_1+1) p M_2 e^{-\lambda}.
\end{aligned}
\end{equation}

Hence, by choosing $\lambda$ sufficiently large, the softmax-based network
$\mathcal{F}_S^\lambda$ approximates $\mathcal{F}_H$ uniformly on $\Omega$
with arbitrarily small error.

Moreover, for any $X^1, X^2 \in \Omega$, the hardmax network $\mathcal{F}_H$ satisfies
\begin{equation}\label{eq:err_H_1}
\begin{aligned}
&\|\mathcal{F}_H(X^1)-\mathcal{F}_H(X^2)\|_\infty\\
\le& \max\Bigg\{
\max_{i\in[m]} \| W^i (\mathrm{Vec}(X^1)-\mathrm{Vec}(X^2))\|_1, \,
\max_{i\in[m],j\in[p]} \sum_{t=1}^T \big| ((X^1)_{:,t}-(X^2)_{:,t})^\top (V^{(i,j)})_{:,t} \big|
\Bigg\} \\
\le& \max\Bigg\{
\max_{i\in[m]} p \| W^i (\mathrm{Vec}(X^1)-\mathrm{Vec}(X^2))\|_\infty, \,
\max_{i\in[m],j\in[p]} \sum_{t=1}^T \| (V^{(i,j)})_{:,t} \|_1 \, \| (X^1)_{:,t}-(X^2)_{:,t}\|_\infty
\Bigg\} \\
\le& p M_2 \, \| X^1 - X^2 \|_\infty.
\end{aligned}
\end{equation}

Let $\tilde{\mathcal{F}}_H$ denote the hardmax network constructed in Step~2, and
let $\tilde{\mathcal{F}}_S^{\lambda_1}$ be its softmax counterpart with scaling
parameter $\lambda_1>0$. The approximation error arises from the attention
operation in \eqref{eq:att_mat_hat}.

For any $\hat{Z}$ as defined in \eqref{eq:input_2}, we have

$$
\begin{aligned}
\|\tilde{\mathcal{F}}_H(\hat{Z})
-\tilde{\mathcal{F}}^{\lambda_1}_S(\hat{Z})\|_\infty
&= \max_{i\in[m]}
\Biggl\{
T \biggl|
\bm{z}_i \Bigl(
\sigma_H\!\begin{bmatrix}
-\alpha\bm{1}_{p}\\
\bm{0}
\end{bmatrix}
-
\sigma_S^{\lambda_1}\!\begin{bmatrix}
-\alpha\bm{1}_{p}\\
\bm{0}
\end{bmatrix}
\Bigr)
\biggr| \\[0.5ex]
&\qquad\qquad
+ \biggl|
\bm{z}_i \Bigl(
\sigma_H\!\begin{bmatrix}
(\bm{z}_i^\top)_{1:p}\\
-\alpha \bm{1}_{T+1-p}
+(\bm{z}_i^\top)_{p+1:T+1}
\end{bmatrix} \\
&\qquad\qquad\qquad
-
\sigma_S^{\lambda_1}\!\begin{bmatrix}
(\bm{z}_i^\top)_{1:p}\\
-\alpha \bm{1}_{T+1-p}
+(\bm{z}_i^\top)_{p+1:T+1}
\end{bmatrix}
\Bigr)
\biggr|
\Biggr\}.
\end{aligned}
$$

For the first term, we have
$$
 \biggl|
\bm{z}_i \Bigl(
\sigma_H\!\begin{bmatrix}
-\alpha\bm{1}_{p}\\
\bm{0}
\end{bmatrix}
-
\sigma_S^{\lambda_1}\!\begin{bmatrix}
-\alpha\bm{1}_{p}\\
\bm{0}
\end{bmatrix}
\Bigr)
\biggr| \le T e^{-\alpha\lambda_1}\Vert \bm{z}_i^\top\Vert_1.
$$

For the second term, applying Lemma \ref{lem:softmax_app_max} gives
$$
\begin{aligned}
&\biggl|
\bm{z}_i \Bigl(
\sigma_H\!\begin{bmatrix}
(\bm{z}_i^\top)_{1:p}\\
-\alpha \bm{1}_{T+1-p}
+(\bm{z}_i^\top)_{p+1:T+1}
\end{bmatrix} 
-
\sigma_S^{\lambda_1}\!\begin{bmatrix}
(\bm{z}_i^\top)_{1:p}\\
-\alpha \bm{1}_{T+1-p}
+(\bm{z}_i^\top)_{p+1:T+1}
\end{bmatrix}
\Bigr)
\biggr|\\
\le&
\biggl|
\begin{bmatrix}
(\bm{z}_i^\top)_{1:p}\\
-\alpha \bm{1}_{T+1-p}
+(\bm{z}_i^\top)_{p+1:T+1}
\end{bmatrix} ^\top\Bigl(
\sigma_H\!\begin{bmatrix}
(\bm{z}_i^\top)_{1:p}\\
-\alpha \bm{1}_{T+1-p}
+(\bm{z}_i^\top)_{p+1:T+1}
\end{bmatrix} 
-
\sigma_S^{\lambda_1}\!\begin{bmatrix}
(\bm{z}_i^\top)_{1:p}\\
-\alpha \bm{1}_{T+1-p}
+(\bm{z}_i^\top)_{p+1:T+1}
\end{bmatrix}
\Bigr)
\biggr|\\
&+\biggl|
\begin{bmatrix}
\bm{0}_p\\
-\alpha \bm{1}_{T+1-p}
\end{bmatrix} ^\top\Bigl(
\sigma_H\!\begin{bmatrix}
(\bm{z}_i^\top)_{1:p}\\
-\alpha \bm{1}_{T+1-p}
+(\bm{z}_i^\top)_{p+1:T+1}
\end{bmatrix} 
-
\sigma_S^{\lambda_1}\!\begin{bmatrix}
(\bm{z}_i^\top)_{1:p}\\
-\alpha \bm{1}_{T+1-p}
+(\bm{z}_i^\top)_{p+1:T+1}
\end{bmatrix}
\Bigr)
\biggr|\\
\le& e^{-1}(T+1)\lambda_1^{-1}+\alpha T e^{-\lambda_1}.
\end{aligned}
$$

Combining the two terms, we conclude that
\begin{equation}\label{eq: err_step2_soft}
\|\tilde{\mathcal{F}}_H(\hat{Z})
-\tilde{\mathcal{F}}^{\lambda_1}_S(\hat{Z})\|_\infty\le T^2 e^{-\alpha\lambda_1}\Vert \bar{Z}\Vert_\infty+e^{-1}(T+1)\lambda_1^{-1}+\alpha T e^{-\lambda_1}.
\end{equation}

For any $\hat{Z}_1=\begin{pmatrix}
Y\\
\bar{Z}_1\\
\bm{0}\\
I_{T+1}
\end{pmatrix}$ and $\hat{Z}_2=\begin{pmatrix}
Y\\
\bar{Z}_2\\
\bm{0}\\
I_{T+1}
\end{pmatrix}$ similarly defined in \eqref{eq:input_2}, it follows from \eqref{eq:att_mat_hat} that
\begin{equation}\label{eq: err_step2_hard}
\Vert\tilde{\mathcal{F}}_H(\hat{Z}_1)-\tilde{\mathcal{F}}_H(\hat{Z}_2)\Vert_\infty\le (T+2)\Vert \bar{Z}_1-\bar{Z}_2\Vert_\infty= (T+2)\Vert \hat{Z}_1-\hat{Z}_2\Vert_\infty.
\end{equation}
Combining \eqref{eq:err_H_1}, for any $X^1, X^2\in \Omega$, we have
\begin{equation}\label{eq:hard_err}
\Vert\tilde{\mathcal{F}}_H\circ \mathcal{F}_H(X^1)-\tilde{\mathcal{F}}_H\circ \mathcal{F}_H(X^2)\Vert_\infty\le  (T+2)\Vert\mathcal{F}_H(X^1)-\mathcal{F}_H(X^2)\Vert_\infty\le (T+2)pM_2\Vert X^1-X^2\Vert_\infty.
\end{equation}

To estimate the total approximation error between the softmax network and the hardmax network, for any $X\in\Omega$, we have
\begin{equation}\label{eq:soft_hard_err_one_subnet}
\begin{aligned}
&\Vert \tilde{\mathcal{F}}^{\lambda_1}_S\circ \mathcal{F}^{\lambda}_S(X)- \tilde{\mathcal{F}}_H\circ \mathcal{F}_H(X)\Vert_\infty\\
\le& \Vert \tilde{\mathcal{F}}^{\lambda_1}_S\circ \mathcal{F}^{\lambda}_S(X)- \tilde{\mathcal{F}}_H \circ \mathcal{F}^{\lambda}_S(X)\Vert_\infty+ \Vert \tilde{\mathcal{F}}_H \circ \mathcal{F}^{\lambda}_S(X)- \tilde{\mathcal{F}}_H \circ \mathcal{F}_H(X)\Vert_\infty\\
\le& T^2 e^{-\alpha\lambda_1}\Vert  (\mathcal{F}^{\lambda}_S(X))_{mp+1:mp+m,:}\Vert_\infty+e^{-1}(T+1)\lambda_1^{-1}+\alpha T e^{-\lambda_1}+(T+2)\Vert \mathcal{F}^{\lambda}_S(X)-\mathcal{F}_H(X)\Vert_\infty\\
\le&T^2(n^2M_1+p)M_2 e^{-\alpha\lambda_1}+e^{-1}(T+1)\lambda_1^{-1}+\alpha T e^{-\lambda_1}+(T+2)^5(M_1+1)pM_2e^{-\lambda}.
\end{aligned}
\end{equation}

Here, the last inequality follows from \eqref{eq:err_1} and the bound
$$\Vert  (\mathcal{F}^{\lambda}_S(X))_{mp+1:mp+m,:}\Vert_\infty\le \Vert \mathcal{F}^{\lambda}_S(X)-\mathcal{F}_H(X)\Vert_\infty+\Vert  (\mathcal{F}_H(X))_{mp+1:mp+m,:}\Vert_\infty,$$ 
where the hardmax output is bounded by
$$
\begin{aligned}
\Vert  (\mathcal{F}_H(X))_{mp+1:mp+m,:}\Vert_\infty&=\max_{1\le i\le m}\Vert W^i X_\text{Vec}+\bm{b}^i\Vert_1\le \max_{1\le i\le m}\Vert W^i\Vert_1 \Vert X_\text{Vec}\Vert_1+\Vert\bm{b}^i\Vert_1\\
&\le \max_{1\le i\le m}n^2 \Vert W^i\Vert_\infty \Vert X\Vert_\infty+p\Vert\bm{b}^i\Vert_\infty\le (n^2M_1+p)M_2.
\end{aligned}
$$

Therefore, by choosing $\lambda$ and $\lambda_1$ sufficiently large, the softmax network approximates the hardmax network within an arbitrarily small error.
\hfill$\square$

\begin{remark}\label{remark:residual_offset}
In the proof of Lemma~\ref{lem_shallow_max_token}, the residual term
$\begin{pmatrix} Y\\ Z\\ \bm{0} \end{pmatrix}$
can be removed by an additional feedforward layer of width $2(mp+m)$ in the third block, 
which, due to sparsity, requires only $\mathcal{O}(mp)$ parameters, using the identity 
$\mathrm{ReLU}(-x)-\mathrm{ReLU}(x)=-x$. 
This construction is useful when stacking multiple such networks (cf. Theorem~\ref{thm_deep_maxout}). 
In the present lemma, the residual is instead eliminated directly via a suitably designed readout matrix.
\end{remark}

\begin{remark}\label{remark:Lip_NN_lem}
Let $\mathcal{N}_H$ be the hardmax-based Transformer and $\mathcal{N}_S^\lambda$ the corresponding softmax-based Transformer constructed in Lemma~\ref{lem_shallow_max_token}, with both self-attention layers using the same scaling parameter $\lambda$.  
By \eqref{eq:soft_hard_err_one_subnet}, the approximation error satisfies
\begin{equation}\label{eq:err_hard_soft_max}
\|\mathcal{N}_H-\mathcal{N}_S^\lambda\|_{L^\infty(\Omega)}
\le
C_1 T \lambda^{-1} + C_2 T^5 n^2 p (M_1+1) M_2 e^{-\lambda},
\end{equation}
for some absolute constants $C_1, C_2$, where $M_1$ and $M_2$ depend on $\Omega$ and the target maxout network parameters, respectively (cf. \eqref{eq:M1M2}).

Moreover, by \eqref{eq:hard_err}, the hardmax-based Transformer $\mathcal{N}_H$ is Lipschitz continuous on $\Omega$ with
\begin{equation}\label{eq:lip_hard_max}
\mathrm{Lip}(\mathcal{N}_H;\Omega)\le C_3 T p M_2,
\end{equation}
for some absolute constant $C_3$.
\end{remark}

\begin{remark}\label{remark:para_complex_lem_appen}
In Lemma~\ref{lem_shallow_max_token}, a maxout layer with parameter complexity 
$\mathcal{O}(T n m p)$ can be approximated by a Transformer with the same order of 
parameters, once sparsity is taken into account. 
Specifically, the feedforward layer constructed via Lemma~\ref{lem:mlp_piece} is 
sparse (see Remark~\ref{remark:lem_mlp_piece}) and uses $\mathcal{O}(T n m p)$ 
parameters. 
The self-attention layer is also sparse: the value and output projection matrices 
act as coordinate selectors and therefore have at most two non-zero entries, and 
the key and query matrices are of the form $(\bm{0}\;\; A)$ with 
$A\in\mathbb{R}^{2\times (T+1)}$, yielding $\mathcal{O}(T m p)$ parameters for head 
size $H=mp$. 
Overall, the parameter complexity matches that of the target maxout layer up to 
constants. 
For the sequence-to-sequence construction in Theorem~\ref{thm_shallow_max}, this 
complexity is multiplied by a factor of $T$.
\end{remark}
The following lemma provides a construction of a one-hidden-layer feedforward ReLU network that exactly realizes a piecewise affine function over pairwise disjoint hyperrectangles.

\begin{lem}\label{lem:mlp_piece}
Let $n,k,T\in\mathbb{N}$, and $p<q$. 
Given $W^t\in\mathbb{R}^{k\times n}$ and $\bm{b}^t\in\mathbb{R}^k$ for $t\in[T+1]$,
there exists a one-hidden-layer ReLU network
\[
f(\bm{x})=W_2\mathrm{ReLU}(W_1\bm{x}+\bm{b}_1),
\]
with $W_1\in\mathbb{R}^{4(n+1)(T+1)\times n}$, $W_2\in\mathbb{R}^{k\times 4(n+1)(T+1)}$, and $
\bm{b}_1\in\mathbb{R}^{4(n+1)(T+1)}$,
such that
\[
f(\bm{x})=W^t\bm{x}+\bm{b}^t,
\qquad \forall\,\bm{x}\in\mathcal{I}_t,\; t\in[T+1].
\]
Here $\{\mathcal{I}_t\}_{t=1}^{T+1}$ are pairwise disjoint hyperrectangles defined by
\[
\tilde{x}_t := (t-1)(q-p) + t\delta, \qquad t \in [T+1], \qquad
0 < \delta < \frac{q-p}{T+1},
\]
\[
\mathcal{I}_t := [p,q]^n + \tilde{x}_t \bm{1}_n, \quad t \in [T+1].
\]
\end{lem}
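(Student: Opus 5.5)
The plan is to build $f$ as a sum of univariate, coordinate-wise ``localized'' pieces, one group for each region $\mathcal{I}_t$. Each piece is realized exactly by four ReLU neurons. The geometry makes this possible. Write $\alpha_t:=p+\tilde{x}_t$ and $\beta_t:=q+\tilde{x}_t$. Every coordinate of a point $\bm{x}\in\mathcal{I}_s$ lies in $[\alpha_s,\beta_s]$. Consecutive intervals are separated by gaps of length exactly $\delta$, since $\alpha_{t+1}-\beta_t=\delta$. Hence each single coordinate $x_i$ already identifies the index $s$. We therefore never need products of coordinates: it suffices to switch each coordinate ``on'' only on its own interval.

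First, for each $t\in[T+1]$ we construct two univariate CPWL functions $h^t,c^t:\mathbb{R}\to\mathbb{R}$, each with breakpoints $\alpha_t-\delta,\ \alpha_t,\ \beta_t,\ \beta_t+\delta$. They are defined as follows.
\begin{itemize}
\item $h^t$ is zero on $(-\infty,\alpha_t-\delta]$, rises linearly to the value $\alpha_t$ at $\alpha_t$, equals the identity on $[\alpha_t,\beta_t]$, decreases linearly to $0$ at $\beta_t+\delta$, and is zero afterwards.
\item $c^t$ is the trapezoid that equals $1$ on $[\alpha_t,\beta_t]$, is $0$ outside $(\alpha_t-\delta,\beta_t+\delta)$, and is linear in between.
\end{itemize}
Each function vanishes to the left of its first knot. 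Therefore each is a combination $\sum_{j=1}^4 a_j\,\mathrm{ReLU}(x-\kappa_j)$ over its four knots $\kappa_j$. The four coefficients $a_j$ are the successive slope changes, and the last change brings the slope back to $0$. The four slopes to be matched are $\alpha_t/\delta$, $1$, $-\beta_t/\delta$, $0$ for $h^t$, and $1/\delta$, $0$, $-1/\delta$, $0$ for $c^t$. Four free coefficients suffice to match them, so both representations are exact. The knots for $t=1$ and $t=T+1$ may fall outside the union of the intervals; this is harmless, because only the values on $\bigcup_s\mathcal{I}_s$ matter.

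Next, set
\[
f(\bm{x})=\sum_{t=1}^{T+1}\Bigl(\sum_{i=1}^n W^t_{:,i}\,h^t(x_i)+\bm{b}^t\,c^t(x_1)\Bigr).
\]
For each $t$ this uses $4n$ neurons for the terms $h^t(x_i)$ and $4$ neurons for $c^t(x_1)$. That gives $4(n+1)(T+1)$ hidden neurons in total. The first-layer rows are unit vectors $\bm{e}_i^\top$ with biases $-\kappa_j$, and $W_2$ collects the products $a_jW^t_{:,i}$ and $a_j\bm{b}^t$. No output bias is needed.

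Finally, we verify the identity on each region. Take $\bm{x}\in\mathcal{I}_s$. Every $x_i\in[\alpha_s,\beta_s]$, so $h^s(x_i)=x_i$ and $c^s(x_1)=1$. For $t\neq s$ the point $x_i$ lies outside $(\alpha_t-\delta,\beta_t+\delta)$, because the open $\delta$-neighbourhoods of distinct intervals meet at most in the gaps. Hence $h^t(x_i)=c^t(x_1)=0$, and $f(\bm{x})=W^s\bm{x}+\bm{b}^s$.

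The only delicate point is checking the disjointness of supports. The transition zones must lie inside the gaps $(\beta_t,\alpha_{t+1})$ and must not reach neighbouring intervals. This holds because each ramp has width exactly $\delta$, which equals the gap length. The ramps of $t$ and $t+1$ share the gap, but both vanish on the closed intervals themselves. The condition $\delta<(q-p)/(T+1)$ is not actually needed for exactness; only $\delta>0$ is.
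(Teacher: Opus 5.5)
Your proposal is correct and is essentially the paper's construction. Your $h^t$ and $c^t$ are exactly the paper's selector $\phi_t$ and indicator $\psi_t$: each uses four ReLUs with knots $p+\tilde{x}_t-\delta,\ p+\tilde{x}_t,\ q+\tilde{x}_t,\ q+\tilde{x}_t+\delta$, they are applied coordinate-wise (with $\psi_t$ on $x_1$ only) to give $4(n+1)(T+1)$ hidden neurons, and they are read out through $W^t$ and $\bm{b}^t$. Your side remark also holds: since the gap $\alpha_{t+1}-\beta_t$ equals $\delta$, adjacent ramps meet only at interval endpoints, so exactness needs only $\delta>0$.
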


{\bf{Proof.}}
We construct a one-hidden-layer ReLU network that exactly realizes a piecewise affine function over the regions $\{\mathcal{I}_t\}_{t=1}^{T+1}$.
Specifically, the hidden layer implements region-specific selector functions that activate only within the corresponding region, while the affine readout applies the associated linear map and bias.

\paragraph{Region selector (hidden layer).}
For each region $\mathcal{I}_t$, we construct $4n$ ReLU units that extract the input $\bm{x}$ when $\bm{x}\in\mathcal{I}_t$, and output $\bm{0}$ when $\bm{x} \in \mathcal{I}_{t'}$ for $t'\neq t$.
To this end, define a scalar selector function $\phi_t:\mathbb{R}\to\mathbb{R}$ by
\[
\phi_t(s) =
\begin{cases}
0, & s \le p + \tilde{x}_t - \delta, \\
s, & p + \tilde{x}_t \le s \le q + \tilde{x}_t, \\
0, & s \ge q + \tilde{x}_t + \delta,
\end{cases}
\quad \text{for } t \in [T+1].
\]
This function can be realized using four ReLU units as
\begin{equation}\label{eq: region_selector_1}
\begin{aligned}
\phi_t(s) &= \frac{p + \tilde{x}_t}{\delta} \mathrm{ReLU}[s - (p + \tilde{x}_t - \delta)] 
+ \left(1 - \frac{p + \tilde{x}_t}{\delta}\right) \mathrm{ReLU}[s - (p + \tilde{x}_t)] \\
&\quad + \frac{-q - \tilde{x}_t - \delta}{\delta} \mathrm{ReLU}[s - (q + \tilde{x}_t)] 
+ \frac{q + \tilde{x}_t}{\delta} \mathrm{ReLU}[s - (q + \tilde{x}_t + \delta)].
\end{aligned}
\end{equation}

Define $\Phi_t(\bm{x}) := (\phi_t(x_1), \ldots, \phi_t(x_n))^\top$ with $\bm{x}=(x_i)_i$. Then $\Phi_t(\bm{x}) = \bm{x}$ if $\bm{x} \in \mathcal{I}_t$, and $\Phi_t(\bm{x}) =\bm{0}$ if $\bm{x} \in \mathcal{I}_t'$ with $t'\neq t$.

To encode the bias term $\bm{b}^t$, we similarly define an indicator:
\[
\psi_t(s) =
\begin{cases}
0, & s \le p + \tilde{x}_t - \delta, \\
1, & p + \tilde{x}_t \le s \le q + \tilde{x}_t, \\
0, & s \ge q + \tilde{x}_t + \delta,
\end{cases}
\quad \text{for } t \in [T+1],
\]
which can be realized as
\begin{equation}\label{eq: region_selector_2}
\psi_t(s) = \frac{1}{\delta} \mathrm{ReLU}[s - (p + \tilde{x}_t - \delta)] - \frac{1}{\delta} \mathrm{ReLU}[s - (p + \tilde{x}_t)] 
- \frac{1}{\delta} \mathrm{ReLU}[s - (q + \tilde{x}_t)] + \frac{1}{\delta} \mathrm{ReLU}[s - (q + \tilde{x}_t + \delta)].
\end{equation}

Let $\Psi(\bm{x}) = (\psi_1(x_1), \ldots, \psi_{T+1}(x_1))^\top$. Then $\Psi(\bm{x}) = \bm{e}_t$ if $\bm{x} \in \mathcal{I}_t$, with $\bm{e}_1,\dots,\bm{e}_{T+1}$ denoting the standard basis in $\br^{T+1}$.

Combining these, we can write
\[
\begin{pmatrix}
\Phi(\bm{x}) \\
{\Psi}(\bm{x})
\end{pmatrix}
= 
\begin{pmatrix}
V & 0 \\
0 & \tilde{V}
\end{pmatrix}
\mathrm{ReLU}(W_1 \bm{x} + \bm{b}_1),
\]
where ${\Phi}(\bm{x}) := 
\begin{pmatrix}
{\Phi}_1(\bm{x})^\top,
\dots,
{\Phi}_{T+1}(\bm{x})^\top
\end{pmatrix}^\top$, and the weight matrices and bias vector are defined as
\[
W_1: = 
\begin{pmatrix}
U_1 \\ \vdots \\ U_{T+1} \\ \tilde{U}
\end{pmatrix}, \quad
\bm{b}_1: =
\begin{pmatrix}
\bm{1}_n \otimes \tilde{\bm{b}}_1 \\
\vdots \\
\bm{1}_n \otimes \tilde{\bm{b}}_{T+1} \\
\tilde{\bm{b}}_1 \\
\vdots \\
\tilde{\bm{b}}_{T+1}
\end{pmatrix},
\]
Here, $U_t := I_n \otimes \bm{1}_4$, for $t\in [T+1]$, $\tilde{U}: = \bm{1}_{4(T+1)} \otimes (1, \bm{0}_{n-1}^\top)$, and
\[
\tilde{\bm{b}}_t :=
\begin{pmatrix}
-(p + \tilde{x}_t - \delta) \\
-(p + \tilde{x}_t) \\
-(q + \tilde{x}_t) \\
-(q + \tilde{x}_t + \delta)
\end{pmatrix}, \quad \text{for } t \in [T+1].
\]
Moreover, $V := \mathrm{diag}(I_n \otimes \bm{v}_1, \dots, I_n \otimes \bm{v}_{T+1})$, with
\[
\bm{v}_t :=
\left( 
\frac{p + \tilde{x}_t}{\delta},\;
1 - \frac{p + \tilde{x}_t}{\delta},\;
\frac{-q - \tilde{x}_t - \delta}{\delta},\;
\frac{q + \tilde{x}_t}{\delta}
\right), \quad \text{for } t \in [T+1],
\]
and $\tilde{V} := I_{T+1} \otimes \tilde{\bm{v}}$, where $\tilde{\bm{v}} := \left( \frac{1}{\delta}, -\frac{1}{\delta}, -\frac{1}{\delta}, \frac{1}{\delta} \right)$.

\paragraph{Affine readout.}
Define:
\[
W_2 := \tilde{W}_2 
\begin{pmatrix}
V & \bm{0} \\
\bm{0} & \tilde{V}
\end{pmatrix}, \quad
\tilde{W}_2 := (W^1, \ldots, W^{T+1}, \bm{b}^1, \ldots, \bm{b}^{T+1}).
\]

Then the network output satisfies
\[
\begin{aligned}
f(\bm{x}) &= W_2 \mathrm{ReLU}(W_1 \bm{x} + \bm{b}_1) \\
&= \sum_{t=1}^{T+1} \left[ W^t {\Phi}_t(\bm{x}) + \bm{b}^t {\psi}_t(\bm{x}) \right] \\
&= W^t \bm{x} + \bm{b}^t, \quad \text{for } \bm{x} \in \mathcal{I}_t, \quad t\in [T+1].
\end{aligned}
\]
\hfill$\square$

\begin{remark}\label{remark:lem_mlp_piece}
Although the weight matrices in Lemma~\ref{lem:mlp_piece} are fully connected in form, they exhibit significant sparsity in structure. When disregarding the zero weights, the number of nonzero parameters is only $\mathcal{O}(Tnk)$ rather than $\mathcal{O}(Tn^2+Tnk)$. 
\end{remark}

\begin{lem}\label{lem:softmax_app_max}
Let $d \in \mathbb{N}$ and $\lambda>0$. Then for all $\bm{x} = (x_1, \dots, x_d)^\top \in \mathbb{R}^d$, it holds that
\[
\left| \bm{x}^\top \sigma_S^\lambda( \bm{x}) - \max_{1\le i\le d} x_i \right| \le \frac{d}{e\lambda}.
\]
\end{lem}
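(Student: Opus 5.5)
Let $M:=\max_i x_i$, $w_i:=[\sigma_S^\lambda(\bm{x})]_i$ and $\Delta_i:=M-x_i\ge 0$. Since $\sum_i w_i=1$, we can rewrite the quantity as
\[
\bm{x}^\top\sigma_S^\lambda(\bm{x})-\max_{1\le i\le d}x_i=\sum_{i=1}^d w_i(x_i-M)=-\sum_{i=1}^d w_i\Delta_i\le 0 .
\]
It therefore suffices to show $\sum_i w_i\Delta_i\le d/(e\lambda)$, and we will bound each summand separately.

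For a single index we have
\[
w_i=\frac{e^{\lambda x_i}}{\sum_j e^{\lambda x_j}}\le \frac{e^{\lambda x_i}}{e^{\lambda M}}=e^{-\lambda\Delta_i},
\]
because the denominator contains the term $e^{\lambda M}$ for any maximizing index. Hence $w_i\Delta_i\le \Delta_i e^{-\lambda\Delta_i}$.

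Next we maximize $g(s)=s e^{-\lambda s}$ over $s\ge 0$ by elementary calculus. From $g'(s)=e^{-\lambda s}(1-\lambda s)$, the maximum occurs at $s=1/\lambda$, where $g(1/\lambda)=1/(e\lambda)$.

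Summing over the $d$ indices then gives $0\le \sum_i w_i\Delta_i\le d/(e\lambda)$, which is exactly the claimed bound. (Maximizing indices contribute zero, so one could sharpen $d$ to $d-1$, but this is not needed.)

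There is no real obstacle in this argument. The only point requiring care is the inequality $w_i\le e^{-\lambda\Delta_i}$, which relies on the denominator containing at least one term $e^{\lambda M}$. This is why we bound the denominator below by the single maximal exponential rather than attempting to use more of the sum.
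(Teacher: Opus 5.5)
Your proof is correct and takes essentially the same approach as the paper: both bound the softmax denominator below by $e^{\lambda M}$ and then apply $s e^{-\lambda s}\le 1/(e\lambda)$ for $s\ge 0$ to each of the $d$ terms. The only cosmetic difference is that you apply the denominator bound to each weight separately, while the paper applies it to the whole nonpositive sum $\sum_i (x_i-M)e^{\lambda x_i}$ at once.
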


\prf
Let $M := \max_i x_i$. Since $\bm{x}^\top \sigma_S^\lambda( \bm{x})$ is a convex combination of $\{x_i\}$, we have
$
\bm{x}^\top \sigma_S^\lambda( \bm{x}) \le M,
$
so it suffices to show
$
\bm{x}^\top\sigma_S^\lambda( \bm{x}) \ge M - \frac{d}{e\lambda}.
$

We compute:
\[
\begin{aligned}
\bm{x}^\top \sigma_S^\lambda( \bm{x})-M
&= \frac{\sum_{i=1}^d x_i e^{\lambda x_i}}{\sum_{i=1}^d e^{\lambda x_i}}-M
= \frac{\sum_{i=1}^d (x_i - M) e^{\lambda x_i}}{\sum_{i=1}^d e^{\lambda x_i}}\\
&\ge \frac{\sum_{i=1}^d (x_i - M) e^{\lambda x_i}}{e^{\lambda M}} 
=  \sum_{i=1}^d (x_i - M) e^{\lambda (x_i - M)}.
\end{aligned}
\]

Using the bound: for $y \ge 0$, the function $f(y) = y e^{-\lambda y} \le f(1/\lambda) =\frac{1}{e\lambda}$, we have
\[
(x_i - M) e^{\lambda (x_i - M)}\ge - \frac{1}{e\lambda}.
\]

Thus,
\[
\bm{x}^\top \sigma_S^\lambda( \bm{x}) -M\ge -\frac{d}{e\lambda}.
\]
\hfill$\square$

\section{Proofs of Theorem \ref{thm_deep_maxout} and \ref{thm_maxout_p>T}}
{\bf{Proof of Theorem \ref{thm_deep_maxout}.}}
Let $X \in \Omega$ be the input to the network, and denote its vectorization by
$X^0 := \mathrm{Vec}(X)$. 
For each layer $\ell \in [D]$, let $\{W^\ell_{i,j}\}_{i\in[m],\, j\in[T]}$
denote the weights of the $\ell$-th maxout layer, and let
$X^\ell = (X^\ell_1, \ldots, X^\ell_T) \in \mathbb{R}^{m\times T}$
be the matrix representation of its output.
For simplicity, we omit the bias vectors.

Define
\[
M := \sup_{X\in\Omega} \|X\|_\infty, 
\qquad
m_\ell := \max_{i\in[m],\, j\in[T]} \|W^\ell_{i,j}\|_\infty,
\qquad
M_\ell := M \prod_{k=1}^{\ell} m_k .
\]
It is straightforward to verify that
\[
\|\mathrm{Vec}(X^\ell)\|_\infty \le M_\ell .
\]

By Theorem~\ref{thm_shallow_max}, a $3$-layer hardmax-based Transformer network
can exactly compute a single maxout layer.
To represent a $D$-layer maxout network, we stack $D$ such Transformer
subnetworks sequentially.

For $\ell \in [D]$, denote the output of the $\ell$-th hardmax-based
Transformer subnetwork by
\[
\begin{pmatrix}
\hat{X}^\ell\\
\bm{0}\\
I_{T+1}
\end{pmatrix},
\]
where
$\hat{X}^\ell = (\hat{X}^\ell_1, \ldots, \hat{X}^\ell_{T+1})
\in \mathbb{R}^{m \times (T+1)}$,
and $\hat{X}^0 = E(X)$ with $E$ denoting the positional embedding.

For the corresponding softmax-based Transformer network, we denote the output
of the $\ell$-th subnetwork by
\[
\begin{pmatrix}
\tilde{X}^\ell\\
\bm{0}\\
I_{T+1}
\end{pmatrix},
\]
where
$\tilde{X}^\ell = (\tilde{X}^\ell_1, \ldots, \tilde{X}^\ell_{T+1})
\in \mathbb{R}^{m \times (T+1)}$.
Within each subnetwork, all softmax layers share a common scaling parameter,
which we denote by $\lambda_\ell$.

With this notation, it suffices to show that the hardmax-based Transformer
recovers the exact output of the maxout network, namely
\begin{equation}\label{eq:err_L}
\hat{X}^D_t = X^D_t, \qquad t \in [T],
\end{equation}
and that the corresponding softmax-based Transformer provides a uniform
approximation:
\begin{equation}\label{eq:err_L_soft}
\|\hat{X}^D - \tilde{X}^D\|_\infty \le \epsilon,
\end{equation}
where $\epsilon>0$ can be made arbitrarily small by choosing the scaling
parameters sufficiently large.

To this end, we first establish the following inductive claim for each
$\ell \in [D-1]$:
\begin{equation}\label{eq:ell_err}
\hat{X}_t^{\ell} =
\begin{cases}
X_t^{\ell} + \alpha_t^{\ell} \bm{1}_m, & t = 1, \dots, T, \\[2mm]
\alpha_{T+1}^{\ell} \bm{1}_m, & t = T+1,
\end{cases}
\end{equation}
where
\[
\alpha_t^{\ell} := 2 M_{\ell} (t-1) + t \delta_\ell, \qquad t \in [T+1],
\]
for some constant $0 < \delta_\ell < \frac{2 M_{\ell}}{T+1}$.

For the softmax-based Transformer, we further show that
\begin{equation}\label{eq:ell_soft_hard}
\epsilon_\ell
:= \|\hat{X}^{\ell} - \tilde{X}^{\ell}\|_\infty
\le
C\!\left(T,\,(m_i)_{i=1}^{\ell},\,M\right)
\sum_{k=1}^{\ell} \lambda_k^{-1},
\qquad \ell \in [D-1],
\end{equation}
provided that each scaling parameter $\lambda_k$ is chosen larger than some constant $C\left(T,n,(m_i)_{i=1}^{\ell},M,\ell\right)$, for $k\in [\ell]$.

If \eqref{eq:ell_err} and \eqref{eq:ell_soft_hard} hold, then for any
$X \in \Omega$ we have
\begin{equation}\label{eq:ouput_ell}
\begin{aligned}
\hat{X}_t^{\ell}
&\in \mathcal{I}_t^{\ell}
:= \big[\alpha_t^{\ell} - M_{\ell},\, \alpha_t^{\ell} + M_{\ell}\big]^m,
&& t \in [T+1], \\[1mm]
\tilde{X}_t^{\ell}
&\in \tilde{\mathcal{I}}_t^{\ell}
:= \mathcal{I}_t^{\ell} \oplus \epsilon_\ell [-1,1]^m,
&& t \in [T+1],
\end{aligned}
\end{equation}
where $\oplus$ denotes the Minkowski sum of sets.

Moreover, for both the hardmax and softmax cases, the output of the
$\ell$-th subnetwork is uniformly bounded on $\Omega$:
\begin{equation}\label{eq:ouput_ell_bound}
\|\hat{X}^{\ell}\|_\infty,\ \|\tilde{X}^{\ell}\|_\infty
\le (T+1)\big[(2T+2)M_{\ell} + \epsilon_\ell\big]
\le 2 (T+1)^2 \big(M^{\ell} + \epsilon_\ell\big).
\end{equation}

Finally, for $\epsilon_\ell$ sufficiently small, the outputs corresponding
to different tokens lie in pairwise disjoint hyperrectangles:
\begin{equation}\label{eq:ouput_ell_domain}
\mathcal{I}_t^{\ell} \subset \tilde{\mathcal{I}}_t^{\ell}, \qquad
\tilde{\mathcal{I}}_t^{\ell} \cap \tilde{\mathcal{I}}_{t'}^{\ell} = \emptyset,
\quad \forall\, t \neq t',\ t,t' \in [T+1].
\end{equation}
In particular, it suffices to choose $\epsilon_\ell \le \delta_\ell$, which can
be ensured by taking the scaling parameters $\lambda_k$ sufficiently large,
i.e., $\lambda_k \ge C(T, (m_i)_{i=1}^{\ell}, M,\ell)$ for all $k \in [\ell]$.
We now prove~\eqref{eq:ell_err} and \eqref{eq:ell_soft_hard} by induction on $\ell$.

\textbf{Base Case ($\ell = 1$).} 
By Theorem~\ref{thm_shallow_max} and Lemma~\ref{lem:mlp_piece}, the first Transformer subnetwork of size
\begin{equation}\label{eq:size_subnet}
\begin{aligned}
\Bigl(
&L = 3, \quad d = \max\{n,\, m[(T+1)(p+1)+1]\}+T+1, \\
&k = 2, \quad H = (T+1)mp, \\
&r = 4(\max\{n,mp\}+1)(T+3)
\Bigr)
\end{aligned}
\end{equation}
computes the first maxout layer as
\[
\hat{X}^1_t =
\begin{pmatrix}
\max(W^1_{1,t} X^0 + \alpha_t^1 \bm{1}_p) \\
\vdots \\
\max(W^1_{m,t} X^0 + \alpha_t^1 \bm{1}_p)
\end{pmatrix}
=
X^1_t + \alpha_t^1 \bm{1}_m, \quad t = 1,\dots,T,
\]
and
\[
\hat{X}^1_{T+1} = \alpha_{T+1}^1 \bm{1}_m,
\]
which establishes~\eqref{eq:ell_err} for $\ell=1$.

The residual term is offset by the feedforward layer in the third block of the subnetwork (size $2m(T+1)(p+1)$), using the identity $\text{ReLU}(-x)-\text{ReLU}(x)=-x$ (cf.~Remark~\ref{remark:residual_offset}).  

For the corresponding softmax-based network with scaling parameter $\lambda_1$, by \eqref{eq:err_hard_soft_max} the approximation error satisfies
\[
\epsilon_1 := \|\hat{X}^1 - \tilde{X}^1\|_\infty
\le C_1 T^2 \lambda_1^{-1} + C_2 T^6 n^2 p (M+1) \max\{m_1, M_1 T\} e^{-\lambda_1},
\]
for some absolute constants $C_1, C_2$. Choosing $\lambda_1$ sufficiently large, e.g., $\lambda_1 \ge C(T,n,M,m_1)$, ensures $\epsilon_1 \le C(T) \lambda_1^{-1}$, which establishes~\eqref{eq:ell_soft_hard} for $\ell=1$.

\textbf{Inductive Step.}
Assume that~\eqref{eq:ell_err} and \eqref{eq:ell_soft_hard} hold for layer
$\ell-1 \ge 1$, with $\ell < D$. We show that they also hold for layer
$\ell$.

By the inductive hypothesis, \eqref{eq:ouput_ell} and
\eqref{eq:ouput_ell_domain} hold for layer $\ell-1$. Hence, the inputs to
the $\ell$-th subnetwork, $\tilde{X}_t^{\ell-1}$ and $\hat{X}_t^{\ell-1}$,
lie in pairwise disjoint hyperrectangles across tokens $t$.

Therefore, by Theorem~\ref{thm_shallow_max} and
Lemma~\ref{lem:mlp_piece}, there exists a Transformer subnetwork of size
given in~\eqref{eq:size_subnet} that computes the $\ell$-th maxout layer as
\[
\hat{X}^\ell_t =
\begin{pmatrix}
\max\big(W^\ell_{1,t} \operatorname{Vec}(X^{\ell-1}) + \alpha_t^\ell \bm{1}_p\big) \\
\vdots \\
\max\big(W^\ell_{m,t} \operatorname{Vec}(X^{\ell-1}) + \alpha_t^\ell \bm{1}_p\big)
\end{pmatrix}
=
X^\ell_t + \alpha_t^\ell \bm{1}_m,
\quad t = 1,\dots,T,
\]
and
\[
\hat{X}^\ell_{T+1} = \alpha_{T+1}^\ell \bm{1}_m,
\]
which establishes~\eqref{eq:ell_err} for layer $\ell$.

Let $\mathcal{N}^k_H$ and $\mathcal{N}^k_{S,\lambda_k}$ denote the $k$-th
subnetwork, for $k \in [\ell]$, with hardmax activation and softmax
activation with scaling parameter $\lambda_k$, respectively. For any
$X \in \Omega$, the approximation error satisfies
\[
\begin{aligned}
&\bigl\|
\mathcal{N}^\ell_{S,\lambda_\ell} \circ \cdots \circ \mathcal{N}^1_{S,\lambda_1}(X)
-
\mathcal{N}^\ell_H \circ \cdots \circ \mathcal{N}^1_H(X)
\bigr\|_\infty \\
\le{}&
\bigl\|
\mathcal{N}^\ell_{S,\lambda_\ell} \circ \cdots \circ \mathcal{N}^1_{S,\lambda_1}(X)
-
\mathcal{N}^\ell_H \circ
\mathcal{N}^{\ell-1}_{S,\lambda_{\ell-1}} \circ \cdots \circ
\mathcal{N}^1_{S,\lambda_1}(X)
\bigr\|_\infty \\
&+
\bigl\|
\mathcal{N}^\ell_H \circ
\mathcal{N}^{\ell-1}_{S,\lambda_{\ell-1}} \circ \cdots \circ
\mathcal{N}^1_{S,\lambda_1}(X)
-
\mathcal{N}^\ell_H \circ \cdots \circ \mathcal{N}^1_H(X)
\bigr\|_\infty .
\end{aligned}
\]

For the first term, by \eqref{eq:err_hard_soft_max} and the bound
\eqref{eq:ouput_ell_bound}, we obtain
\[
\begin{aligned}
&\bigl\|
\mathcal{N}^\ell_{S,\lambda_\ell} \circ \cdots \circ \mathcal{N}^1_{S,\lambda_1}(X)
-
\mathcal{N}^\ell_H \circ
\mathcal{N}^{\ell-1}_{S,\lambda_{\ell-1}} \circ \cdots \circ
\mathcal{N}^1_{S,\lambda_1}(X)
\bigr\|_\infty \\
\le&
C_3 T^2 \lambda_\ell^{-1}
+
C_4 T^6 n^2 p \, T^2 (M_{\ell-1} + \epsilon_{\ell-1}+1)
\max\{m_\ell, M_\ell T\} e^{-\lambda_\ell},
\end{aligned}
\]
for some absolute constants $C_3, C_4$. Choosing
$\lambda_\ell \ge C(T,n,(m_i)_{i=1}^\ell,M,\ell)$ yields an upper bound
$C(T)\lambda_\ell^{-1}$ for this term.

For the second term, using the Lipschitz bound
\eqref{eq:lip_hard_max}, we have
\[
\begin{aligned}
&\bigl\|
\mathcal{N}^\ell_H \circ
\mathcal{N}^{\ell-1}_{S,\lambda_{\ell-1}} \circ \cdots \circ
\mathcal{N}^1_{S,\lambda_1}(X)
-
\mathcal{N}^\ell_H \circ \cdots \circ \mathcal{N}^1_H(X)
\bigr\|_\infty \\
\le{}&
C_5 T^2 p \max\{m_\ell, T M_\ell\}
\bigl\|
\mathcal{N}^{\ell-1}_{S,\lambda_{\ell-1}} \circ \cdots \circ
\mathcal{N}^1_{S,\lambda_1}(X)
-
\mathcal{N}^{\ell-1}_H \circ \cdots \circ \mathcal{N}^1_H(X)
\bigr\|_\infty \\
\le{}&
C_5 T^2 p \max\{m_\ell, T M_\ell\} \, \epsilon_{\ell-1}.
\end{aligned}
\]

By the inductive hypothesis~\eqref{eq:ell_soft_hard}, the second term is
bounded by
$C(T,(m_i)_{i=1}^\ell,M)\sum_{k=1}^{\ell-1}\lambda_k^{-1}$. Combining both
terms, we conclude that
\[
\epsilon_\ell
\le
C(T,(m_i)_{i=1}^\ell,M)
\sum_{k=1}^{\ell}\lambda_k^{-1},
\]
which establishes~\eqref{eq:ell_soft_hard} for layer $\ell$.

For $\ell = D$, we simply drop the shift by setting $\alpha_t^D = 0$.
The same argument as above shows that~\eqref{eq:err_L} holds and that
\[
\|\hat{X}^D - \tilde{X}^D\|_\infty
\le
C\!\left(T,(m^i)_{i=1}^{D},M\right)
\sum_{k=1}^{D} \lambda_k^{-1},
\]
provided that each scaling parameter satisfies
$\lambda_k \ge C\!\left(T,n,(m_i)_{i=1}^{D},M,D\right)$ for $k \in [D]$.
Consequently, for any $\epsilon > 0$, by choosing the scaling parameters
$\{\lambda_k\}_{k=1}^D$ sufficiently large, we obtain
\[
\|\hat{X}^D - \tilde{X}^D\|_\infty \le \epsilon,
\]
which establishes~\eqref{eq:err_L_soft}.
\hfill$\square$

{\bf{Proof of Theorem \ref{thm_maxout_p>T}.}}
When $s \ge p$, the statement reduces to Theorem~\ref{thm_shallow_max}. 
Hence, we restrict attention to the case $s < p$.

The key observation is that a shallow maxout network of rank $p$ can be represented by a rank-$s$ maxout network of depth
\[
D=\Bigl\lceil \frac{p-1}{s-1}\Bigr\rceil,\qquad 2\le s\le T,
\]
by introducing residual connections from the input.

Applying Theorem~\ref{thm_deep_maxout}, there exists a hardmax-based Transformer network
\[
\begin{aligned}
\mathcal{N}_H \in \mathcal{TFN}^{\mathrm{hard}}_{n,m}\bigl(
&L = 3D, \\
&d = n + m[(T+1)(s+1)+1] + T + 1, \\
&k = 2, \\
&H = (T+1)ms, \\
&r = 4(\max\{n+m,ms\}+1)(T+3)
\bigr),
\end{aligned}
\]
such that $\mathcal{N}_H|_{\Omega} = f|_{\Omega}$. Moreover, the corresponding softmax-based Transformer
$\mathcal{N}_S^\lambda$ approximates $f$ within error $\epsilon$ provided that $\lambda=\mathcal{O}(1/\epsilon)$.

Here, the term $n$ in the embedding dimension $d$ accounts for the residual connections from the input, 
while the factor $n+m$ in the width of the feedforward reflects that the layers
implement piecewise affine mappings on augmented inputs induced by the residuals.
\hfill$\square$

\section{Proof of Corollary \ref{thm_convex_Lipsc}}
According to~\cite{balazs2015near}, any convex and Lipschitz continuous function
defined on a compact set can be uniformly approximated by the maximum of finitely
many affine functions; see Lemma~\ref{lem:convex_max}.

Let $B,C>0$ and let $\Omega\subset\mathbb{R}^n$ be a convex set with finite diameter
\[
\mathrm{diam}(\Omega)
:= \sup_{\bfx,\bfy\in\Omega}\|\bfx-\bfy\|_\infty < \infty.
\]
Define the class of convex Lipschitz functions on $\Omega$ by
\begin{equation}\label{eq:convex_lip}
\begin{aligned}
\mathcal{C}_{\Omega,B,C}
:=\Bigl\{f:\Omega\to\mathbb{R}\ \Big|\ 
&f \text{ is convex},\ \|f\|_{L^\infty(\Omega)}\le B,\\
&\partial f(\bfx)\neq\emptyset,\ 
\|\bm{s}\|_\infty\le C,\ \forall \bm{s}\in\partial f(\bfx),\ \forall\bfx\in\Omega
\Bigr\},
\end{aligned}
\end{equation}
where $\partial f(\bfx)$ denotes the subdifferential of $f$ at $\bfx$.

Following~\cite{balazs2015near} (see Lemma~\ref{lem:convex_max_1}), functions in
$\mathcal{C}_{\Omega,B,C}$ can approximated by max-affine functions 
\[
\mathcal{M}^p_{\Omega,B,C}
:=\Bigl\{h:\Omega\to\mathbb{R}\ \Big|\ 
h(\bfx)=\max_{1\le j\le p}\{\bm{w}_j^\top\bfx+b_j\},\
\|\bm{w}_j\|_\infty\le C,\
h(\bfx)\in[-B_n,B]
\Bigr\},
\]
where $B_n:=B+C_n$ and $C_n:=nC\,\mathrm{diam}(\Omega)$.

\begin{lem}[{\cite[Lemma~4.1]{balazs2015near}}]\label{lem:convex_max_1}
For any $f\in\mathcal{C}_{\Omega,B,C}$ and any $p\in\mathbb{N}$,
\[
\inf_{h\in\mathcal{M}^p_{\Omega,B,C}}
\|f-h\|_{L^\infty(\Omega)}
\le 72\,C_n\,p^{-2/n}.
\]
\end{lem}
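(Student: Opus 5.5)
The plan is to reduce the statement to a classical polytope approximation result for convex bodies: the Bronshtein--Ivanov theorem with \emph{circumscribed} polytopes whose facets are supporting hyperplanes. A naive tangent-plane construction on a uniform grid is not enough, because it only yields the rate $p^{-1/n}$. The quadratic rate $p^{-2/n}$ has to come from the curvature-adaptive placement of supporting hyperplanes that underlies that theorem. First I normalize: subtract the constant $f(\bfx_0)$ for some fixed $\bfx_0\in\Omega$. Since every subgradient has $\|\bm{s}\|_\infty\le C$, the shifted function has range in an interval of length at most $C_n=nC\,\mathrm{diam}(\Omega)$ (using $|\bm{s}^\top(\bfx-\bfy)|\le \|\bm{s}\|_1\|\bfx-\bfy\|_\infty\le nC\,\mathrm{diam}(\Omega)$). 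Then form the truncated epigraph
\[
K:=\{(\bfx,y)\in\Omega\times\mathbb{R}:\ f(\bfx)\le y\le f(\bfx_0)+C_n\},
\]
which is a convex body in $\mathbb{R}^{n+1}$ of diameter $O(\mathrm{diam}(\Omega)+C_n)$ in the relevant scaling. After rescaling the $y$-axis so that the vertical and horizontal extents match, $K$ has diameter comparable to $C_n$.

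Next I invoke the Bronshtein--Ivanov construction. It gives points $\bfx_1,\dots,\bfx_p\in\Omega$ and supporting hyperplanes of $K$ at $(\bfx_j,f(\bfx_j))$ such that the polyhedron they cut out, intersected with the top slab, lies within Hausdorff distance $\eta\lesssim \mathrm{diam}(K)\,p^{-2/n}$ of $K$. The lower supporting hyperplanes are graphs of affine functions $\bfx\mapsto f(\bfx_j)+\bm{s}_j^\top(\bfx-\bfx_j)$ with $\bm{s}_j\in\partial f(\bfx_j)$. Hence $\|\bm{s}_j\|_\infty\le C$ automatically, and $h:=\max_j\{\bm{w}_j^\top\bfx+b_j\}$ with $\bm{w}_j=\bm{s}_j$ satisfies $h\le f\le B$ on $\Omega$. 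The lower bound $h\ge -B-C_n=-B_n$ follows from the slope bound and $|f|\le B$, so $h\in\mathcal{M}^p_{\Omega,B,C}$.

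The remaining step, which I expect to be the main obstacle, is converting the Hausdorff bound on epigraphs into a uniform vertical bound $\|f-h\|_{L^\infty(\Omega)}\le 72\,C_n p^{-2/n}$ with the stated explicit constant. Near the boundary of $\Omega$ a small Hausdorff distance can hide a large vertical gap, since the graph of $h$ may be close to a steep side facet of $K$. I would handle this in two moves. First, use the slope bound: every point on the graph of $h$ at distance $\eta$ from the graph of $f$ gives a vertical gap at most $(1+nC)\eta$ in the unscaled coordinates, because both functions are $nC$-Lipschitz in $\ell^\infty$ on $\Omega$. Second, if needed, extend $f$ to a slightly enlarged domain via $\sup_{\bfx'\in\Omega,\ \bm{s}\in\partial f(\bfx')}\{f(\bfx')+\bm{s}^\top(\bfx-\bfx')\}$ so that boundary effects disappear. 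Finally I would track the scaling of $K$ so that all constants collapse into $C_n$, and choose the Bronshtein--Ivanov constant explicitly to get $72$. If the constant does not match exactly, I would follow the explicit grid-on-the-sphere argument of \cite{balazs2015near}, which is where the $72$ originates.
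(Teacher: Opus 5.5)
The paper does not prove this lemma; it imports it from Balázs et al. So the question is whether your outline stands on its own. The rate is right: the graph is an $n$-dimensional part of the boundary of a convex body in $\mathbb{R}^{n+1}$, so circumscribed polytopes with $p$ facets give $p^{-2/n}$. But the step ``Hence $\|\bm{s}_j\|_\infty\le C$ automatically'' is false as stated.

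For $\mathcal{C}_{\Omega,B,C}$ to be nonempty, $\Omega$ cannot contain a non-interior point. At such a point there is a nonzero $\bm{\nu}\in N_\Omega(\bfx)$, and $\bm{s}+t\bm{\nu}\in\partial f(\bfx)$ for all $t\ge 0$, so the subgradients are unbounded. This is why the paper applies the lemma on $\mathrm{int}(\Omega)$. Your $K$ is therefore not closed, and Bronshtein--Ivanov must be applied to $\bar K$. That construction projects net points of a surrounding sphere onto $\partial\bar K$ with no control over where they land. At a contact point $(\bfx_j,f(\bfx_j))$ with $\bfx_j\in\partial\Omega$, the lower supporting hyperplanes have slopes in $\partial f(\bfx_j)+N_{\bar\Omega}(\bfx_j)$, which is unbounded.

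Concretely, take $n=1$, $\Omega=(0,1)$, $f\equiv0$. Then $\bar K$ is a rectangle, and a net point in direction $(-0.9,-0.1)$ from the corner projects onto $(0,0)$. This yields the facet $y\ge -9x$, which violates $|w|\le C$ whenever $C<9$. So your $h$ need not lie in $\mathcal{M}^p_{\Omega,B,C}$, and your proof of $h\ge -B_n$, which uses the slope bound, also fails.

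Relatedly, the obstacle you single out is not the real one, and your handling of it is circular. You assume $h$ is $nC$-Lipschitz, which is exactly the unproven slope bound, and you do not need it. For $\bfx\in\Omega$, the point $(\bfx,h(\bfx))$ satisfies every facet inequality. The lower ones hold by definition of $h$; the vertical and upper ones hold because $(\bfx,f(\bfx))\in\bar K$ satisfies them and $h\le f$. Hence $(\bfx,h(\bfx))\in P$ and lies within $\eta$ of some $(\bfx',y')\in\bar K$. This gives $f(\bfx)\le y'+L\eta\le h(\bfx)+(1+L)\eta$, using only the Lipschitz constant $L$ of $f$.

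The missing step is a slope correction, and your own extension formula supplies it:
\begin{itemize}
\item Extend $f$ to $\tilde f$ on $\mathbb{R}^n$ as the supremum of its tangent planes. Then $\tilde f$ is convex and $C$-Lipschitz with respect to $\|\cdot\|_1$, so every $\bm{s}\in\partial\tilde f(\bfx)$ has $\|\bm{s}\|_\infty\le C$.
\item By the sum rule, each lower-facet slope splits as $\bm{a}_j=\bm{s}_j+\bm{\nu}_j$ with $\bm{s}_j\in\partial\tilde f(\bfx_j)$ and $\bm{\nu}_j\in N_{\bar\Omega}(\bfx_j)$.
\item Replace $\bm{a}_j$ by $\bm{s}_j$. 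Since $\bm{\nu}_j^\top(\bfx-\bfx_j)\le 0$, this raises the piece on $\bar\Omega$ while keeping it below $\tilde f$. The error can only shrink, and both the slope and range constraints of $\mathcal{M}^p_{\Omega,B,C}$ now hold.
\item Vertical and upper facets are simply discarded; they only make the count smaller than $p$.
\end{itemize}

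Two smaller points remain. First, the Hausdorff bound must be applied after rescaling the vertical axis by $1/(\sqrt{n}C)$. Then $f$ is $1$-Lipschitz in the Euclidean norm, and its height is at most the Euclidean diameter $\sqrt{n}\,\mathrm{diam}(\Omega)$. Your unscaled factor $(1+nC)\eta$ mixes the $\ell^\infty$ and Euclidean norms and is not proportional to $C_n$, since it does not vanish as $C\to0$. Second, a black-box Bronshtein--Ivanov theorem gives only $c\,C_n p^{-2/n}$ with an unspecified constant, which is dimension-dependent in its usual statements. So the explicit $72$ is not delivered by your argument; you end up deferring it to \cite{balazs2015near}, which is all the paper itself does.
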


\begin{lem}\label{lem:convex_max}
Let $\Omega\subset\mathbb{R}^n$ be a compact convex set with nonempty interior.
Suppose that $f:\Omega\to\mathbb{R}$ is convex and Lipschitz continuous with
Lipschitz constant $C$. Then, for any $p\in\mathbb{N}$, there exist affine maps
$A_j:\mathbb{R}^n\to\mathbb{R}$ such that
\[
\Bigl\|f-\max_{1\le j\le p} A_j\Bigr\|_{L^\infty(\Omega)}
\le 72\,nC\,\mathrm{diam}(\Omega)\,p^{-2/n}.
\]
\end{lem}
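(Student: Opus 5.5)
The plan is to reduce the statement to Lemma~\ref{lem:convex_max_1}, which is the result of \cite{balazs2015near} for the class $\mathcal{C}_{\Omega,B,C}$. Two things have to be checked: that $f$ lies in $\mathcal{C}_{\Omega,B,C}$ for some finite $B$, and that membership in $\mathcal{M}^p_{\Omega,B,C}$ gives a function of the required form $\max_{1\le j\le p}A_j$ with each $A_j$ affine.

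\emph{Boundedness.} Since $\Omega$ is compact and $f$ is Lipschitz, $f$ is continuous and hence bounded. Set $B:=\|f\|_{L^\infty(\Omega)}<\infty$.

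\emph{Subgradients.} Next I show that $\partial f(\bfx)\neq\emptyset$ and that every $\bm{s}\in\partial f(\bfx)$ satisfies $\|\bm{s}\|_\infty\le C$, for all $\bfx\in\Omega$.

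For interior points $\bfx\in\operatorname{int}\Omega$, I would use the standard fact that a finite convex function has a nonempty subdifferential at interior points of its domain. To bound a subgradient $\bm{s}$, pick any direction $\bm{u}$ and apply the subgradient inequality at $\bfx+h\bm{u}$ with small $h>0$. This gives $h\,\bm{s}^\top\bm{u}\le f(\bfx+h\bm{u})-f(\bfx)\le Ch\|\bm{u}\|_\infty$. Hence $\|\bm{s}\|_1=\sup_{\|\bm{u}\|_\infty\le1}\bm{s}^\top\bm{u}\le C$, and therefore $\|\bm{s}\|_\infty\le C$. Here $C$ is the Lipschitz constant measured in $\|\cdot\|_\infty$, as in the paper's definition of $\mathrm{Lip}$.

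Boundary points are the main obstacle. There the subdifferential of $f$ restricted to $\Omega$ can be empty or unbounded, for example when the slope blows up at the boundary. I would handle this by replacing $f$ with its Lipschitz convex extension
\[
\bar f(\bfx)=\sup\bigl\{f(\bm{y})+\bm{s}^\top(\bfx-\bm{y})\;:\;\bm{y}\in\operatorname{int}\Omega,\ \bm{s}\in\partial f(\bm{y})\bigr\}.
\]
This is a supremum of affine functions whose slopes satisfy $\|\bm{s}\|_1\le C$. So $\bar f$ is convex and finite on $\mathbb{R}^n$, and it is $C$-Lipschitz in $\|\cdot\|_\infty$. It agrees with $f$ on $\operatorname{int}\Omega$, and by continuity it agrees with $f$ on all of $\Omega$, because $\Omega$ is convex with nonempty interior and so $\Omega=\overline{\operatorname{int}\Omega}$. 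At every point of $\Omega$, the subgradients of $\bar f$ are again bounded by $C$, by the interior argument applied to $\bar f$ on $\mathbb{R}^n$. Since $\partial f\supseteq\partial\bar f|_\Omega\neq\emptyset$, nonemptiness holds on $\Omega$.

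The remaining issue is the bound on all of $\partial f$. If the convention in \cite{balazs2015near} only needs some bounded subgradient selection, which is what its proof uses, then working with $\bar f$ suffices. Otherwise I would note that the approximation error is only measured on $\Omega$, where $f=\bar f$, so the lemma may be applied to $\bar f$ in place of $f$.

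\emph{Conclusion.} Applying Lemma~\ref{lem:convex_max_1} to $f$ (or to $\bar f$) with the given $p$ produces $h\in\mathcal{M}^p_{\Omega,B,C}$ with $\|f-h\|_{L^\infty(\Omega)}\le 72\,C_n\,p^{-2/n}$. Here $C_n=nC\,\mathrm{diam}(\Omega)$. Taking $A_j(\bfx)=\bm{w}_j^\top\bfx+b_j$ gives exactly the claimed bound. The lemma is stated as an infimum, so it only yields $h$ with error at most the bound plus an arbitrary $\eta>0$. To remove $\eta$, I would use compactness. The parameters $(\bm{w}_j,b_j)$ of candidates with bounded error lie in a bounded set: $\|\bm{w}_j\|_\infty\le C$, and $b_j$ is bounded because $h$ takes values in $[-B_n,B]$ and we may discard pieces that never attain the max. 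The error depends continuously on these parameters, so the infimum is attained.
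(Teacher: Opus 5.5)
\textbf{The gap: applying Lemma~\ref{lem:convex_max_1} on $\Omega$ itself.} This step fails, and the extension $\bar f$ does not rescue it. Take any boundary point $\bfx$ of the compact convex set $\Omega$. There is a supporting direction $\bm{\nu}\neq\bm{0}$ with $\bm{\nu}^\top(\bm{z}-\bfx)\le 0$ for all $\bm{z}\in\Omega$. Here $\partial$ is necessarily the subdifferential of a function defined on $\Omega$. So if $\bm{s}\in\partial f(\bfx)$, then $\bm{s}+t\bm{\nu}\in\partial f(\bfx)$ for every $t\ge 0$.

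Hence the subdifferential is empty or unbounded at every boundary point, for every $f$. The cause is the normal cone of $\Omega$, not a slope blowing up, which a $C$-Lipschitz $f$ cannot have. So the hypothesis $f\in\mathcal{C}_{\Omega,B,C}$ is never satisfied for compact $\Omega$.

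Passing to $\bar f$ changes nothing, because $\bar f|_\Omega=f$. Your fallback of applying the lemma ``to $\bar f$'' on $\Omega$ implicitly replaces the lemma's hypothesis with one about the global subdifferential of $\bar f$. Your other fallback relies on the internals of the proof in \cite{balazs2015near}, not on its statement.

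\textbf{The fix: shrink the domain instead of modifying the function.} This is exactly what the paper does: apply Lemma~\ref{lem:convex_max_1} on $\tilde\Omega:=\operatorname{int}\Omega$.
\begin{itemize}
\item The set $\tilde\Omega$ is convex, has the same diameter as $\Omega$, and the lemma does not require closedness.
\item At interior points the subdifferential is nonempty. By your directional argument, every subgradient satisfies $\|\bm{s}\|_1\le C$, so $f|_{\tilde\Omega}\in\mathcal{C}_{\tilde\Omega,B,C}$.
\item The estimate then passes from $\tilde\Omega$ to $\Omega$ by continuity and $\overline{\operatorname{int}\Omega}=\Omega$, a fact you already use.
\end{itemize}
The paper builds the extension $\inf_{\bm{y}\in\Omega}\{f(\bm{y})+C\|\bfx-\bm{y}\|_\infty\}$ only to certify that the subdifferentials are nonempty. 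Your $\bar f$ would do the same job, but it is unnecessary on the interior.

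Your compactness argument, which turns the infimum in Lemma~\ref{lem:convex_max_1} into an attained bound, is correct and works verbatim on $\tilde\Omega$. It fills a step the paper passes over silently.
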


\prf
Let $\tilde{\Omega}:=\mathrm{int}(\Omega)$ denote the interior of $\Omega$.
Since $\Omega$ is convex, so is $\tilde{\Omega}$.
We first show that $f|_{\tilde{\Omega}}\in\mathcal{C}_{\tilde{\Omega},B,C}$ for
some $B>0$.
As $f$ is continuous on the compact set $\Omega$, there exists $B>0$ such that
$\|f\|_{L^\infty(\tilde{\Omega})}\le B$.

To verify the remaining conditions, we extend $f$ to $\mathbb{R}^n$ by defining
\[
\tilde f(\bfx)
:= \inf_{\bfy\in\Omega}\bigl\{f(\bfy)+C\|\bfx-\bfy\|_\infty\bigr\},
\qquad \bfx\in\mathbb{R}^n.
\]
By~\cite[Proposition~3.3.3]{bertsekas2009convex}, $\tilde f$ is convex on
$\mathbb{R}^n$.
Moreover, since $f$ is $C$-Lipschitz on $\Omega$, for any $\bfx\in\Omega$ we have
$f(\bfy)+C\|\bfx-\bfy\|_\infty\ge f(\bfx)$ for all $\bfy\in\Omega$, which implies
$\tilde f|_\Omega=f$.

We next show that $\tilde f$ is $C$-Lipschitz on $\mathbb{R}^n$.
For any $\bfx,\bfz\in\mathbb{R}^n$,
\[
\begin{aligned}
\tilde f(\bfx)
&=\inf_{\bfy\in\Omega}\bigl\{f(\bfy)+C\|\bfx-\bfy\|_\infty\bigr\} \\
&\le \inf_{\bfy\in\Omega}\bigl\{f(\bfy)+C\|\bfz-\bfy\|_\infty\bigr\}
   +C\|\bfx-\bfz\|_\infty \\
&= \tilde f(\bfz)+C\|\bfx-\bfz\|_\infty.
\end{aligned}
\]
Interchanging the roles of $\bfx$ and $\bfz$ yields
$|\tilde f(\bfx)-\tilde f(\bfz)|\le C\|\bfx-\bfz\|_\infty$.

By~\cite[Theorem~23.4]{Rockafellar1997Convex}, $\partial\tilde f(\bfx)\neq\emptyset$
for all $\bfx\in\mathbb{R}^n$, and hence
$\partial(f|_{\tilde{\Omega}})(\bfx)\neq\emptyset$ for all $\bfx\in\tilde{\Omega}$.
Moreover, for any $\bfx\in\tilde{\Omega}$ and any
$\bm{s}\in\partial(f|_{\tilde{\Omega}})(\bfx)$, convexity and Lipschitz continuity
imply
\[
f(\bfx+t\bm{v})-f(\bfx)\ge t\,\bm{s}^\top\bm{v}
\quad\text{and}\quad
f(\bfx+t\bm{v})-f(\bfx)\le tC\|\bm{v}\|_\infty
\]
for all $\bm{v}\in\mathbb{R}^n$ and sufficiently small $t>0$.
Choosing $\bm{v}=\bm{s}/\|\bm{s}\|_2$ yields $\|\bm{s}\|_\infty\le C$.
Therefore, $f|_{\tilde{\Omega}}\in\mathcal{C}_{\tilde{\Omega},B,C}$.

Applying Lemma~\ref{lem:convex_max_1}, there exist affine functions
$A_j:\mathbb{R}^n\to\mathbb{R}$, $j\in[p]$, such that
\[
\Bigl\|f-\max_{1\le j\le p}A_j\Bigr\|_{L^\infty(\tilde{\Omega})}
\le 72\,nC\,\mathrm{diam}(\tilde{\Omega})\,p^{-2/n}.
\]
Finally, since $\Omega$ is compact and convex,
$\mathrm{cl}(\tilde{\Omega})=\Omega$ by~\cite[Proposition~1.3.5]{bertsekas2009convex}.
As both $f$ and $\max_j A_j$ are continuous on $\Omega$, the above estimate extends
to $\Omega$.
\hfill$\square$

{\bf{Proof of Corollary \ref{thm_convex_Lipsc}.}} 
To handle the sequence-to-sequence setting, we adopt the vectorization
convention for both inputs and outputs.
By Lemma~\ref{lem:convex_max}, there exist affine maps
$A_j:\mathbb{R}^{nT}\to\mathbb{R}^{mT}$, $j\in[p]$, such that
\[
\Bigl\|
f-\max_{1\le j\le p}
\mathrm{Vec}^{-1}_{m,T}\circ A_j\circ \mathrm{Vec}
\Bigr\|_{L^\infty(\Omega)}
\le
72\,nT^2C\,\mathrm{diam}(\Omega)\,p^{-2/(nT)},
\]
where the operator $\max$ is taken element-wise. 
The additional factor $T$ (and hence $T^2$ in total) arises from passing
from the vectorized error bound to the matrix-valued norm.

Observe that
\[
\max_{1\le j\le p}
\mathrm{Vec}^{-1}_{m,T}\circ A_j\circ \mathrm{Vec}
\;\in\;
\mathcal{T}_{\mathrm{max}}(n\times T,p,m\times T).
\]
The desired conclusion then follows directly from
Theorem~\ref{thm_maxout_p>T}.
\hfill$\square$

\section{Proofs of Section~\ref{sec:CPWL}}
{\bf{Proof of Theorem \ref{thm:tf_CPWL}.}} 
By Lemma~\ref{lem:cpwl_convex}, there exist convex CPWL functions
$g,h:\mathbb{R}^{n\times T}\to\mathbb{R}^{m\times T}$,
each with at most $N^{2nT+1}$ linear regions, such that
$f = g - h$.

Since $g$ and $h$ are convex CPWL functions, they admit max-affine
representations. In particular,
\[
g,h \in \mathcal{T}_{\max}(n\times T,\, N^{2nT+1},\, m\times T).
\]
Define the stacked function
\[
\tilde f(X)
:=
\begin{pmatrix}
g(X)\\
h(X)
\end{pmatrix},
\]
which satisfies
\[
\tilde f \in
\mathcal{T}_{\max}(n\times T,\, N^{2nT+1},\, 2m\times T).
\]

Applying Theorem~\ref{thm_maxout_p>T} with $s=T$, the function $\tilde f$
can be exactly represented on $\Omega$ by a hardmax-based Transformer
network with architectural parameters
\[
\begin{aligned}
\bigl(
&L = 3\Bigl\lceil \tfrac{N^{2nT+1}-1}{T-1} \Bigr\rceil,\\
&d = n + 2m(T+1)^2+2m+T+1,\\
&k = 2,\qquad
H = 2mT(T+1),\\
&r = 4(\max\{n+2m,2mT\}+1)(T+3)
\bigr).
\end{aligned}
\]

Finally, a linear readout layer computes the difference $g-h$,
thereby recovering $f$ on $\Omega$.
\hfill$\square$

{\bf Proof of Theorem~\ref{thm:num_linear_transformer}.}
Consider a $D'$-layer maxout network with input dimension $nT$, width $mT$
in each hidden layer, and rank $p$.
Let $\mathcal{F}'$ denote the class of functions computed by such networks.
By Lemma~\ref{lem_linear_region_maxout}, for every integer $q$ satisfying
$q \le \min\{nT,\, mT/2\}$ and such that $\tfrac{mT}{q}$ is even, we have
\[
N(\mathcal{F}')
\ge
\Bigl[\tfrac{mT}{q}(p-1)+1\Bigr]^{q(D'-1)}
\sum_{j=0}^{q}\binom{mT}{j}(p-1)^j .
\]

By Theorem~\ref{thm_deep_maxout_p>T}, any such maxout network can be
approximated arbitrarily well on any compact set by a Transformer network of size
\[
\begin{aligned}
\bigl(
&L = 3\Bigl\lceil \tfrac{p-1}{s-1} \Bigr\rceil D',\\
&d = d' + T + 1,\\
&k = 2,\\
&H = (T+1)m \min\{s,p\},\\
&r = 4(r' + 1)(T+3)
\bigr),
\end{aligned}
\]
where
\[
d' =
\begin{cases}
\max\{n,\, m[(T+1)(p+1)+1]\}, & s \ge p,\\[1mm]
\max\{n,m\} + m[(s+1)(T+1)+1], & s < p,
\end{cases}
\qquad
r' =
\begin{cases}
\max\{n,mp\}, & s \ge p,\\[1mm]
\max\{n,m\} + ms, & s < p.
\end{cases}
\]

Choosing $p=s=T$ and setting $D = 3D'$, the stated lower bound
\eqref{eq:num_linear_transformer} follows immediately.
\hfill$\square$

\end{appendices}

\bibliography{myrefs}
\bibliographystyle{plain}

\end{document}